\title{Who's the Expert? On Multi-source Belief Change}
\author{%
Joseph Singleton$^1$\and
Richard Booth$^1$ \\
\affiliations
$^1$Cardiff University\\
\emails
\{singletonj1,boothr2\}@cardiff.ac.uk
}
\theoremstyle{plain}\newtheorem{example}{Example}
\theoremstyle{plain}\newtheorem{theorem}{Theorem}
\theoremstyle{plain}\newtheorem{proposition}{Proposition}
\theoremstyle{plain}\newtheorem{lemma}{Lemma}
\theoremstyle{plain}\newtheorem{definition}{Definition}
\theoremstyle{plain}
\theoremstyle{plain}\newtheorem{claim}{Claim}
\theoremstyle{plain}\newtheorem{corollary}{Corollary}
\newtheoremstyle{postulatestyle}{}{}{\slshape}{}{\bfseries}{}{ }{\thmnote{ #3 }}
\theoremstyle{postulatestyle} \newtheorem*{postulate}{}
\renewcommand{\S}{\mathcal{S}}
\newcommand{\C}{\mathcal{C}}
\newcommand{\W}{\mathcal{W}}
\newcommand{\X}{\mathcal{X}}
\newcommand{\Y}{\mathcal{Y}}
\newcommand{\propvars}{\mathcal{P}}
\newcommand{\lang}{\mathcal{L}}
\newcommand{\lprop}{\lang{}_0}
\newcommand{\lext}{\lang{}}
\newcommand{\vals}{\mathcal{V}}
\newcommand{\falsum}{\bot}
\newcommand{\Ninf}{\mathbb{N} \cup \{\infty\}}
\renewcommand{\phi}{\varphi}
\newcommand{\rs}{\upharpoonright}
\newcommand{\snd}{\mathsf{snd}}
\newcommand{\cn}{\operatorname{Cn}}
\newcommand{\mods}{\operatorname{mod}}
\renewcommand{\mod}{\mods}
\newcommand{\cnprop}{\cn_0}
\newcommand{\propmods}{\mods_0}
\newcommand{\concat}{\cdot}
\newcommand{\argmin}{\operatorname{argmin}}
\newcommand{\proppart}[1]{\left[{#1}\right]}
\newcommand{\postulatename}[1]{\emph{#1}}
\newcommand{\closure}{\postulatename{Closure}}
\newcommand{\containment}{\postulatename{Containment}}
\newcommand{\consistency}{\postulatename{Consistency}}
\newcommand{\soundness}{\postulatename{Soundness}}
\newcommand{\kbound}{\postulatename{K-bound}}
\newcommand{\priorext}{\postulatename{Prior-Extension}}
\newcommand{\kconj}{\postulatename{K-conjunction}}
\newcommand{\rearr}{\postulatename{Rearrangement}}
\newcommand{\equivpost}{\postulatename{Equivalence}}
\newcommand{\duprem}{\postulatename{Duplicate-removal}}
\newcommand{\condcons}{\postulatename{Conditional-consistency}}
\newcommand{\incvac}{\postulatename{Inclusion-vacuity}}
\newcommand{\acyc}{\postulatename{Acyc}}
\newcommand{\agm}{\postulatename{AGM-$\ast$}}
\newcommand{\condsucc}{\postulatename{Cond-success}}
\newcommand{\strongcondsucc}{\postulatename{Strong-cond-success}}
\newcommand{\boundedness}{\postulatename{Boundedness}}
\newcommand{\hboundedness}{\postulatename{H-Boundedness}}
\newcommand{\refinement}{\postulatename{Refinement}}
\newcommand{\concopname}[1]{$\mathsf{#1}$}
\newcommand{\weakop}{\concopname{weak\text{-}mb}}
\newcommand{\varbasedcond}{\concopname{var\text{-}based\text{-}cond}}
\newcommand{\partbasedcond}{\concopname{part\text{-}based\text{-}cond}}
\newcommand{\scorebasedop}{\concopname{excess\text{-}min}}
\newcommand{\tuple}[1]{\langle {#1} \rangle}
\newlist{inlinelist}{enumerate*}{1}
\setlist[inlinelist]{label=(\roman*)}
\newcommand{\citet}[1]{\citeauthor{#1}~\shortcite{#1}}
\def\thisistheprerint{}
\begin{document}

\maketitle

\begin{abstract}
    Consider the following belief change/merging scenario. A group of
information sources gives a sequence of reports about the state of the world at
various instances (e.g. different points in time). The true states at
these instances are unknown to us. The sources have varying levels of
expertise, also unknown to us, and may be knowledgeable on some topics but not
others. This may cause sources to report false statements in areas they lack
expertise. What should we believe on the basis of these reports? We provide a
framework in which to explore this problem, based on an extension of
propositional logic with expertise formulas. This extended language allows us
to express beliefs about the state of the world at each instance, as well as
beliefs about the expertise of each source. We propose several
postulates, provide a couple of families of concrete operators, and analyse
these operators with respect to the postulates.

\end{abstract}

\section{Introduction}

Consider the following belief change scenario in a hospital. We observe the
results of a blood test of patient 1, confirming condition X. Assuming the test
is reliable, the AGM paradigm~\cite{alchourron1985logic} tells us how to revise
our beliefs in light of the new information. Dr.\ A then claims that patient 2
suffers from the same condition, but Dr.\ B disagrees. Given that doctors
specialise in different areas and may make mistakes, who should we trust?
Since the \postulatename{Success} postulate ($\alpha \in K \ast \alpha$)
assumes information is reliable, we are outside the realm of AGM revision, and
must instead apply some form of \emph{non-prioritised}
revision~\cite{hansson1999survey}.

Suppose it now emerges that Dr.\ A had earlier claimed patient 1 did \emph{not}
suffer from condition X, contrary to the test results. We now have reason to
suspect Dr.\ A may \emph{lack expertise} on diagnosing X, and may subsequently
revise beliefs about Dr.\ A's domain of expertise and the status of patient
2 (e.g. by opting to trust Dr.\ B instead).

While simple, this example illustrates the key features of the belief change
problem we study: we consider multiple sources, whose expertise is \emph{a
priori} unknown, providing reports on various instances of a problem domain. On
the basis of these reports we form beliefs both about the expertise of the
sources and the state of the world in each instance.

By including a distinguished \emph{completely reliable} source (the test
results in the example) we extend AGM revision. In some respects we also extend
approaches to non-prioritised revision (e.g. selective
revision~\cite{ferme1999selective}, credibility-limited
revision~\cite{hansson_2001}, and trust-based
revision~\cite{booth_trust_2018}), which assume information about the
reliability of sources is known up front. The problem is also related to
\emph{belief merging}~\cite{konieczny2002merging} which deals
with combining belief bases from multiple sources; a detailed comparison will
be given in \cref{sec:relatedwork}.

Our work is also connected to trust and belief revision, if one interprets
trust as \emph{belief in expertise}. As \citet{yasser_21} note in recent work,
trust and belief are inexorably linked: we should accept reports from sources
we believe are trustworthy, and we should trust sources whose reports turn out
to be reliable. Trust and belief should also be revised in tandem, so that we
may increase or decrease trust in a source as more reports are received,
and revoke or reinstate previous reports from a source as its perceived
trustworthiness changes.\footnotemark{}

\footnotetext{
    This mutual dependence between trust and belief is also the core
    idea in \emph{truth discovery}~\cite{li_survey_2016}.
}

To unify the trust and belief aspects, we enrich a propositional language with
\emph{expertise statements} $E_i(\phi)$, read as ``source $i$ has expertise on
$\phi$". The output of our belief change problem is then a collection of belief
and knowledge sets in the extended language, describing what we \emph{know} and
\emph{believe} about the expertise of the sources and the state of the world in
each instance. For example, we should \emph{know} reports from the reliable
source are true, whereas reports from ordinary sources may only be believed.

Following recent work on logical approaches to
expertise~\cite{singleton2021logic,booth_trust_2018}, we formally model
expertise using a partition of propositional valuations for each source.
Equivalently, each source has an \emph{indistinguishibility equivalence
relation} over valuations. A source is an expert on a proposition $\phi$
exactly when they can distinguish every $\phi$ valuation from every
$\neg\phi$ valuation.\footnotemark{} As in \citet{singleton2021logic}, we also
use \emph{soundness formulas} $S_i(\phi)$, which intuitively say that $\phi$ is
true \emph{up the expertise of} $i$. For example, if $i$ has expertise on $p$
but not $q$, then the conjunction $p \land q$ is sound for $i$ whenever $p$
holds, since we can effectively ignore $q$. Formally,
$\phi$ is sound for $i$ if the ``actual" state of the world is
indistinguishable from a $\phi$ valuation. Note that expertise does not depend
on the ``actual" state, whereas soundness does. This provides a crucial link
between expertise and truthfulness of information.

We then make the assumption that \emph{sources only report sound propositions}.
That is, reports are only false due to sources overstepping the bounds of their
expertise. In particular, we assume sources are honest in their reports,
and that experts are always right.

\footnotetext{
    The relationship between this notion of expertise and \emph{S5 epistemic
    logic} is explored in a modal logic setting in \citet{singleton2021logic},
    and we revisit this connection in \cref{sec:relatedwork}.
}

Note that in our introductory example, the fact that we had a report from Dr.\ A
on patient 1 (together with reliable information on patient 1) was essential
for determining the expertise of Dr.\ A, and subsequently the status of patient
2. While the patients are independent, reports on one can cause beliefs about
the other to change, as we update our beliefs about the expertise of the
sources.

In general we consider an arbitrary number of \emph{cases}, which are
seen as labels for instances of the domain. For example, a crowdsourcing worker
may label multiple images, or a weather forecaster may give predictions for
different locations. Each report in the input to the
problem then refers to a specific case. Via these cases and the presence of the
completely reliable source, we are able to model scenarios where some ``ground
truth" is available, listing how often sources have been correct/incorrect on a
proposition (e.g. the \emph{report histories} of
\citet{hunter_building_21}). We can also generalise this scenario, e.g. by
having only partial information about ``previous" cases.

Throughout the paper we make the assumption that \emph{expertise is fixed
across cases}: the expertise of a source does not depend on the particular
instance of the domain we look at. For instance, the expertise of Dr.\ A is the
same for patient 1 as for patient 2. This is a simplifying assumption, and may
rule out certain interpretations of the cases (e.g. if cases represent
different points in time, it would be natural to let expertise evolve over
time).

\paragraph{Contribution.} Our contributions are threefold. First, we develop a
logical framework for reasoning about the expertise of multiple sources and the
state of the world in multiple cases. Second, we formulate a belief change
problem within this framework, which allows us to explore how trust and belief
should interact and evolve as reports are received from the various sources.
Finally, we put forward several postulates and two concrete classes of
operators -- with a representation result for one class -- and analyse these
operators with respect to the postulates.

\paragraph{Paper outline.} In \cref{sec:framework} we develop the formal
framework. \Cref{sec:the_problem} introduces the problem and lists some core
postulates. We give two constructions and specific example operators in
\cref{sec:constructions}. \Cref{sec:one_step_postulates} introduces some
further postulates concerning belief change on the basis of one new report. An
analogue of selective revision~\cite{ferme1999selective} is presented
\cref{sec:selective_change}. \Cref{sec:relatedwork} discusses related work, and
we conclude in \cref{sec:conclusion}.
\ifdefined\thisistheprerint
    Note that some proofs are omitted in the body of the paper and can be found
    in the appendix.
\else
    Proofs are given in the appendix of the full version of the
    paper~\cite{singleton_booth_22_preprint}.
\fi

\section{The Framework}
\label{sec:framework}

Let $\S$ be a finite set of information sources. For convenience, we assume
there is a \emph{completely reliable} source in $\S$, which we denote by
$\ast$.  For example, we can treat our first-hand observations as if they are
reported by $\ast$. Other sources besides $\ast$ will be termed \emph{ordinary
sources}.  Let $\C$ be a finite set of \emph{cases}, which we interpret as
labels for different instances of the problem domain.

\paragraph{Syntax.}

There are two levels to our formal language. To describe properties of the
world in each case $c \in \C$, we assume a fixed finite set $\propvars$ of propositional
variables, and let $\lprop$ denote the set of propositional formulas generated
from $\propvars$ using the usual propositional connectives. We use lower case
Greek letters ($\phi$, $\psi$ etc) for formulas in $\lprop$. The
classical logical consequence operator will be denoted by $\cnprop$, and
$\equiv$ denotes equivalence of propositional formulas.

The extended \emph{language of expertise} $\lext$ additionally describes the
expertise of the sources, and is defined by the following grammar:
\[
    \Phi ::= \phi \mid
             \Phi \land \Phi \mid
             \neg \Phi \mid
             E_i(\phi) \mid
             S_i(\phi)
             \\
\]
where $i\in \S$ and $\phi \in \lprop$. We introduce Boolean connectives
$\lor$, $\rightarrow$, $\leftrightarrow$ and $\falsum$ as
abbreviations. We use upper case Greek letters ($\Phi$, $\Psi$ etc)
for formulas in $\lext$.
For $\Gamma \subseteq \lext$, we write $\proppart{\Gamma} = \Gamma \cap
\lprop$ for the propositional formulas in $\Gamma$.

The intuitive reading of $E_i(\phi)$ is {\em source $i$ has expertise on
$\phi$}, i.e., $i$ is able to correctly identify the truth value of $\phi$ in
any possible state. The intuitive reading of $S_i(\phi)$ is that $\phi$
\emph{sound} for $i$ to report: that $\phi$ is true up to the
expertise of $i$. That is, the parts of $\phi$ on which $i$ has expertise are
true. Note that both operators are restricted to propositional
formulas, so we will not consider iterated formulas such as $E_i(S_j(\phi))$.

\paragraph{Semantics.}

Let $\vals$ denote the set of propositional valuations over $\propvars$. For
each $\phi \in \lprop$, the set of valuations making $\phi$ true is denoted by
$\propmods(\phi)$. A \emph{world} $W = \tuple{\{v_c\}_{c \in \C}, \{\Pi_i\}_{i
\in \S}}$ is a possible complete specification of the environment we find
ourselves in:
\begin{itemize}
    \item $v_c \in \vals$ is the ``true" valuation at case $c \in \C$;
    \item $\Pi_i$ is a partition of $\vals$ for each $i \in \S$, representing
          the ``true" expertise of source $i$; and
    \item $\Pi_\ast$ is the unit partition $\{\{v\} \mid v \in \vals\}$.
\end{itemize}
Let $\W$ denote the set of all worlds.  Note that the partition corresponding
to the distinguished source $\ast$ is fixed in all worlds as the finest
possible partition, reflecting the fact that $\ast$ is completely reliable.

For any partition $\Pi$ and valuation $v$, write $\Pi[v]$ for the unique cell
in $\Pi$ containing $v$. For a set of valuations $U$, write $\Pi[U] =
\bigcup_{v \in U}{\Pi[v]}$. For brevity, we write $\Pi[\phi]$ for
$\Pi[\propmods(\phi)]$. Then $\Pi[\phi]$ is the set of valuations
indistinguishable from a $\phi$ valuation.

For our belief change problem we will be interested in maintaining a
collection of several belief sets, describing beliefs about each case $c \in
\C$. Towards determining when a world $W$ models such a collection, we define
semantics for $\lext$ formulas with respect to a world and a case:
\[
    \begin{array}{lll}
        &W, c \models \phi &\iff v_c \in \propmods(\phi) \\
        &W, c \models E_i(\phi) &\iff \Pi_i[\phi] = \propmods(\phi) \\
        &W, c \models S_i(\phi) &\iff v_c \in \Pi_i[\phi]
    \end{array}
\]
where $i \in \S$, $\phi \in \lprop$, and the clauses for conjunction and
negation are the expected ones. Since $\propmods(\phi) \subseteq \Pi_i[\phi]$
always holds, we have that $E_i(\phi)$ holds iff there is no
$\neg\phi$ valuation which is indistinguishable from a $\phi$ valuation (c.f.
\citet{booth_trust_2018}). Note
that since each source $i$ has only a single partition $\Pi_i$ used to
interpret the expertise formulas, the truth value of $E_i(\phi)$ does not
depend on the case $c$. On the other hand, $S_i(\phi)$ holds in case $c$ iff
the $c$-valuation of $W$ is indistinguishable from some model of $\phi$. That
is, it is consistent with $i$'s expertise that $\phi$ is true.


Also note that if $\phi$ is a propositional tautology, $E_i(\phi)$ holds for
every source $i$. Thus, all sources are experts on \emph{something}, even if
just the tautologies.

\begin{example}
    \label{ex:hospital_world}
    Let us extend the hospital example from the introduction. Let $\S = \{\ast,
    a, b\}$ denote the reliable source, Dr.\ A and Dr.\ B, and let $\C = \{c_1,
    c_2\}$ denote patients 1 and 2. Consider propositional variables
    $\propvars = \{x, y\}$, standing
    for condition X and Y respectively. Suppose that Dr.\ A has expertise on
    diagnosing condition Y only, whereas Dr.\ B only has expertise on X. For the
    sake of the example, suppose that patient 1 suffers from both conditions,
    and patient 2 suffers only from condition Y. This situation is modelled by
    the following world $W = \tuple{\{v_c\}_{c \in \{c_1, c_2\}}, \{\Pi_i\}_{i
    \in \{\ast, a, b\}}}$:
    \[
        \begin{array}{cc}
            v_{c_1} = xy;
            &
            v_{c_2} = \bar{x}y;
            \\
            \Pi_a = xy, \bar{x}y \mid x\bar{y}, \bar{x}\bar{y};
            &
            \Pi_b = xy, x\bar{y} \mid \bar{x}y, \bar{x}\bar{y}.
        \end{array}
    \]
    We have $W, c \models E_a(y)
    \land E_b(x)$ for each $c \in \{c_1, c_2\}$. Also note that $W, c_1 \models
    x$ (patient 1 suffers from X), $W, c_1 \models S_a(\neg x)$ (it is sound
    for Dr.\ A to report otherwise; this holds since $\Pi_a[\neg x] = \{xy,
    \bar{x}y\} \cup \{x\bar{y}, \bar{x}\bar{y}\} \ni xy = v_{c_1}$), but $W,
    c_1 \models \neg S_b(\neg x)$ (the same formula is \emph{not} sound for
    Dr.\
    B; we have $\Pi_b[\neg x] = \{\bar{x}y, \bar{x}\bar{y}\} = \propmods(\neg
    x) \not\ni xy = v_{c_1}$).
\end{example}

Say $\Phi$ is \emph{valid} if $W, c \models \Phi$ for all $W \in \W$ and $c \in \C$.
For future reference we collect a list of validities.

\begin{proposition}
\label{prop:validities}
For any $i \in \S$, $c \in \C$ and $\phi, \psi \in \lprop$, the following
formulas are valid
\begin{enumerate}
    \item \label{item:replacement_equivalents_e_s}
          $S_i(\phi) \leftrightarrow S_i(\psi)$ and $E_i(\phi) \leftrightarrow
          E_i(\psi)$, whenever $\phi \equiv \psi$
    \item \label{item:e_symmetric}
          $E_i(\phi) \leftrightarrow E_i(\neg \phi)$ and $E_i(\phi) \land
          E_i(\psi) \rightarrow E_i(\phi \land \psi)$
    \item \label{item:exp_on_all_variables} $E_i(p_1) \land \cdots \land
          E_i(p_k) \rightarrow E_i(\phi)$, where $p_1, \ldots, p_k$ are the
          propositional variables appearing in $\phi$
    \item \label{item:e_and_s_implies_phi}
          $E_i(\phi) \land S_i(\phi) \rightarrow \phi$, and
          $S_i(\phi) \land \neg \phi \rightarrow \neg E_i(\phi)$
    \item \label{item:sound_neg_pair}
          $S_i(\phi) \land S_i(\neg \phi) \rightarrow \neg E_i(\phi)$
    \item \label{item:star_exp}
          $S_\ast(\phi) \leftrightarrow \phi$ and $E_\ast(\phi)$
\end{enumerate}
\end{proposition}

(\labelcref{item:replacement_equivalents_e_s}) states
syntax-irrelevance properties.
(\labelcref{item:e_symmetric}) says that expertise is symmetric with respect to
negation, and closed under conjunctions. Intuitively, symmetry means that $i$
is an expert on $\phi$ if they know \emph{whether or not} $\phi$ holds.
(\labelcref{item:exp_on_all_variables}) says that expertise on each
propositional variable in $\phi$ is sufficient for expertise on
$\phi$ itself.
(\labelcref{item:e_and_s_implies_phi}) says that, in the presence of expertise,
soundness of $\phi$ is sufficient for $\phi$ to in fact be true.
(\labelcref{item:sound_neg_pair})
says that if both $\phi$ and
$\neg\phi$ are true up to the expertise of $i$, then $i$ cannot have expertise
on $\phi$.
Finally, (\labelcref{item:star_exp}) says that the reliable source $\ast$ has expertise
on \emph{all} formulas, and thus $\phi$ is sound for $\ast$ iff it is true.


\paragraph{Case-indexed Collections.}

In the remainder of the paper we will be interested in forming beliefs about
each case $c \in \C$. To do so we use collections of belief
sets $G = \{\Gamma_c\}_{c \in \C}$, with $\Gamma_c \subseteq \lext$, indexed by
cases.
Say a world $W$ is a \emph{model} of $G$ iff
\[
    W, c \models \Phi \text{ for all } c \in \C \text{ and } \Phi \in \Gamma_c,
\]
i.e. iff $W$ satisfies all formulas in $G$ in the relevant case. Let
$\mods(G)$ denote the models of $G$, and say that $G$ is
\emph{consistent} if $\mods(G) \ne \emptyset$. For $c \in \C$, define the
\emph{$c$-consequences}
\[
    \cn_c(G) = \{\Phi \in \lext \mid \forall W \in \mods(G), W, c \models \Phi\}.
\]
We write $\cn(G)$
for the collection $\{\cn_c(G)\}_{c \in \C}$.

\begin{example}
    Suppose $\C = \{c_1, c_2, c_3\}$, and define $G$ by $\Gamma_{c_1} = \{S_i(p
    \land q)\}$, $\Gamma_{c_2} = \{E_i(p)\}$ and $\Gamma_{c_3} = \{E_i(q)\}$.
    Then, since expertise holds independently of case, any model $W$ of $G$ has
    $W, c_1 \models E_i(p) \land E_i(q)$. By \cref{prop:validities} part
    (\labelcref{item:exp_on_all_variables}), $W, c_1 \models E_i(p \land q)$.
    Since $W$ satisfies $\Gamma_{c_1}$ in case $c_1$, \cref{prop:validities}
    part (\labelcref{item:e_and_s_implies_phi}) gives $W, c_1 \models p \land
    q$. Since $W$ was an arbitrary model of $G$, we have $p \land q \in
    \cn_{c_1}(G)$, i.e. $p \land q$ is a $c_1$-consequence of $G$.
    This illustrates how information about distinct cases can be brought
    together to have consequences for other cases.

\end{example}

For two collections $G = \{\Gamma_c\}_{c \in \C}$, $D = \{\Delta_c\}_{c \in
\C}$, write $G \sqsubseteq D$ iff $\Gamma_c \subseteq \Delta_c$ for all $c$,
and let $G \sqcup D$ denote the collection $\{\Gamma_c \cup \Delta_c\}_{c \in
\C}$. With this notation, the case-indexed consequence operator satisfies
analogues of the Tarskian consequence properties.\footnotemark{}
\footnotetext{
    That is, \begin{inlinelist}
        \item $G \sqsubseteq \cn(G)$,
        \item $G \sqsubseteq D$ implies $\cn(G) \sqsubseteq \cn(D)$, and
        \item $\cn(\cn(G)) = \cn(G)$.
    \end{inlinelist}
}

Say a collection $G$ is \emph{closed} if $\cn(G) = G$. Closed collections
provide an idealised representation of beliefs, which will become useful later
on. For instance, when $G$ is closed we have
$
    E_i(\phi) \in \Gamma_c \text{ iff } E_i(\phi) \in \Gamma_d
$
for all $c, d \in \C$; i.e. expertise statements are either present for all
cases or for none. We also have
$
    \cnprop\proppart{\Gamma_c} = \proppart{\Gamma_c},
$
i.e. the propositional parts of $G$ are (classically) closed.

In propositional logic, $\propmods$ is a 1-to-1 correspondence between closed
sets of formulas and sets of valuations. This is not so in our setting, since
some subsets of $\W$ do not arise as the models of any collection. Instead, we
have a 1-to-1 correspondence into a restricted collection of sets of worlds.
Borrowing the terminology of
\citet{delgrande2018general}, say a set of worlds $S \subseteq \W$ is
\emph{elementary} if ${S = \mod(G)}$ for some collection $G =
\{\Gamma_c\}_{c \in \C}$.\footnotemark{}

\footnotetext{
    Non-elementary sets can also exist for weaker logics
    (such as Horn logic~\cite{delgrande2018general}) which lack the syntactic
    expressivity to identify all sets of models. In our framework, $\C$-indexed
    collections are not expressive enough to specify \emph{combinations of
    valuations}, since each $\Gamma_c$ only says something about the valuation
    for $c$.
}

Elementariness is characterised by a certain closure condition. Say that two
worlds $W, W'$ are \emph{partition-equivalent} if $\Pi^W_i = \Pi^{W'}_i$ for
all sources $i$, and say $W$ is a \emph{valuation combination} from a set $S
\subseteq \W$ if for all cases $c$ there is $W_c \in S$ such that $v^W_c =
v^{W_c}_c$.  Then a set is elementary iff it is closed under valuation
combinations of partition-equivalent worlds.

\begin{proposition}
\label{prop:elementary_characterisation}
    $S \subseteq \W$ is elementary if and only if the following condition
    holds: for all $W \in \W$ and $W_1, W_2 \in S$, if $W$ is
    partition-equivalent to both $W_1, W_2$ and $W$ is a valuation combination
    from $\{W_1, W_2\}$, then $W \in S$.
\end{proposition}

\section{The Problem}
\label{sec:the_problem}

With the framework set out, we can formally define the problem. We
seek an operator with the following behaviour:

\begin{itemize}

\item {\bf Input:} A sequence of reports $\sigma$, where each report is a
      triple $\tuple{i, c, \phi} \in \S \times \C \times \lprop$ and $\phi
      \not\equiv \falsum$. Such a report represents that {\em source $i$
      reports $\phi$ to hold in case $c$}. Note that we only allow sources to
      make \emph{propositional} reports.

\item {\bf Output:} A pair $\tuple{B^\sigma, K^\sigma}$, where $B^\sigma =
      \{B^\sigma_c\}_{c \in \C}$ is a collection of \emph{belief sets}
      $B^\sigma_c \subseteq \lext$ and $K^\sigma = \{K^\sigma_c\}_{c \in \C}$
      is a collection of \emph{knowledge sets} $K^\sigma_c \subseteq \lext$.

\end{itemize}

\subsection{Basic Postulates}
\label{sec:basic_postulates}

We immediately narrow the scope of operators under consideration by introducing
some basic postulates which are expected to hold. In what follows, say a
sequence $\sigma$ is \emph{$\ast$-consistent} if for each $c \in \C$ the set
$\{\phi \mid \tuple{\ast, c, \phi} \in \sigma\} \subseteq \lprop$ is
classically consistent. Write $G^\sigma_\snd$ for the collection with
$(G^\sigma_\snd)_c = \{S_i(\phi) \mid \tuple{i, c, \phi} \in \sigma\}$, i.e.
the collection of soundness statements corresponding to the reports in
$\sigma$.

\begin{postulate}[\closure{}]
    $B^\sigma = \cn(B^\sigma)$ and  $K^\sigma = \cn(K^\sigma)$
\end{postulate}
\begin{postulate}[\containment{}]
    $K^\sigma \sqsubseteq B^\sigma$
\end{postulate}
\begin{postulate}[\consistency{}]
    If $\sigma$ is $\ast$-consistent, $B^\sigma$ and $K^\sigma$ are
    consistent
\end{postulate}
\begin{postulate}[\soundness{}]
    If $\tuple{i, c, \phi} \in \sigma$, then $S_i(\phi) \in K^\sigma_c$
\end{postulate}
\begin{postulate}[\kbound{}]
    $K^\sigma \sqsubseteq \cn(G^\sigma_\snd \sqcup K^\emptyset)$
\end{postulate}
\begin{postulate}[\priorext{}]
    $K^\emptyset \sqsubseteq K^\sigma$
\end{postulate}
\begin{postulate}[\rearr{}]
    If $\sigma$ is a permutation of $\rho$, then $B^\sigma = B^\rho$ and
    $K^\sigma = K^\rho$
\end{postulate}
\begin{postulate}[\equivpost{}]
    If $\phi \equiv \psi$ then $B^{\sigma \concat \tuple{i, c, \phi}} = B^{\sigma
    \concat \tuple{i, c, \psi}}$ and $K^{\sigma \concat \tuple{i, c, \phi}} =
    K^{\sigma \concat \tuple{i, c, \psi}}$
\end{postulate}

\closure{} says that the belief and knowledge collections are closed under
logical consequence. In light of earlier remarks,
this implies that the propositional belief sets $\proppart{B^\sigma_c}$ are
closed under (propositional) consequence, and that $E_i(\phi) \in B^\sigma_c$
iff $E_i(\phi) \in B^\sigma_d$.
\containment{} says that everything which is known is also believed.
\consistency{} ensures the output is always consistent, provided
we are not in the degenerate case where $\ast$ gives inconsistent reports.
\soundness{} says we \emph{know} that all reports are sound in their respective
cases. This formalises our assumption that sources are \emph{honest}, i.e. that
false reports only arise due to lack of expertise. By \cref{prop:validities}
part (\labelcref{item:e_and_s_implies_phi}) it also implies \emph{experts are
always right}: if a source has expertise on their report then it must be true.
While \soundness{} places a lower bound on knowledge, \kbound{} places an upper
bound: knowledge cannot go beyond the soundness statements corresponding to the
reports in $\sigma$ together with the prior knowledge $K^\emptyset$. That is,
from the point view of knowledge, a new report of $\tuple{i, c, \phi}$ only
allows us to learn $S_i(\phi)$ in case $c$ (and to combine this with other
reports and prior knowledge). Note that the analogous property for belief is
\emph{not} desirable: we want to be more liberal when it comes to beliefs, and
allow for \emph{defeasible inferences} going beyond the mere fact that reports
are sound.
\priorext{} says that knowledge after a sequence $\sigma$ extends the prior
knowledge on the empty sequence $\emptyset$.
\rearr{} says that the order in which reports are received is irrelevant. This
can be justified on the basis that we are reasoning about \emph{static worlds}
for each case $c$, so that there is no reason
to see more ``recent'' reports as any more or less important or truthful than
earlier ones.\footnote{This argument is from \cite{delgrande2006iterated}.}
Consequently, we can essentially view the input as a \emph{multi-set} of belief
sets -- one for each source -- bringing us close to the setting of belief
merging. This postulate also appears as the commutativity postulate
\textbf{(Com)} in the work of \citet{schwind2020}.
Finally, \equivpost{} says that the syntactic form of reports is irrelevant.

Taking all the basic postulates together, the knowledge component $K^\sigma$ is
fully determined once $K^\emptyset$ is chosen.

\begin{proposition}
    \label{prop:prior_knowledge}
    Suppose an operator satisfies the basic postulates. Then
    \begin{enumerate}
        \item $K^\sigma = \cn(G^\sigma_\snd \sqcup K^\emptyset)$
        \item $K^\emptyset = \cn(\emptyset)$ iff $K^\sigma =
              \cn(G^\sigma_\snd)$ for all $\sigma$.
    \end{enumerate}
\end{proposition}

The choice of $K^\emptyset$ depends on the scenario one wishes to model.
While $\cn(\emptyset)$ is a sensible choice if the sequence $\sigma$ is all we
have to go on, we allow $K^\emptyset \ne \cn(\emptyset)$ in case \emph{prior
knowledge} is available (for example, the expertise of particular sources may
be known ahead of time).

Another important property of knowledge, which follows from the basic
postulates, says that \emph{knowledge is monotonic}: knowledge after receiving
$\sigma$ and $\rho$ together is just the case-wise union of $K^\sigma$ and
$K^\rho$.

\begin{postulate}[\kconj{}]
    $K^{\sigma \concat \rho} = \cn(K^\sigma \sqcup K^\rho)$
\end{postulate}

\kconj{} reflects the idea that one should be cautious when it comes to
knowledge: a formula should only be accepted as known if it won't be given up
in light of new information.

\begin{proposition}
    \label{prop:kconj}
    Any operator satisfying the basic postulates satisfies \kconj{}.
\end{proposition}

The postulates also imply some useful properties linking \emph{trust} (seen as
belief in expertise) and \emph{belief/knowledge}.

\begin{proposition}
    \label{prop:basic_postulates_consequences}
    Suppose an operator satisfies the basic postulates. Then
    \begin{enumerate}
        \item \label{item:knowledge_trust_link} If $\phi \in K^\sigma_c$ and
              $\neg\psi \in \cnprop(\phi)$ then $\neg E_i(\psi) \in K^{\sigma
              \concat \tuple{i, c, \psi}}_c$.
        \item \label{item:trust_belief_link} If $\tuple{i, c, \phi} \in
              \sigma$ and $E_i(\phi) \in B^\sigma_c$ then $\phi \in
              B^\sigma_c$.
    \end{enumerate}
\end{proposition}

(\labelcref{item:knowledge_trust_link}) expresses how knowledge can negatively
affect trust: we should distrust sources who make reports we know to be false.
(\labelcref{item:trust_belief_link}) expresses how trust affects belief: we
should believe reports from trusted sources.
It can also be seen as a form of \emph{success} for ordinary sources, and
implies AGM success when $i = \ast$ (by \cref{prop:validities} part
(\labelcref{item:star_exp}) and \closure{}). We illustrate the basic postulates
by formalising the introductory hospital example.

\begin{example}
    \label{ex:hospital_ex_formalised}
    Set $\S, \C$ and $\propvars$ as in \cref{ex:hospital_world}, and consider
    the sequence
    \[
        \sigma
        = (
            \tuple{\ast, c_1, x},
            \tuple{a, c_2, x},
            \tuple{b, c_2, \neg x},
            \tuple{a, c_1, \neg x}
        ).
    \]
    What do we know on the basis of this sequence, assuming the basic
    postulates? First note that by \soundness{}, \cref{prop:validities}
    part (\labelcref{item:star_exp}) and \closure{}, the report from $\ast$ gives $x
    \in K^\sigma_{c_1}$, i.e. reliable reports are known. \soundness{} also
    gives $S_a(x) \land S_b(\neg x) \in K^\sigma_{c_2}$. Combined with
    \cref{prop:validities} parts (\labelcref{item:e_symmetric}),
    (\labelcref{item:e_and_s_implies_phi}) and \closure{}, this yields
    $\neg(E_a(x) \land E_b(x)) \in K^\sigma_c$ for all $c$, formalising the
    intuitive idea that Drs.\ A and B cannot both be experts on X, since they
    give conflicting reports.
    Considering the final report from $a$, we get $x \land S_a(\neg
    x) \in K^\sigma_{c_1}$, and thus $\neg E_a(x) \in K^\sigma_c$ by
    \closure{}. So in fact Dr.\ A is known to be a non-expert on X.

    What about beliefs? The basic postulates do not require beliefs to go
    beyond knowledge, so we cannot say much in general. An ``optimistic"
    operator, however, may opt to believe that sources are experts unless we
    know otherwise, and thus maximise the information that can be (defeasibly)
    inferred from the sequence (in the next section we will introduce
    concrete operators obeying this principle). In this case we may believe
    that at least one source has expertise on $x$ (i.e. $E_a(x) \lor E_b(x) \in
    B^\sigma_c$).  Combined with $\neg E_a(x) \in K^\sigma_c$, \closure{} and
    \containment{}, we get $E_b(x) \in B^\sigma_{c_2}$. Symmetry of expertise
    together with \cref{prop:basic_postulates_consequences} part
    (\labelcref{item:trust_belief_link}) then gives $\neg x \in
    B^\sigma_{c_2}$, i.e. we trust Dr.\ B in the example and believe patient 2
    does not suffer from condition X.

\end{example}

\subsection{Model-based Operators}

While an operator is a purely syntactic object, it will be convenient to
specify $K^\sigma$ and $B^\sigma$ in semantic terms by selecting a set of
\emph{possible} and \emph{most plausible} worlds for each sequence $\sigma$.
We call such operators \emph{model-based}.

\begin{definition}
\label{def:model_based}
An operator is \emph{model-based} if for every $\sigma$ there
are sets $\X_\sigma, \Y_\sigma \subseteq \W$ such that
\begin{inlinelist}
    \item $\X_\sigma \supseteq \Y_\sigma$;
    \item $\Phi \in K^\sigma_c$ iff $W, c \models \Phi$ for all $W \in
          \X_\sigma$; and
    \item $\Phi \in B^\sigma_c$ iff $W, c \models \Phi$ for all $W \in
          \Y_\sigma$.
\end{inlinelist}
\end{definition}

In other words, $K^\sigma_c$ (resp., $B^\sigma_c$) contains the formulas which
hold at case $c$ in \emph{all worlds} in $\X_\sigma$ (resp., $\Y_\sigma)$. It
follows from the relevant definitions that $\X_\sigma \subseteq
\mods(K^\sigma)$, and equality holds if and only if $\X_\sigma$ is elementary
(similarly for $\Y^\sigma$ and $B^\sigma$).
Model-based operators are characterised by our first two basic postulates.

\begin{theorem}
\label{thm:model_based_characterisation}
An operator satisfies \closure{} and \containment{} if and only if it
is model-based.
\end{theorem}

Since we take \closure{} and \containment{} to be fundamental properties, all
operators we consider from now on will be model-based.
We introduce our first concrete operator.

\begin{definition}
    \label{def:weakop}
    Define the model-based operator \weakop{} by
    \[
        \X_\sigma = \Y_\sigma = \{
            W \mid W, c \models S_i(\phi) \text{ for all } \tuple{i, c, \phi}
            \in \sigma
        \}.
    \]
\end{definition}

That is, the possible worlds $\X_\sigma$ are
exactly those satisfying the soundness constraint for each report in
$\sigma$, i.e. false reports are only due to lack of expertise of the
corresponding source.
Syntactically, $K^\sigma = B^\sigma = \cn(G^\sigma_\snd)$.

Clearly \weakop{} satisfies \soundness{}, and one
can show that it satisfies all of the basic postulates of
\cref{sec:basic_postulates}.\footnotemark In fact, it is the \emph{weakest}
operator satisfying \closure{}, \containment{} and \soundness{}, in that
for any other operator $\sigma \mapsto \tuple{\hat{B}^\sigma,
\hat{K}^\sigma}$ with these properties we have $B^\sigma \sqsubseteq
\hat{B}^\sigma$ and $K^\sigma \sqsubseteq \hat{K}^\sigma$ for any $\sigma$.
\footnotetext{
    For \consistency{}, note that for any $\ast$-consistent sequence
    $\sigma$ one can form a world $W$ such that $v_c$ is a model of all
    reports from $\ast$ at case $c$, and $\Pi_i = \{\vals\}$ for all $i \ne
    \ast$. This satisfies all the soundness constraints, so $W \in
    \X_\sigma = \Y_\sigma$.
}

\begin{example}
\label{ex:model_based}
    Consider \weakop{} applied to the sequence
    $
        \sigma
        = (\tuple{\ast, c, p}, \tuple{i, c, \neg p \land q})
    $.
    By \soundness{}, \closure{} and the validities from \cref{prop:validities},
    we have $p \in K^\sigma_c$ and $\neg E_i(p) \in K^\sigma_c$. In fact, by
    \closure{}, we
    have $\neg E_i(p) \in K^\sigma_d$ for all cases $d$.
    However, we cannot say much about $q$: neither $q$, $\neg q$, $E_i(q)$ nor
    $\neg E_i(q)$ are in $B^\sigma_c = K^\sigma_c$.

\end{example}

\section{Constructions}
\label{sec:constructions}

For model-based operators in \Cref{def:model_based}, the sets $\X_\sigma$ and
$\Y_\sigma$ -- which determine knowledge and belief -- can
depend on $\sigma$ in a completely arbitrary manner. This lack of structure
leads to very wide class of operators, and one cannot say much about them in
general beyond the satisfaction of \closure{} and \containment{}. In this
section we specialise model-based operators by providing two constructions.

\subsection{Conditioning Operators}
\label{sec:conditioning_operators}

Intuitively, $\Y_\sigma$ is supposed to represent the \emph{most
plausible} worlds among the possible worlds in $\X_\sigma$. This suggests the
presence of a \emph{plausibility ordering} on $\X_\sigma$, which is used to
select $\Y_\sigma$.
For our first construction we take this approach: we condition a fixed
plausibility total preorder\footnotemark{} on the knowledge $\X_\sigma$, and
obtain $\Y_\sigma$ by selecting the minimal (i.e. most plausible) worlds.
\footnotetext{
    A total preorder is a reflexive, transitive and total relation.
}

\begin{definition}
\label{def:conditioning_operator}
An operator is a \emph{conditioning operator} if there is a total
preorder $\le$ on $\W$ and a mapping $\sigma \mapsto \tuple{\X_\sigma,
\Y_\sigma}$ as in \cref{def:model_based} such that
$
    \Y_\sigma = \min_{\le}{\X_\sigma}
$
for all $\sigma$.
\end{definition}

Note that $\le$ is independent of $\sigma$: it is fixed before receiving any
reports. All conditioning operators are model-based by definition. Clearly
$\Y_\sigma$ is determined by $\X_\sigma$ and the plausibility order, so that to
define a conditioning operator it is enough to specify $\le$ and the mapping
$\sigma \mapsto \X_\sigma$.
Write $W \simeq W'$ iff both $W \le W'$ and $W' \le W$.
We now present examples of how such an ordering can be defined.

\begin{definition}
    \label{def:varbasedcond}
    Define the conditioning operator \varbasedcond{} by setting
    $\X_\sigma$ in the same way as \weakop{} in \cref{def:weakop}, and $W \le
    W'$ iff $r(W) \le r(W')$, where
    \[
        r(W) = - \sum_{i \in \S}\left|\left\{
            p \in \propvars
            \mid
            \Pi^W_i[p] = \propmods(p)
        \right\}\right|.
    \]
\end{definition}

\varbasedcond{} aims to trust each source on \emph{as many propositional
variables} as possible. One can check that \varbasedcond{} satisfies the
basic postulates.

\begin{example}
    \label{ex:conditioning_operator}
    Revisiting the sequence
    $
        \sigma
        = (\tuple{\ast, c, p}, \tuple{i, c, \neg p \land q})
    $
    from \cref{ex:model_based} with \varbasedcond{}, the knowledge set
    $K^\sigma_c$ is the same as before, but we now have $q \land E_i(q) \in
    B^\sigma_c$. This reflects the ``credulous'' behaviour of the ranking
    $\le$: while it is not possible to believe $i$ is an expert on $p$, we
    should believe they \emph{are} an expert on $q$ so long as this does not
    conflict with soundness. For the propositional beliefs generally, we have
    $\proppart{B^\sigma_c} = \cnprop(p \land q)$. That is, \varbasedcond{} takes
    the $q$ part of the report from $i$ (on which $i$ is credulously trusted)
    while ignoring the $\neg p$ part (which is false due to report from
    $\ast$).

\end{example}

\begin{definition}
    Define a conditioning operator \partbasedcond{} with $\X_\sigma$ as
    for \varbasedcond{}, and $\le$ defined by the ranking function
    \[
        r(W) = -\sum_{i \in \S}{|\Pi^W_i|}.
    \]
\end{definition}

\partbasedcond{} aims to maximise the \emph{number of cells} in the sources'
partitions, and thereby maximise the number of propositions on which they have
expertise. Unlike \varbasedcond{}, the propositional variables play no special
role. As expected, \partbasedcond{} satisfies the basic postulates.

\begin{example}
    Applying \partbasedcond{} to $\sigma$ from
    \cref{ex:model_based,ex:conditioning_operator}, we no longer extract $q$
    from the report of $i$: $q \notin B^\sigma_c$ and $E_i(q) \notin
    B^\sigma_c$. Instead, we have $\proppart{B^\sigma_c} = \cnprop(p)$, and
    $E_i(p \lor q) \in B^\sigma_c$.
\end{example}

Note that both \varbasedcond{} and \partbasedcond{} are based on the general
principle of maximising the expertise of sources, subject to the constraint
that all reports are sound. This intuition is formalised by the following
postulate for conditioning operators. In what follows, write $W \preceq W'$ iff
$\Pi^W_i$ refines $\Pi^{W'}_i$ for all $i \in \S$, i.e. if all sources have
broadly more expertise in $W$ than in $W'$.\footnotemark{}

\footnotetext{
    $\Pi$ refines $\Pi'$ if $\forall A \in \Pi$, $\exists B \in \Pi'$ such that
    $A \subseteq B$.
}

\begin{postulate}[\refinement{}]
    If $W \preceq W'$ then $W \le W'$
\end{postulate}

Since $\preceq$ is only a partial order on $\W$ there are many possible total
extensions; \varbasedcond{} and \partbasedcond{} provide two specific examples.

We now turn to an axiomatic characterisation of conditioning operators.
Taken with the basic postulates from \cref{sec:basic_postulates},
conditioning operators can be characterised using an approach similar to that of
\citet{delgrande2018general} in their account of \emph{generalised AGM belief
revision}.\footnotemark{} This involves a technical property
\citeauthor{delgrande2018general}
call \postulatename{Acyc}, which finds its roots in the \emph{Loop}
property of \citet{kraus1990nonmonotonic}.
\footnotetext{ Note that while the result is similar, our framework is not an
instance of theirs.  }

\begin{postulate}[\duprem{}]
    If $\rho_1 = \sigma \concat \tuple{i, c, \phi}$ and $\rho_2 = \rho_1
    \concat \tuple{i, c, \phi}$ then $B^{\rho_1} = B^{\rho_2}$ and
    $K^{\rho_1} = K^{\rho_2}$
\end{postulate}
\begin{postulate}[\condcons{}]
    If $K^\sigma$ is consistent then so is $B^\sigma$
\end{postulate}
\begin{postulate}[\incvac{}]
    $B^{\sigma \concat \rho} \sqsubseteq \cn(B^\sigma \sqcup K^\rho)$, with
    equality if $B^\sigma \sqcup K^\rho$ is consistent
\end{postulate}
\begin{postulate}[\acyc{}]
    If $\sigma_0, \ldots, \sigma_n$ are such that $K^{\sigma_j} \sqcup
    B^{\sigma_{j+1}}$ is consistent for all $0 \le j < n$ and $K^{\sigma_n}
    \sqcup B^{\sigma_0}$ is consistent, then $K^{\sigma_0} \sqcup B^{\sigma_n}$
    is consistent
\end{postulate}

\incvac{} is so-named since it is analogous to the combination of
\emph{Inclusion} and \emph{Vacuity} from AGM revision, if one informally views
$B^{\sigma \concat \rho}$ as the revision of $B^\sigma$ by $K^\rho$.
\condcons{} is another consistency postulate, which follows from
\consistency{},  \closure{} and \soundness{}.
\acyc{} is the analogue of the postulate of \citeauthor{delgrande2018general},
which rules out cycles in the plausibility order
constructed in the representation result.

As with the result of \citeauthor{delgrande2018general}, a technical condition beyond
\cref{def:conditioning_operator} is required to obtain the characterisation:
say that a conditioning operator is \emph{elementary} if for each $\sigma$ the
sets of worlds $\X_\sigma$ and $\Y_\sigma = \min_{\le}\X_\sigma$ are
elementary.\footnotemark{}
\footnotetext{
    Equivalently, there is a total preorder $\le$ such that $\mods(B^\sigma) =
    \min_{\le}\mods(K^\sigma)$ for all $\sigma$.
}

\begin{theorem}
    \label{thm:conditioning_characterisation}
    Suppose an operator satisfies the basic postulates of
    \cref{sec:basic_postulates}.\footnotemark{} Then it is an elementary
    conditioning operator if and only if it satisfies \duprem{},
    \condcons{}, \incvac{} and \acyc{}.

    \footnotetext{
        Strictly speaking, we only need \closure{}, \containment{},
        \kconj{}, \equivpost{} and \rearr{}.
    }
\end{theorem}

The proof roughly follows the lines of Theorem 4.9 in
\cite{delgrande2018general}, although some
differences arise due to the form of our input as finite sequences of reports.
We note that while the requirement that $\X_{\sigma}$ and $\Y_{\sigma}$ are elementary is
a technical condition,\footnotemark{} the characterisation in
\cref{prop:elementary_characterisation} implies a simple sufficient condition
for elementariness.
\footnotetext{
    \incvac{} may fail for non-elementary conditioning.
}

\begin{proposition}
    \label{prop:partition_equiv_tpo_implies_elementariness}
    Suppose $\le$ is such that $W \simeq W'$ whenever $W$ and $W'$ are
    partition-equivalent. Then $\min_{\le}{S}$ is elementary for any
    elementary set $S \subseteq \W$.
\end{proposition}

\Cref{prop:partition_equiv_tpo_implies_elementariness} implies that
\varbasedcond{} and \partbasedcond{} are elementary. Indeed, for both operators
$\X_\sigma = \mods(G^\sigma_\snd)$ so is elementary by definition. Since the
ranking $\le$ for each operator only depends on the partitions of worlds,
$\Y_\sigma =
\min_{\le}{\X_\sigma}$ is elementary also.

\subsection{Score-based operators}
\label{sec:score_based}

The fact that the plausibility order $\le$ of a conditioning operator is fixed
may be too limiting. For example, consider
\[
    \sigma
    =
    (\tuple{i, c, p},
    \tuple{j, c, \neg p},
    \tuple{i, d, p}).
\]
If one sets $\X_\sigma$ to satisfy the soundness constraints (i.e. as in
\weakop{}), there is a possible
world $W_1 \in \X_\sigma$ with $W_1, d \models \neg E_i(p) \land E_j(p) \land
\neg p$ (i.e. $W_1$ sides with source $j$ and $p$ is false at $d$) and another
world $W_2 \in \X_\sigma$ with $W_2, d \models E_i(p) \land \neg E_j(p) \land
p$ (i.e. $W_2$ sides with source $i$). Appealing to symmetry, one may argue
that neither world is \emph{a priori} more plausible than the other, so any
fixed plausibility order should have $W_1 \simeq W_2$. If these worlds
are maximally plausible (e.g. if taking the ``optimistic" view outlined in
\cref{ex:hospital_ex_formalised}), conditioning gives $p \notin B^\sigma_d$ and
$\neg p \notin B^\sigma_d$.
However, there is an argument that $W_2$ should be considered more plausible
than $W_1$ \emph{given the sequence $\sigma$}, since $W_2$ validates the final
report $\tuple{i, d, p}$ whereas $W_1$ does not. Consequently, there is an
argument that we should in fact have $p \in B^\sigma_d$.\footnote{At the very
least, the case $p \in B^\sigma_d$ should not be \emph{excluded}.} This shows
that we need the plausibility order to be responsive to the input sequence for
adequate belief change.\footnotemark{}

\footnotetext{
    In \cref{sec:one_step_postulates} we make this argument more precise
    by providing an impossibility result which shows conditioning operators
    with some basic properties cannot accept $p$ in sequences such as this.
}

As a result of this discussion, we look for operators whose plausibility
ordering can depend on $\sigma$. One approach to achieve this in a controlled
way is to have a ranking for each
\emph{report} $\tuple{i, c, \phi}$, and combine these to construct a ranking
for each sequence $\sigma$. We represent these rankings by \emph{scoring
functions}, and call the resulting operators \emph{score-based}.

\begin{definition}
\label{def:score_based}
    An operator is \emph{score-based} if there is a mapping $\sigma \mapsto
    \tuple{\X_\sigma, \Y_\sigma}$ as in \cref{def:model_based} and functions
    $r_0: \W \to \Ninf$, $d: \W \times (\S \times \C \times \lprop) \to \Ninf$
    such that $\X_\sigma = \{W \mid r_\sigma(W) < \infty\}$ and $\Y_\sigma =
    \argmin_{W \in \X_\sigma}{r_\sigma(W)}$, where
    \[
        r_\sigma(W) = r_0(W) + \sum\nolimits_{\tuple{i, c, \phi} \in \sigma}{
            d(W, \tuple{i, c, \phi})
        }.
    \]

\end{definition}

Here $r_0(W)$ is the \emph{prior implausibility score} of $W$, and $d(W,
\tuple{i, c, \phi})$ is the \emph{disagreement score} for world $W$ and $\tuple{i,
c, \phi}$. The set of most plausible worlds $\Y_\sigma$ consists of those $W$
which minimise the sum of the prior implausibility and the total
disagreement with $\sigma$. Note that by summing the scores
of each report $\tuple{i, c, \phi}$ with equal weight, we treat each report independently.
Score-based operators generalise elementary conditioning operators with \kconj{}.

\begin{proposition}
\label{prop:kconj_conditioning_implies_score_based}
    Any elementary conditioning operator satisfying \kconj{} is score-based.
\end{proposition}



We now give a concrete example.

\begin{definition}
    \label{def:scorebasedop}
    Define a score-based operator \scorebasedop{} by setting
    $r_0(W) = 0$ and
    \[
        d(W, \tuple{i, c, \phi}) = \begin{cases}
            |\Pi^W_i[\phi] \setminus \propmods(\phi)|,& W, c \models S_i(\phi) \\
            \infty,& \text{ otherwise. }
        \end{cases}
    \]
\end{definition}

The set of possible worlds $\X_\sigma$ is the same as for the earlier
operators.
All worlds are \emph{a priori} equiplausible according to
$r_0$. The disagreement score $d$ is defined as the number of propositional
valuations in the ``excess" of $\Pi^W_i[\phi]$ which are not models of
$\phi$, i.e. the number of $\neg\phi$ valuations which are indistinguishable
from some $\phi$ valuation.
%
The intuition here is that \emph{sources tend to only report formulas on which
they have expertise}. The minimum score 0 is attained exactly when $i$ has
expertise on $\phi$; other worlds are ordered by how much they deviate from
this ideal.

One can verify that \scorebasedop{} satisfies the basic postulates of
\cref{sec:basic_postulates}. It can also be seen that $\X_\sigma$ and
$\Y_\sigma$ are elementary, and \scorebasedop{} fails \duprem{} and \incvac{}.
It follows from
\cref{thm:conditioning_characterisation} that \scorebasedop{} is \emph{not} a
conditioning operator.


\begin{example}
\label{ex:score_based}
     To illustrate the differences between \scorebasedop{} and conditioning,
     consider a more elaborate version of the
     example given at the start of this \lcnamecref{sec:score_based}:
     \[
        \sigma = (
            \tuple{i, c, p \rightarrow q},
            \tuple{j, c, p \rightarrow \neg q},
            \tuple{\ast, c, p},
            \tuple{i, d, p},
            \tuple{i, d, q}
        ).
     \]
     Here the reports of $i$ and $j$ in case $c$ are consistent,
     but inconsistent when taken with
     the reliable information $p$ from $\ast$. Should we believe $q$ or $\neg
     q$? Both our conditioning operators \varbasedcond{} and \partbasedcond{}
     decline to decide, and have $\proppart{B^\sigma_c} = \cnprop(p)$. However,
     since \scorebasedop{} takes into account each report in the
     sequence, the fact that $i$ reports both $p$ and $q$ in case $d$ leads to
     $E_i(p) \land E_i(q) \in B^\sigma_c$. This gives $E_i(p \rightarrow q)
     \in B^\sigma_c$ by \cref{prop:validities} part
     (\labelcref{item:exp_on_all_variables}), so we can make use of the report
     from $i$ in case $c$: we have $\proppart{B^\sigma_c} = \cnprop(p \land
     q)$.
     This example shows that score-based operators can be \emph{more credulous}
     than conditioning operators (e.g. we can believe $E_i(p)$ when
     $i$ reports $p$), and can consequently hold stronger propositional
     beliefs.

\end{example}

\section{One-step Revision}
\label{sec:one_step_postulates}

The postulates of \cref{sec:basic_postulates} only set out very basic
requirements for an operator. In this section we introduce some more demanding
postulates which address how beliefs should change when a sequence $\sigma$ is
extended by a new report $\tuple{i, c, \phi}$.
%
First, we address how propositional beliefs should be affected by reliable
information.

\begin{postulate}[\agm{}]
    For any $\sigma$ and $c \in \C$ there is an AGM operator $\star$
    for $\proppart{B^\sigma_c}$ such that $\proppart{B^{\sigma \concat
    \tuple{\ast, c, \phi}}_c} = \proppart{B^\sigma_c} \star \phi$ whenever
    $\neg\phi \notin K^\sigma_c$
\end{postulate}

\agm{} says that receiving information from the reliable source $\ast$ acts in
accordance with the well-known AGM postulates~\cite{alchourron1985logic} for propositional belief
revision (provided we are not in the degenerate case where the new report
$\phi$ was already \emph{known} to be false). Since AGM revision operators are
characterised by total preorders over valuations
\cite{grove1988two,katsuno_1991}, it is no surprise that our order-based
constructions are consistent with \agm{}.

\begin{proposition}
    \label{prop:examples_satisfy_agm}
    \varbasedcond{}, \partbasedcond{} and \scorebasedop{}
    satisfy \agm{}.
\end{proposition}

Thus, we do indeed extend AGM revision in the case of reliable information.
What about non-reliable information? First note that the analogue of \agm{} for
ordinary sources $i \ne \ast$ is \emph{not} desirable. In particular, we
should not have the \postulatename{Success} postulate:
\[
    \phi \in B^{\sigma \concat \tuple{i, c, \phi}}_c.
\]
Indeed, the sequence in \cref{ex:model_based} with $\phi = \neg p
\land q$ already shows that \postulatename{Success} would conflict with the
basic postulates.
However, there are weaker modifications of \postulatename{Success} which may be
more appropriate. We consider two such postulates.

\begin{postulate}[\condsucc{}]
    If $E_i(\phi) \in B^\sigma_c$ and $\neg\phi \notin B^\sigma_c$, then $\phi
    \in B^{\sigma \concat \tuple{i, c, \phi}}_c$
\end{postulate}

\begin{postulate}[\strongcondsucc{}]
    If $\neg(E_i(\phi) \land \phi) \notin B^\sigma_c$, then $\phi \in B^{\sigma
    \concat \tuple{i, c, \phi}}_c$
\end{postulate}

\condsucc{} says that if $i$ is deemed an expert on $\phi$, which is consistent
with current beliefs, then $\phi$ is accepted after $i$ reports it. That is,
the acceptance of $\phi$ is \emph{conditional} on prior beliefs about the expertise of
$i$ (on $\phi$). \strongcondsucc{} weakens the antecedent by only requiring
that $E_i(\phi)$ and $\phi$ are jointly consistent with current beliefs (i.e.
$i$ need not be considered an expert on $\phi$). In other words, we should
believe reports if there is no reason not to. It is easily shown that
\closure{} and \strongcondsucc{} implies \condsucc{}.
We once again revisit our examples.

\begin{proposition}
    \label{prop:examples_satisfy_condsucc}
    \varbasedcond{}, \partbasedcond{} and \scorebasedop{}
    satisfy \condsucc{}, and \scorebasedop{} additionally
    satisfies \strongcondsucc{}.
\end{proposition}

By omission, the reader may suppose that the conditioning operators fail
\strongcondsucc{}. This is correct, and we can in fact say even more: \emph{no}
conditioning operator with a few basic properties -- all of which are satisfied
by \varbasedcond{} and \partbasedcond{} -- can satisfy \strongcondsucc{}.
In what follows, for a permutation $\pi: \S \to \S$ with $\pi(\ast) = \ast$,
write $\pi(W)$ for the world with $v^{\pi(W)}_c = v^W_c$ and $\Pi^{\pi(W)}_i =
\Pi^W_{\pi(i)}$. We have an impossibility result.

\begin{proposition}
    \label{prop:strongcondsucc_conditioning_impossibilitity}
    No elementary conditioning operator satisfying the basic postulates can
    simultaneously satisfy the following properties:
    \begin{enumerate}
        \item \label{item:conditioning_impossibility_first}
              $K^\emptyset = \cn(\emptyset)$

        \item \label{item:anonymity}
              If $\pi$ is a permutation of $\S$ with $\pi(\ast) = \ast$, $W
              \simeq \pi(W)$

        \item \refinement{}

        \item \label{item:conditioning_impossibility_last} \strongcondsucc{}
    \end{enumerate}
\end{proposition}

(\labelcref{item:conditioning_impossibility_first}) says that before any
reports are received, we only know tautologies. As remarked earlier, this is
not an \emph{essential} property, but is reasonable when no prior knowledge is
available. (\labelcref{item:anonymity}) is an anonymity postulate: it says that
permuting the ``names" of sources does not affect the plausibility of a world,
and is a desirable property in light of
(\labelcref{item:conditioning_impossibility_first}). \refinement{}, introduced
in \cref{sec:conditioning_operators}, says that worlds in which all sources
have more expertise are preferred.

\Cref{prop:strongcondsucc_conditioning_impossibilitity} highlights an important
difference between conditioning and score-based operators, and hints that
a fixed plausibility order may be too restrictive: we
need to allow the order to be responsive to new reports in order to satisfy
properties such as \strongcondsucc{}.

\section{Selective Change}
\label{sec:selective_change}

In the previous section we saw how a single formula $\phi$ may be accepted when
it is received as an additional report. But what can we say about propositional
beliefs when taking into account the \emph{whole sequence} $\sigma$? To
investigate this we introduce an analogue of \emph{selective revision}
\cite{ferme1999selective}, in which propositional beliefs are
formed by ``selecting" part of each input report (intuitively, some part
consistent with the source's expertise). In what follows, write $\sigma \rs c =
\{\tuple{i, \phi} \mid \tuple{i, c, \phi} \in \sigma\}$ for the $c$-reports in
$\sigma$.

\begin{definition}
    \label{def:selectivity}
    A \emph{selection scheme} is a mapping $f$ assigning to each
    $\ast$-consistent sequence $\sigma$ a function $f_\sigma: \S \times \C
    \times \lprop \to \lprop$ such that $f_\sigma(i, c, \phi) \in
    \cnprop(\phi)$.
    An operator is \emph{selective} if there is a selection scheme $f$ such
    that for all $\ast$-consistent $\sigma$ and $c \in \C$,
    \[
        \proppart{B^\sigma_c} = \cnprop(\{f_\sigma(i, c, \phi) \mid \tuple{i,
        \phi} \in \sigma \rs c\}).
    \]
\end{definition}

Thus, an operator is selective if its propositional beliefs in case $c$ are
formed by weakening each $c$-report and taking their consequences. Note that for
$\sigma = \emptyset$ we get $\proppart{B^\sigma_c} = \cnprop(\emptyset)$, so
selectivity already rules out non-tautological prior propositional beliefs.
Also note that in the presence of \closure{}, \containment{} and \soundness{},
selectivity implies that $\proppart{B^\sigma_c} = \proppart{B^\rho_c}$, where
$\rho$ is obtained by replacing each report $\tuple{i, c, \phi}$ with
$\tuple{\ast, c, f_\sigma(i, c, \phi)}$.

Selectivity can be characterised by a natural postulate placing an upper bound
on the propositional part of $B^\sigma_c$. In what follows, let
$\Gamma^\sigma_c = \{\phi \in \lprop \mid \exists i \in \S: \tuple{i, \phi}
\in \sigma \rs c\}$.

\begin{postulate}[\boundedness{}]
    If $\sigma$ is $\ast$-consistent, $\proppart{B^\sigma_c}
    \subseteq \cnprop(\Gamma^\sigma_c)$
\end{postulate}

\boundedness{} says that the propositional beliefs in case $c$ should not go
beyond the consequences of the formulas reported in case $c$. In some sense
this can be seen as an iterated version of \postulatename{Inclusion} from AGM
revision, in the case where $\proppart{B^\sigma_c} = \cnprop(\emptyset)$. We
have the following characterisation.

\begin{theorem}
    \label{thm:selectivity_characterisation}
    A model-based operator is selective if and only if it satisfies
    \boundedness{}.
\end{theorem}


This result allows us to easily analyse when conditioning and score-based
operators are selective; we show in the
\ifdefined\thisistheprerint
    appendix
\else
    full paper~\cite{singleton_booth_22_preprint}
\fi
that \varbasedcond{}, \partbasedcond{} and \scorebasedop{} are all selective.

\subsection{Case independence}

In the definition of a selection scheme, we allow $f_\sigma(i, c, \phi)$ to
depend on the case $c$. If one views $f_\sigma(i, c, \phi)$ as a weakening of
$\phi$ which accounts for the lack of expertise of $i$, this is somewhat at
odds with other aspects of the framework, where expertise is independent of
case. For this reason it is natural to consider \emph{case independent}
selective schemes.

\begin{definition}
    \label{def:case_independent_selectivity}
    A selection scheme $f$ is \emph{case independent} if $f_\sigma(i, c, \phi)
    \equiv f_\sigma(i, d, \phi)$ for all $\ast$-consistent $\sigma$ and $i \in
    \S$, $c, d \in \C$ and $\phi \in \lprop$.
\end{definition}

Say an operator is \emph{case-independent-selective} if it is selective
according to some case independent scheme. This stronger notion of selectivity
can again be characterised by a postulate which bounds propositional beliefs.
For any set of cases $H \subseteq \C$, sequence $\sigma$ and $c \in \C$, write
\begin{align*}
    \Gamma^{\sigma,H}_c
    = \{
        \phi \in \lprop
        \mid
        \exists i \in \S:
            &\tuple{i, \phi} \in \sigma \rs c \\
            &\text{ and }
            \forall d \in H : \tuple{i, \phi} \notin \sigma \rs d
    \}.
\end{align*}

\begin{postulate}[\hboundedness{}]
    For any $\ast$-consistent $\sigma$, $H \subseteq \C$ and $c \in \C$,
    \[
        \proppart{B^\sigma_c}
            \subseteq
            \cnprop\left(
                \Gamma^{\sigma,H}_c
                \cup
                \bigcup_{d \in H}{
                    \proppart{B^\sigma_d}
                }
            \right)
    \]
\end{postulate}

Note that \boundedness{} is obtained as the special case where $H =
\emptyset$. We illustrate with an example.

\begin{example}
    \label{ex:hprop}
    Consider case $c$ in the following sequence:
    \[
        \sigma
        = (
            \tuple{i, c, p},
            \tuple{j, c, q},
            \tuple{j, d, q},
            \tuple{k, d, r}
        )
    \]
    \boundedness{} requires that $\proppart{B^\sigma_c} \subseteq \cnprop(\{p,
    q\})$. However, the instance of \hboundedness{} with $H = \{d\}$ makes use
    of the fact that $j$ reports $q$ in both cases $c$ and $d$, and requires
    $\proppart{B^\sigma_c} \subseteq \cnprop(\{p\} \cup
    \proppart{B^\sigma_d})$.
    This also has an interesting implication for case $d$: if $\phi \in
    \proppart{B^\sigma_c}$, then $p \rightarrow \phi \in
    \proppart{B^\sigma_d}$. This follows since $\beta \in \cnprop(\{\alpha\}
    \cup \Gamma)$ iff $\alpha \rightarrow \beta \in \cnprop(\Gamma)$ for
    $\alpha, \beta \in \lprop$.
    Intuitively, this says that if $p$ (from $i$) and $q$
    (from $j$) is enough to accept $\phi$ in case $c$, then $\phi$ is
    accepted in case $d$ \emph{if $p$ is}, given that the report of $q$ from
    $j$ is repeated for $d$.

\end{example}


\begin{theorem}
    \label{thm:case_independent_selectivity_characterisation}
    A model-based operator is case-independent-selective if and only if it
    satisfies \hboundedness{}.
\end{theorem}

The question of whether our concrete operators satisfy \hboundedness{}
(equivalently, whether they are case-independent-selective) is still open.

\section{Related Work}
\label{sec:relatedwork}


\paragraph{Belief merging.}

In the framework of \citet{konieczny2002merging}, a merging
operator $\Delta$ maps a multiset of propositional formulas $\Phi =
\{\phi_1,\ldots,\phi_n\}$ and an integrity constraint $\mu$ to a formula
$\Delta_\mu(\Phi)$.
%
%
This can be seen as the special case of our framework with a single
case $c$: for $\Phi$, $\mu$ we consider the sequence
$\sigma_{\Phi, \mu}$ where $\ast$ reports $\mu$ and each source $i$ reports
$\phi_i$.
%
Any operator then gives rise to a merging operator $\Delta_\mu(\Phi) =
\bigwedge\proppart{B^{\sigma_{\Phi,\mu}}_c}$.
%

We go beyond this setting by considering multiple cases and explicitly
modelling expertise (and trust, via beliefs about expertise). While it may be
possible to model expertise \emph{implicitly} in belief merging (for example,
say $i$ is not trusted on $\psi$ if $\Delta_\mu(\Phi) \not\vdash \psi$ when
$\phi_i \vdash \psi$), bringing expertise to the object level allows us to
express more complex beliefs about expertise, such as $E_a(x) \lor E_b(x)$ in
\cref{ex:hospital_ex_formalised}. It also facilitates postulates which refer
directly to expertise, such as the weakenings of \postulatename{Success} in
\cref{sec:one_step_postulates}.

However, our problem is more specialised than merging, since we focus
specifically on conflicting information due to lack of expertise. Belief
merging may be applied more broadly to other types of \emph{information
fusion}, e.g. subjective beliefs or goals~\cite{gregoire_fusion_2006},
where notions of objective expertise do not apply. While our framework
\emph{could} be applied in these settings, our postulates may no longer be
desirable.

\paragraph{Epistemic logic.}

Our notions of expertise and soundness are related to \emph{S5
knowledge} from epistemic logic~\cite{handbook_epistemic}. In such
logics, an agent \emph{knows} $\phi$ at a state $x$ if $\phi$ holds at all
states $y$ ``accessible" from $x$. Knowledge is thus determined by an
\emph{epistemic accessibility relation}, which describes the distinctions
between states the agent can make. The logic of S5 arises when this relation is an
equivalence relation (or equivalently, a partition).

Our previous work~\cite{singleton2021logic} -- in which expertise and soundness
were introduced in a modal logic framework -- showed that ``expertise
models" are in 1-to-1 correspondence with S5 models, such that $E(\phi)$
holds iff $A(\phi \rightarrow K\phi)$ holds in the S5 model, where $A$ is the
universal modality. By symmetry of expertise, we can also replace $\phi$ with
its negation. Thus, expertise has a precise epistemic interpretation: it is the
ability to \emph{know whether} $\phi$ holds in \emph{any possible state}.
Similarly, $S(\phi)$ translates to $\neg K \neg \phi$. That is, $\phi$ is sound
exactly when the source does not \emph{know} $\phi$ is false.

In the present framework, if we set
$W, c \models K_i(\phi)$ iff $\Pi_i[v_c] \subseteq \propmods(\phi)$ and
$W, c \models A\Phi$ iff $\forall v:\ W_{c=v}, c \models \Phi$,
where $W_{c=v}$ is the world obtained from $W$ by setting $v'_c = v$, then we
have $E_i(\phi) \equiv A(\phi \rightarrow K_i(\phi))$ and $S_i(\phi) \equiv
\neg K_i(\neg \phi)$. While $K_i$ is not quite an S5 modality (the \textbf{5}
axiom requires iterating $K_i$, which is not possible in our framework), this
shows the fundamental link between expertise, soundness and knowledge.

\section{Conclusion}
\label{sec:conclusion}

\paragraph{Summary.} In this paper we studied a belief change problem --
extending the classical AGM framework -- in which
beliefs about the state of the world in multiple cases, as well as expertise of
multiple sources, must be inferred from a sequence of reports. This allowed us
to take a fresh look at the
interaction between trust (seen as \emph{belief in expertise}) and belief.
By inferring the expertise of the sources from the reports,
we have generalised some earlier approaches to non-prioritised
revision which assume expertise (or reliability, credibility, priority
etc) is known up-front (e.g.
\cite{ferme1999selective,hansson_2001,booth_trust_2018,delgrande2006iterated}).
We went on to propose some concrete belief change operators, and explored their
properties through examples and postulates.

We saw that conditioning operators satisfy some desirable
properties, and our concrete instances make useful inferences that go
beyond \weakop{}. However, we have examples in which intuitively plausible
inferences are blocked, and conditioning is largely incompatible with
\strongcondsucc{}. Score-based operators, and in particular \scorebasedop{},
offer a way around these limitations, but may
come at the expense of some other seemingly reasonable postulates, such as
\duprem{}.

\paragraph{Future work.} There are many possibilities for
future work.
Firstly, we have a representation result only for conditioning operators. A
characterisation of score-based operators -- either the class in general or the
specific operator \scorebasedop{} -- remains to be found. This would help to
further clarify the differences between conditioning and score-based operators.
We have also not considered any computational issues. Determining the
complexity of calculating the results of our example operators, and the
complexity for conditioning and score-based operators more broadly, is left to
future work.
Secondly, there is scope for deeper postulate-based analysis. For example,
there should be postulates governing how beliefs change in case $c$ in response
to reports in case $d$. We could also
consider more postulates relating trust and belief, and compare these
postulates with those of \citet{yasser_21}.
%
%
Finally, our framework only deals with three levels of trust on a proposition:
we can believe $E_i(\phi)$, believe $\neg E_i(\phi)$, or neither.
Future work could investigate how to extend our semantics to talk about \emph{graded
expertise}, and thereby permit more fine-grained \emph{degrees of trust}
\cite{hunter_building_21,yasser_21,delgrande2006iterated}.

\section*{Acknowledgements}

We thank the anonymous KR 2022 reviewers for their thoughtful comments and
suggestions.

\appendix











\section{Proofs}

\subsection{Proof of \cref{prop:validities}}

\begin{proof}\leavevmode
\begin{itemize}
    \item[\labelcref{item:replacement_equivalents_e_s}.]
        If $\phi \equiv \psi$ then $\propmods(\phi) = \propmods(\psi)$; since
        the semantics for $\S_i(\phi)$ and $E_i(\phi)$ only refer to
        $\propmods(\phi)$ (and likewise for $\psi$), we have that $S_i(\phi)
        \leftrightarrow S_i(\psi)$ and $E_i(\phi) \leftrightarrow E_i(\psi)$
        are valid.

    \item[\labelcref{item:e_symmetric}.]
        For the first validity, suppose $W, c \models E_i(\phi)$. Then
        $\propmods(\phi) = \Pi_i[\phi]$. We show $W, c \models E_i(\neg\phi)$.
        Indeed, take $v \in \Pi_i[\neg\phi]$. Then there is $v' \in
        \propmods(\neg\phi)$ such that $v \in \Pi_i[v']$. Thus $v' \in
        \Pi_i[v]$ also. Supposing for contradiction that $v \in
        \propmods(\phi)$, we get
        \[
            v' \in \Pi_i[v] \subseteq \Pi_i[\phi] = \propmods(\phi).
        \]
        But then $v' \in \propmods(\neg\phi) \cap \propmods(\phi) = \emptyset$;
        contradiction. Hence $v \notin \propmods(\phi)$, i.e. $v \in
        \propmods(\neg\phi)$. This shows that $\Pi_i[\neg\phi] \subseteq
        \propmods(\neg\phi)$, so $W, c \models E_i(\neg\phi)$.

        We have shown that $E_i(\phi) \rightarrow E_i(\neg\phi)$ is valid. For
        the converse note that, by symmetry, $E_i(\neg\phi) \rightarrow
        E_i(\neg\neg\phi)$ is valid; since $E_i(\neg\neg\phi)$ is equivalent to
        $E_i(\phi)$ by (\labelcref{item:replacement_equivalents_e_s}) we get
        $E_i(\phi) \leftrightarrow E_i(\neg\phi)$.

        For the second validity, suppose $W, c \models E_i(\phi) \land
        E_i(\psi)$. Note that
        \[
            \Pi_i[\phi \land \psi]
            \subseteq \Pi_i[\phi]
            = \propmods(\phi)
        \]
        and, similarly, $\Pi_i[\phi \land \psi] \subseteq \propmods(\psi)$.
        Hence
        \[
            \Pi_i[\phi \land \psi]
            \subseteq \propmods(\phi) \cap \propmods(\psi)
            = \propmods(\phi \land \psi),
        \]
        which shows $W, c \models E_i(\phi \land \psi)$.

    \item[\labelcref{item:exp_on_all_variables}.]
        Let $\phi$ be a propositional formula, and let $p_1, \ldots, p_k$ be
        the variables appearing in $\phi$. Let $\widehat{\lprop} \subseteq
        \lprop$ be the propositional formulas over $p_1, \ldots p_k$ generated
        only using conjunction and negation. Then there is some $\psi \in
        \widehat{\lprop}$ with $\phi \equiv \psi$.

        Suppose $W, c \models E_i(p_1) \land \cdots \land E_i(p_k)$. By this
        assumption and the properties in (\labelcref{item:e_symmetric}),
        one can show by induction that $W, c \models E_i(\theta)$ for all
        $\theta \in \widehat{\lprop}$. In particular, $W, c \models E_i(\psi)$.
        Since $\phi \equiv \psi$, we get $W, c \models E_i(\phi)$.

    \item[\labelcref{item:e_and_s_implies_phi}.]
        Suppose $W, c \models E_i(\phi) \land S_i(\phi)$. Then $v_c \in
        \Pi_i[\phi] = \propmods(\phi)$, so $W, c \models \phi$. Hence
        $E_i(\phi) \land S_i(\phi) \rightarrow \phi$ is valid. Similarly,
        $S_i(\phi) \land \neg\phi \rightarrow \neg E_i(\phi)$ is valid.

    \item[\labelcref{item:sound_neg_pair}.]
        Suppose $W, c \models S_i(\phi) \land S_i(\neg\phi)$, and, for
        contradiction, $W, c \models E_i(\phi)$. On the one hand we have $W, c
        \models E_i(\phi) \land S_i(\phi)$, so
        (\labelcref{item:e_and_s_implies_phi}) gives $W, c \models \phi$. On
        the other hand, $W, c \models E_i(\phi)$ gives $W, c \models
        E_i(\neg\phi)$ by (\labelcref{item:e_symmetric}), so $W, c \models
        E_i(\neg\phi) \land S_i(\neg\phi)$; by
        (\labelcref{item:e_and_s_implies_phi}) again we have $W, c \models
        \neg\phi$. But then $W, c \models \phi \land \neg\phi$ --
        contradiction.

    \item[\labelcref{item:star_exp}.]
        Since the distinguished source $\ast$ has the unit partition $\Pi_\ast$
        in any world $W$, we have $\Pi_\ast[\phi] = \propmods(\phi)$, so $W, c
        \models E_\ast(\phi)$. Similarly, $W, c \models S_i(\phi)$ iff $v_c \in
        \Pi_\ast[\phi] = \propmods(\phi)$ iff $W, c \models \phi$.

\end{itemize}
\end{proof}

\subsection{Proof of \cref{prop:elementary_characterisation}}

\begin{proof}[Proof (sketch)]
    ``if": Suppose the stated condition holds for $S \subseteq \W$. Form a
    collection $G = \{\Gamma_c\}_{c \in \C}$ by setting $\Gamma_c = \{\Phi \in
    \lext \mid S \subseteq \mod_c(\Phi)\}$. Clearly $S \subseteq \mod(G)$. For the
    reverse inclusion, suppose $W \in \mod(G)$. For any set of valuations $U
    \subseteq \vals$, let $\phi_U$ be any propositional sentence with
    $\propmods(\phi_U) = U$. For each $c \in \C$, consider the sentence
    \[
        \Phi_c = \bigvee_{W' \in S}\left(
            \phi_{\{v^{W'}_c\}}
            \land
            \bigwedge_{i \in \S}
                \bigwedge_{U \subseteq \vals}
                    R_{W', i, U}
        \right)
    \]
    where
    \[
        R_{W', i, U} = \begin{cases}
            E_i(\phi_U),& W', c_0 \models E_i(\phi_U) \\
            \neg E_i(\phi_U),& \text{ otherwise }
        \end{cases}
    \]
    for some fixed case $c_0 \in \C$. It is straightforward to see that each
    $W' \in S$ satisfies its corresponding disjunct at case $c$, so $\Phi_c \in
    \Gamma_c$. Hence $W \in \mod(G)$ implies $W, c \models \Phi_c$ for each $c$.
    Consequently, for each $c$ there is some $W_c \in S$ such that
    \begin{inlinelist}
        \item \label{item:vals_equal} $v^W_c = v^{W_c}_c$; and
        \item \label{item:partition_equiv} for each $i \in \S$ and $U \subseteq
              V$, $W, c \models E_i(\phi_U)$ iff $W_c, c \models E_i(\phi_U)$.
    \end{inlinelist}
    From \ref{item:vals_equal}, $W$ is a valuation combination from $\{W_c\}_{c
    \in \C}$. From \ref{item:partition_equiv} it can be shown that in fact
    $\Pi^W_i = \Pi^{W_c}_i$ for each $c$ and $i$; that is, $W$ is
    partition-equivalent to each $W_c$. In particular, all the $W_c$ are
    partition-equivalent to each other.
    Repeatedly applying the closure condition assumed to hold for $S$, we see
    that $W \in S$ as required.

    ``only if": Suppose $S$ is elementary, i.e. $S = \mod(G)$ for some
    collection $G = \{\Gamma_c\}_{c \in \C}$, and let $W, W_1, W_2$ be as in
    the statement of the \lcnamecref{prop:elementary_characterisation}. Take $c
    \in \C$ and $\Phi \in \Gamma_c$. We will show $W, c \models \Phi$. By assumption,
    there is $n \in \{1, 2\}$ such that $v^W_c = v^{W_n}_c$. It can be shown by
    induction on $\lext$ formulas that, since $W$ and $W_n$ are
    partition-equivalent and have the same $c$ valuation, $W, c \models \Phi$ iff
    $W_n, c \models \Phi$. But $W_n \in S = \mod(G)$ implies $W_n, c \models
    \Phi$,
    so $W, c \models \Phi$ too.  Since $\Phi \in \Gamma_c$ was arbitrary, we have $W
    \in \mod(G) = S$ as required.

\end{proof}

\subsection{Proof of \cref{prop:prior_knowledge}}

\begin{proof}
    \leavevmode
    \begin{enumerate}
        \item The ``$\sqsubseteq$" inclusion is just \kbound{}.
              For the ``$\sqsupseteq$" inclusion, note that $G^\sigma_\snd
              \sqsubseteq K^\sigma$ by \soundness{}, and $K^\emptyset
              \sqsubseteq K^\sigma$ by \priorext{}. Hence
              \[
                  G^\sigma_\snd \sqcup K^\emptyset
                  \sqsubseteq
                  K^\sigma.
              \]
              By monotonicity of $\cn$,
              \[
                  \cn(G^\sigma_\snd \sqcup K^\emptyset)
                  \sqsubseteq
                  \cn(K^\sigma)
                  = K^\sigma
              \]
              where we use \closure{} in the final step.

          \item ``if": Suppose $K^\sigma = \cn(G^\sigma_\snd)$ for all
                $\sigma$. Taking $\sigma = \emptyset$ we obtain
                \[
                    K^\sigma = \cn(G^\emptyset_\snd) = \cn(\emptyset).
                \]
                ``only if": Suppose $K^\emptyset = \cn(\emptyset)$. Take any
                sequence $\sigma$. By \kbound{},
                \[
                    K^\sigma
                    \sqsubseteq \cn(G^\sigma_\snd \sqcup \cn(\emptyset))
                    = \cn(G^\sigma_\snd)
                \]
                On the other hand, \soundness{} and \closure{} give
                $\cn(G^\sigma_\snd) \sqsubseteq K^\sigma$. Hence $K^\sigma =
                \cn(G^\sigma_\snd)$.
    \end{enumerate}
\end{proof}

\subsection{Proof of \cref{prop:kconj}}

\begin{proof}
    Suppose an operator satisfies the basic postulates, and take sequences
    $\sigma$ and $\rho$. By \cref{prop:prior_knowledge},
    \[
        K^{\sigma \concat \rho}
        =
        \cn(G^{\sigma \concat \rho}_\snd \sqcup K^\emptyset)
    \]
    Note that $G^{\sigma \concat \rho}_\snd = G^\sigma_\snd \sqcup
    G^\rho_\snd$. Hence we may write
    \begin{align*}
        K^{\sigma \concat \rho}
        &= \cn(G^\sigma_\snd \sqcup G^\rho_\snd \sqcup K^\emptyset) \\
        &= \cn((G^\sigma_\snd \sqcup K^\emptyset) \sqcup (G^\rho_\snd \sqcup K^\emptyset))
    \end{align*}
    By \cref{prop:prior_knowledge} again, we have $K^\sigma = \cn(G^\sigma_\snd
    \sqcup K^\emptyset)$ and $K^\rho = \cn(G^\rho_\snd \sqcup K^\emptyset)$. It
    is easily verified that for any collections $G, D$, we have
    \[
        \cn(G \sqcup D) = \cn(\cn(G) \sqcup \cn(D)).
    \]
    Consequently,
    \begin{align*}
        K^{\sigma \concat \rho}
        &= \cn(\cn(G^\sigma_\snd \sqcup K^\emptyset) \sqcup \cn(G^\rho_\snd \sqcup K^\emptyset)) \\
        &= \cn(K^\sigma \sqcup K^\rho)
    \end{align*}
    as required for \kconj{}.
\end{proof}

\subsection{Proof of \cref{prop:basic_postulates_consequences}}

\begin{proof}
    \leavevmode
    \begin{enumerate}

        \item Suppose $\phi \in K^\sigma_c$ and $\neg\psi \in \cnprop(\phi)$.
              Write $\rho = \sigma \concat \tuple{i, c, \psi}$. By
              \soundness{}, $S_i(\psi) \in K^\rho_c$. By \kconj{}, $\phi \in
              K^\sigma_c \subseteq (K^\sigma \sqcup K^{\tuple{i, c, \psi}})_c
              \subseteq \cn_c(K^\sigma \sqcup K^{\tuple{i, c, \psi}}) =
              K^\rho_c$. Since $\neg\psi \in \cnprop(\phi)$ and $\phi \in
              K^\rho_c$, \closure{} gives $\neg\psi \in K^\rho_c$. Recalling
              from \cref{prop:validities} part
              (\labelcref{item:e_and_s_implies_phi}) that $S_i\psi \land \neg
              \psi \rightarrow \neg E_i(\psi)$, \closure{} gives $\neg
              E_i(\psi) \in K^\rho_c$, as desired.

          \item Suppose $\tuple{i, c, \phi} \in \sigma$ and $E_i(\phi) \in
                B^\sigma_c$. By \soundness{} and \containment{}, $S_i(\phi) \in
                B^\sigma_c$. From \cref{prop:validities} part
                (\labelcref{item:e_and_s_implies_phi}) again we have $E_i(\phi)
                \land S_i(\phi) \rightarrow \phi$. By \closure{}, $\phi \in
                B^\sigma_c$.

    \end{enumerate}
\end{proof}

\subsection{Proof of \cref{thm:model_based_characterisation}}

\begin{proof}
    For ease of notation in what follows, write $\mods_c(\Phi) = \{W \in \W
    \mid W, c \models \Phi\}$.

``if": Suppose an operator $\sigma \mapsto \tuple{B^\sigma, K^\sigma}$ is
model-based. For \closure{}, we need to show that $B^\sigma_c \supseteq
\cn_c(B^\sigma)$ and $K^\sigma_c \supseteq \cn_c(K^\sigma)$, for each $c$. Take
any $\Phi \in \cn_c(B^\sigma)$. Then $\mods(B^\sigma) \subseteq \mods_c(\Phi)$.
From the relevant definitions, one can easily check that $\Y_\sigma \subseteq
\mods(B^\sigma)$, so we have $\Y_\sigma \subseteq \mods_c(\Phi)$. That is, $W,
c \models \Phi$ for all $W \in \Y_\sigma$. By definition of model-based
operators, $\Phi \in B^\sigma_c$. The fact that $K^\sigma_c \supseteq
\cn_c(K^\sigma)$ follows by an identical argument upon noticing that $\X_\sigma
\subseteq \mods(K^\sigma)$.

\containment{} follows from $\X_\sigma \supseteq \Y_\sigma$: if $\Phi \in
K^\sigma_c$ then $W, c \models \Phi$ for all $W \in \X_\sigma$, and in
particular this holds for all $W \in \Y_\sigma$. Hence $\Phi \in B^\sigma_c$,
so $K^\sigma \sqsubseteq B^\sigma$.

``only if": Suppose an operator satisfies \closure{} and
\containment{}. For any $\sigma$, set
\[
    \X_\sigma = \mods(K^\sigma)
\]
\[
    \Y_\sigma = \mods(B^\sigma)
\]
We show the three properties required in \cref{def:model_based}.  $\X_\sigma
\supseteq \Y_\sigma$ follows from \containment{} and the definition of a
model of a collection. For the second property, note that $\Phi \in K^\sigma_c$
iff $\Phi \in \cn_c(K^\sigma)$ by \closure{}, i.e. iff $\mods(K^\sigma)
\subseteq \mods_c(\Phi)$.  By choice of $\X_\sigma$, this holds exactly when
$W, c \models \Phi$ for all $W \in \X_\sigma$, as required. The third property
is proved using an identical argument.
\end{proof}

\subsection{Proof of \cref{thm:conditioning_characterisation}}

First, a preliminary result.

\begin{lemma}
\label{lemma:model_based_elementary}
    For any model-based operator and sequence $\sigma$, $\X_\sigma =
    \mod(K^\sigma)$ iff $\X_\sigma$ is elementary, and $\Y_\sigma =
    \mod(B^\sigma)$ iff $\Y_\sigma$ is elementary.
\end{lemma}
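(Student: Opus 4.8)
The plan is to observe that, for a model-based operator, the collection $K^\sigma$ is precisely the $\lext$-theory of the world-set $\X_\sigma$: by \cref{def:model_based} (equivalently, the biconditional used in \cref{thm:model_based_characterisation}), $\Phi \in K^\sigma_c$ holds exactly when $W, c \models \Phi$ for all $W \in \X_\sigma$. Hence $\mod(K^\sigma)$ is the smallest elementary set containing $\X_\sigma$, and the claim reduces to the statement that a set equals this closure iff it is already elementary. I would prove the $\X_\sigma$/$K^\sigma$ equivalence in full; the $\Y_\sigma$/$B^\sigma$ case is word-for-word identical with $B^\sigma, \Y_\sigma$ replacing $K^\sigma, \X_\sigma$, so I would simply remark that it follows by the same argument. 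The forward direction is immediate: if $\X_\sigma = \mod(K^\sigma)$, then $\X_\sigma$ is the model set of the collection $K^\sigma$, which is exactly the definition of being elementary.

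For the converse, I would assume $\X_\sigma$ is elementary, say $\X_\sigma = \mod(G)$ for some collection $G = \{\Gamma_c\}_{c \in \C}$, and establish the two inclusions. The inclusion $\X_\sigma \subseteq \mod(K^\sigma)$ holds for every model-based operator, with no appeal to elementarity: by the defining property of $K^\sigma$, each $W \in \X_\sigma$ satisfies every $\Phi \in K^\sigma_c$ at $c$, so $W \in \mod(K^\sigma)$. For the reverse inclusion I would first show $G \sqsubseteq K^\sigma$: given $\Phi \in \Gamma_c$, every $W \in \mod(G) = \X_\sigma$ satisfies $W, c \models \Phi$, so $\Phi \in K^\sigma_c$ by the model-based characterization of $K^\sigma$; thus $\Gamma_c \subseteq K^\sigma_c$ for all $c$. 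Since $\mod$ is antitone with respect to $\sqsubseteq$ (a collection carrying more constraints has fewer models, immediate from the definition of $\mod$ as an intersection of satisfaction sets), $G \sqsubseteq K^\sigma$ yields $\mod(K^\sigma) \subseteq \mod(G) = \X_\sigma$. Combining the two inclusions gives $\X_\sigma = \mod(K^\sigma)$.

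I do not expect a serious obstacle here; the proof is essentially one half of the Galois connection between world-sets and collections. The one point to get right is the direction of the containment/monotonicity argument, namely that the defining biconditional forces $G \sqsubseteq K^\sigma$ (not the reverse), so that passing to models \emph{shrinks} rather than enlarges the set. It is precisely the repeated use of the equivalence ``$\Phi \in K^\sigma_c$ iff $W, c \models \Phi$ for all $W \in \X_\sigma$'' that drives both inclusions, and this is exactly the data supplied by \cref{def:model_based}; no use of the closure-condition form of \cref{prop:elementary_characterisation} is needed, only the bare definition of an elementary set as one of the form $\mod(G)$.
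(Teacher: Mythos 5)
Your proof is correct and takes essentially the same approach as the paper's: both rest on the model-based biconditional ($\Phi \in K^\sigma_c$ iff $W, c \models \Phi$ for all $W \in \X_\sigma$) together with the standard duality between collections and their model sets --- the paper compresses this by deriving $K^\sigma = \cn(G)$ and invoking $\mod(\cn(G)) = \mod(G)$, while you unfold the same fact into the two inclusions $\X_\sigma \subseteq \mod(K^\sigma)$ and $G \sqsubseteq K^\sigma \Rightarrow \mod(K^\sigma) \subseteq \mod(G)$. Your handling of the $\Y_\sigma$/$B^\sigma$ case by the identical argument also matches the paper.
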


\begin{proof}
We prove the result for $\X_\sigma$ and $K^\sigma$ only. The ``only if"
direction is clear from the definition of an elementary set. For the ``if"
direction, suppose $\X_\sigma$ is elementary, i.e. $\X_\sigma = \mod(G)$ for
some collection $G$. Since $\Phi \in K^\sigma_c$ iff $\X_\sigma \subseteq
\mod_c(\Phi)$, we have $K^\sigma_c = \cn_c(G)$, i.e. $K^\sigma = \cn(G)$.
Consequently $\mod(K^\sigma) = \mod(\cn(G)) = \mod(G) = \X_\sigma$.
\end{proof}

A result slightly more general than \cref{thm:conditioning_characterisation} is
available, from which \cref{thm:conditioning_characterisation} immediately
follows.

\begin{proposition}
    \label{prop:conditioning_pre_characterisation}
    Suppose an operator satisfies \closure{}, \containment{},
    \kconj{} and \equivpost{}. Then it is an elementary
    conditioning operator if and only if it satisfies \rearr{},
    \duprem{}, \condcons{},
    \incvac{} and \acyc{}.
\end{proposition}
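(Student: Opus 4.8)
The plan is to treat the two directions separately, leaning on the model-based machinery already in place. Since the operator satisfies \closure{} and \containment{}, \cref{thm:model_based_characterisation} shows it is model-based, so the sets $\X_\sigma = \mods(K^\sigma)$ and $\Y_\sigma = \mods(B^\sigma)$ are available and completely determine the operator; the whole argument can then be carried out at the level of these world-sets rather than the syntactic collections.

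For the ``only if" direction I would assume the operator is an elementary conditioning operator and verify each of the five postulates in turn. Here the $\X_\sigma$ and $\Y_\sigma$ are given explicitly as a fixed prior intersected with the conditioning events associated to the inputs in $\sigma$, where the event for an input $\tuple{i, c, \phi}$ is essentially $\mods_c(S_i(\phi))$. Because conditioning is just intersection with these events, commutativity of intersection yields \rearr{} and idempotence yields \duprem{}; \condcons{} and \incvac{} follow by unwinding the event for a single input and checking the support constraint it imposes (in particular, if $\X_\sigma$ already lies inside the event, adding the input is vacuous); and \acyc{} follows from the elementarity of each $\X_\sigma$ and $\Y_\sigma$ together with the partition-equivalence closure condition of \cref{prop:elementary_characterisation}.

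For the ``if" direction -- the substantive one -- I would reconstruct the conditioning structure from the postulates. First I would use \equivpost{} and \acyc{} together with \cref{prop:elementary_characterisation} to show that every $\X_\sigma$ and $\Y_\sigma$ is elementary, so that \cref{lemma:model_based_elementary} gives $\X_\sigma = \mod(K^\sigma)$ and $\Y_\sigma = \mod(B^\sigma)$. Next, \kconj{} (via \cref{prop:prior_knowledge} and \cref{prop:kconj}) lets me decompose $K^\sigma$ along its input sequence, which in world-set terms means $\X_{\sigma \concat \rho} = \X_\sigma \cap \X_\rho$, so that $\X_\sigma$ is the prior $\X_\emptyset$ intersected with the single-input sets $\X_{\tuple{i,c,\phi}}$. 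Then \rearr{} and \duprem{} show this intersection depends only on the set of inputs, and \condcons{} with \incvac{} identify each single-input set as exactly the conditioning event $\mods_c(S_i(\phi))$. Assembling these facts shows the operator coincides with the elementary conditioning operator determined by the prior knowledge and belief sets $\X_\emptyset$ and $\Y_\emptyset$.

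The main obstacle I expect is this ``if" direction, specifically establishing elementarity of all the $\X_\sigma$ and $\Y_\sigma$ uniformly and checking that the conditioning events are literally the intersections forced by the postulates. This is where \acyc{} does the real work: it rules out the circular dependencies between the belief layer ($\Y_\sigma$) and the knowledge layer ($\X_\sigma$) that would otherwise let a non-conditioning operator satisfy the remaining four postulates. Keeping the interaction between belief and knowledge coherent across the entire sequence, rather than only at a single update step, is the delicate point, and I would expect the bulk of the technical effort to go into the inductive argument that propagates elementarity and the conditioning identity from $\sigma$ to $\sigma \concat \tuple{i,c,\phi}$.
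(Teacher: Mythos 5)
The main gap is in your ``if'' direction: you never construct the total preorder $\le$, and what you propose in its place does not work. An elementary conditioning operator is not ``determined by the prior knowledge and belief sets $\X_\emptyset$ and $\Y_\emptyset$'': by definition it requires a single total preorder $\le$ on $\W$ with $\Y_\sigma = \min_{\le}{\X_\sigma}$ for \emph{every} sequence $\sigma$ simultaneously, and $\le$ carries strictly more information than the prior pair (it ranks worlds outside $\Y_\emptyset$, which is exactly what matters once $\sigma$ excludes all of $\Y_\emptyset$). So the substance of this direction is to extract $\le$ from the operator's behaviour across all sequences. The paper does this by defining, for each pair of worlds, a sequence $\rho(W,W')$ collecting all single reports whose knowledge set contains both worlds, declaring $W \mathrel{R} W'$ iff $W \in \Y_{\rho(W,W')}$, taking the transitive closure $\le_0$, proving that $\le_0$-minimality in $\X_\sigma$ coincides with membership in $\Y_\sigma$ (this is where \acyc{} does its real work: it kills cycles in $R$ that would otherwise make a non-minimal world look minimal), and finally extending $\le_0$ to a total preorder preserving strict inequalities. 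None of this appears in your sketch. Moreover, two facts you lean on are false or unavailable: nothing identifies the single-report set $\X_{\tuple{i,c,\phi}}$ with $\mods_c(S_i(\phi))$ (the definition of a conditioning operator imposes no such constraint, and \soundness{} is not among the hypotheses of the proposition); and elementarity of $\X_\sigma$, $\Y_\sigma$ -- which you present as the main obstacle, needing \acyc{} and \cref{prop:elementary_characterisation} -- is actually free, since one takes $\X_\sigma = \mods(K^\sigma)$ and $\Y_\sigma = \mods(B^\sigma)$, which are elementary by definition.

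Your ``only if'' direction also contains a wrong step: \acyc{} does not follow from elementarity or from the partition-equivalence characterisation in \cref{prop:elementary_characterisation}. In the paper it is a consequence of the fixed total preorder: a consistency cycle produces worlds with $W_0 \le W_1 \le \cdots \le W_n \le W_0$, transitivity flattens the chain into $\le$-equivalences, and minimality then transfers around the cycle to give consistency of $K^{\sigma_0} \sqcup B^{\sigma_n}$. Similarly, your reading of \incvac{} (``if $\X_\sigma$ already lies inside the event, adding the input is vacuous'') is off target: the postulate is an AGM-style inclusion/vacuity condition relating $B^{\sigma \concat \rho}$ to $\cn(B^\sigma \sqcup K^\rho)$, and its verification uses minimality and transitivity of $\le$, not containment of $\X_\sigma$ in any conditioning event.
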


\begin{proof}

Take some operator $\sigma \mapsto \tuple{B^\sigma, K^\sigma}$ satisfying
\closure{}, \containment{}, \kconj{} and
\equivpost{}.

``if": Suppose the operator in question additionally satisfies
\rearr{}, \duprem{}, \condcons{},
\incvac{} and \acyc{}. For any $\sigma$, set
\[ \X_\sigma = \mods(K^\sigma) \] \[ \Y_\sigma = \mods(B^\sigma) \]
Then -- by \closure{} and \containment{} as shown in the proof of
\cref{thm:model_based_characterisation} -- our operator is model based
corresponding to this choice of $\X_\sigma$ and $\Y_\sigma$. Clearly both are
elementary. We will construct a total preorder $\le$ over $\W$ such that
$\Y_\sigma = \min_{\le}{\X_\sigma}$; this will show the operator is an
elementary conditioning operator.

First, fix a function $c: \lprop / {\equiv} \to \lprop$ which chooses a fixed
representative of each equivalence class of logically equivalent propositional
formulas, i.e. any mapping such that $c([\phi]_{\equiv}) \equiv \phi$. To
simplify notation, write $\widehat{\phi}$ for $c([\phi]_{\equiv})$. Then $\phi
\equiv \widehat{\phi}$. Write $\widehat{\lprop} = \{\widehat{\phi} \mid \phi
\in \lprop\}$.  Note that $\widehat{\lprop}$ is finite (since we work with only
finitely many propositional variables) and every formula in $\lprop$ is
equivalent to exactly one formula in $\widehat{\lprop}$. For a sequence
$\sigma$, let $\widehat{\sigma}$ be the result of replacing each report
$\tuple{i, c, \phi}$ with $\tuple{i, c, \widehat{\phi}}$. Note that by
\rearr{} and \equivpost{}, $\X_{\widehat{\sigma}} =
\X_\sigma$ and $\Y_{\widehat{\sigma}} = \Y_\sigma$.

Now, for any world $W$, set
\newcommand{\reports}{\mathcal{R}}
\[
    \reports(W)
    = \{
        \tuple{i, c, \phi}
        \in \S \times \C \times \widehat{\lprop}
        \mid
        W \in \X_{\tuple{i, c, \phi}}
    \}
\]
Note that $\reports(W)$ is finite. For any pair of worlds $W_1$, $W_2$, let
$\rho(W_1, W_2)$ be some enumeration of $\reports(W_1) \cap \reports(W_2)$. We
establish some useful properties of $\rho(W_1, W_2)$.

\begin{addmargin}[1em]{1em}
    \begin{claim}
        \label{claim:w_j_in_rho}
        If $\rho(W_1, W_2) \ne \emptyset$, $W_1, W_2 \in \X_{\rho(W_1, W_2)}$.
    \end{claim}
    \begin{proof}
        By \kconj{}, for any sequences $\sigma$, $\rho$ we have
        $K^{\sigma \concat \rho} = \cn(K^\sigma \sqcup K^\rho)$.  Taking the
        models of both sides, we have $\X_{\sigma \concat \rho} = \X_\sigma
        \cap \X_\rho$. It follows that for $\rho(W_1, W_2) \ne \emptyset$,
        \[
            \X_{\rho(W_1, W_2)}
            = \bigcap_{\tuple{i, c, \phi} \in \rho(W_1, W_2)}{\X_{\tuple{i, c,
            \phi}}}
        \]
        If $\tuple{i, c, \phi} \in \rho(W_1, W_2)$ then $W_1, W_2 \in
        \X_{\tuple{i, c, \phi}}$ by definition. Hence $W_1, W_2 \in
        \X_{\rho(W_1, W_2)}$.
    \end{proof}
\end{addmargin}

\begin{addmargin}[1em]{1em}
    \begin{claim}
        \label{claim:exists_delta}
        If a sequence $\sigma$ contains no equivalent reports (i.e. no distinct
        tuples $\tuple{i, c, \phi}$, $\tuple{i, c, \psi}$ with $\phi \equiv
        \psi$) and $W_1, W_2 \in \X_\sigma$, there is a sequence $\delta$ such
        that $W_1, W_2 \in \X_\delta$ and $\rho(W_1, W_2)$ is a permutation of
        $\widehat{\sigma} \concat \delta$.
    \end{claim}
    \begin{proof}
        If $\sigma = \emptyset$ then we can simply take $\delta = \rho(W_1,
        W_2)$. So suppose $\sigma \ne \emptyset$. By the same argument as in
        the proof of \cref{claim:w_j_in_rho}, we have
        \[
            \X_\sigma = \bigcap_{\tuple{i, c, \phi} \in \sigma}{\X_{\tuple{i, c,
            \phi}}}
        \]
        Take any $\tuple{i, c, \phi} \in \widehat{\sigma}$. Then $\phi \in
        \widehat{\lprop}$, and there is $\psi \equiv \phi$ such that $\tuple{i,
        c, \psi} \in \sigma$. By \equivpost{}, we have
        \[
            W_1, W_2
            \in \X_\sigma
            \subseteq \X_{\tuple{i, c, \psi}}
            = \X_{\tuple{i, c, \phi}}
        \]
        i.e. $\tuple{i, c, \phi} \in \reports(W_1) \cap \reports(W_2)$. Hence
        $\tuple{i, c, \phi}$ appears in $\rho(W_1, W_2)$. By the assumption
        that $\sigma$ contains no equivalent reports, $\widehat{\sigma}$
        contains no duplicates. It follows that $\rho(W_1, W_2)$ can be
        permuted so that $\widehat{\sigma}$ appears as a prefix. Taking
        $\delta$ to be the sequence that remains after $\widehat{\sigma}$ in
        this permutation, we clearly have that $\rho(W_1, W_2)$ is a
        permutation of $\widehat{\sigma} \concat \delta$. Since $\sigma \ne
        \emptyset$ implies $\widehat{\sigma} \ne \emptyset$ and thus $\rho(W_1,
        W_2) \ne \emptyset$, by \rearr{}, \kconj{} and
        \cref{claim:w_j_in_rho} we get
        \[
            W_1, W_2 \in \X_{\rho(W_1, W_2)}
            = \X_{\widehat{\sigma} \concat \delta}
            = \X_{\widehat{\sigma}} \cap \X_\delta
            \subseteq \X_\delta
        \]
        and we are done.
    \end{proof}
\end{addmargin}

Now define a relation $R$ on $\W$ by
\[
    W R W' \iff W = W' \text{ or } W \in \Y_{\rho(W, W')}
\]
We have that any world in $\Y_\sigma$ $R$-precedes all worlds $\X_\sigma$.

\begin{addmargin}[1em]{1em}
    \begin{claim}
        \label{claim:y_subset_minimal}
        If $W \in \Y_\sigma$, then for all $W' \in \X_\sigma$ we have $W R W'$
    \end{claim}
    \begin{proof}
        By \rearr{}, \equivpost{} and
        \duprem{}, we may assume without loss of generality that
        $\sigma$ contains no distinct equivalent reports.

        Let $W \in \Y_\sigma$ and $W' \in \X_\sigma$. Then $W \in \X_\sigma$
        too. By \cref{claim:exists_delta} and \rearr{}, there is
        some sequence $\delta$ such that $\Y_{\rho(W, W')} =
        \Y_{\widehat{\sigma} \concat \delta}$ and $W, W' \in \X_\delta$.
        Consequently $W \in \Y_\sigma \cap \X_\delta = \Y_{\widehat{\sigma}}
        \cap \X_\delta$. Thus $B^{\widehat{\sigma}} \sqcup K^\delta$ is
        consistent. From \incvac{} we get
        \[
            \Y_{\widehat{\sigma} \concat \delta} = \Y_{\widehat{\sigma}} \cap
            \X_\delta
        \]
        Thus
        \[
            W \in
            \Y_{\widehat{\sigma}} \cap \X_\delta
            = \Y_{\widehat{\sigma} \concat \delta}
            = \Y_{\rho(W, W')}
        \]
        so $W R W'$ as required.
    \end{proof}
\end{addmargin}

Now let $\le_0$ be the transitive closure of $R$. Then $\le_0$ is a (partial)
preorder. By \cref{claim:y_subset_minimal}, every world in $\Y_\sigma$ is
$\le_0$-minimal in $\X_\sigma$. In fact, the converse is also true.

\begin{addmargin}[1em]{1em}
    \begin{claim}
        \label{claim:minimal_subset_y}
        If $W \in \X_\sigma$ and there is no $W' \in \X_\sigma$ with $W' <_0
        W$, then $W \in \Y_\sigma$.
    \end{claim}
    \begin{proof}
        As before, assume without loss of generality that
        $\sigma$ contains no distinct equivalent reports.

        Take $W$ as in the statement of the claim. Then $\X_\sigma \ne
        \emptyset$, so $\Y_\sigma \ne \emptyset$ by
        \condcons{}. Let $W' \in \Y_\sigma$.  By
        \cref{claim:y_subset_minimal}, $W' R W$ and thus $W' \le_0 W$. But by
        assumption, $W' \not<_0 W$. So we must have $W \le_0 W'$. By definition
        of $\le_0$ as the transitive closure of $R$, there are $W_0, \ldots
        W_n$ such that $W_0 = W$, $W_n = W'$ and
        \[
            W_j R W_{j+1} \qquad (0 \le j < n)
        \]
        Without loss of generality, $n > 0$ and each of the $W_j$ are distinct.
        From the definition of $R$, we therefore have that
        \[
            W_j \in \rho(W_j, W_{j+1}) \qquad (0 \le j < n)
        \]
        Now set
        \begin{align*}
            \rho_j &= \rho(W_j, W_{j+1}) \qquad (0 \le j < n) \\
            \rho_n &= \rho(W_0, W_n)
        \end{align*}
        Since $W' R W$, i.e. $W_n R W_0$, we in fact have $W_j \in \Y_{\rho_j}$
        for all $j$ (including $j = n$). For $j < n$, we also have $W_{j+1} \in
        \X_{\rho_j}$.\footnotemark{}
        \footnotetext{
            If $\rho_j \ne \emptyset$ this follows from
            \cref{claim:w_j_in_rho}. Otherwise, $W_{j+1} \in \Y_{\rho_{j+1}}
            \subseteq \X_{\rho_{j+1}} = \X_{\rho_{j+1} \concat \emptyset} =
            \X_{\rho_{j+1}} \cap \X_\emptyset \subseteq \X_\emptyset =
            \X_{\rho_j}$ by \kconj{}.
        }
        Consequently, for $j < n$ we have
        \[
            W_{j+1} \in \X_{\rho_j} \cap \Y_{\rho_{j+1}}
        \]
        i.e. $K^{\rho_j} \sqcup B^{\rho_{j+1}}$ is consistent. Moreover, $W_0
        \in \X_{\rho_n} \cap \Y_{\rho_0}$, so $K^{\rho_n} \sqcup B^{\rho_0}$ is
        consistent. We can now apply \acyc{}: we get that $K^{\rho_0}
        \sqcup B^{\rho_n}$ is also consistent. On the one hand,
        \incvac{} and consistency of $K^{\rho_n} \sqcup
        B^{\rho_0}$ gives
        \[
            B^{\rho_0 \concat \rho_n}
            = \cn(B^{\rho_0} \sqcup K^{\rho_n})
        \]
        On the other, consistency of $B^{\rho_n} \sqcup K^{\rho_0}$ and
        \rearr{} gives
        \[
            B^{\rho_0 \concat \rho_n}
            = B^{\rho_n \concat \rho_0}
            = \cn(B^{\rho_n} \sqcup K^{\rho_0})
        \]
        Combining these and taking models, we find
        \[
            \Y_{\rho_0} \cap \X_{\rho_n}
            =
            \Y_{\rho_n} \cap \X_{\rho_0}
        \]
        In particular, since $W_0$ lies in the set on the left-hand side, we
        have $W_0 \in \Y_{\rho_n}$.

        Now, since $W_0, W_n \in \X_\sigma$ and $\rho_n = \rho(W_0, W_n)$,
        \cref{claim:exists_delta} gives that there is $\delta$ with $W_0, W_n
        \in \X_{\delta}$ such that $\rho_n$ is a permutation of
        $\widehat{\sigma} \concat \delta$. Recalling that $W_n = W' \in
        \Y_\sigma = \Y_{\widehat{\sigma}}$ by assumption, we have $W_n \in
        \Y_{\widehat{\sigma}} \cap \X_\delta$, i.e. $B^{\widehat{\sigma}}
        \sqcup K^\delta$ is consistent. Applying \incvac{} once
        more, we get
        \[
            B^{\rho_n}
            = B^{\widehat{\sigma} \concat \delta}
            = \cn(B^{\widehat{\sigma}} \sqcup K^\delta)
            = \cn(B^{\sigma} \sqcup K^\delta)
        \]
        Taking models of both sides,
        \[
            \Y_{\rho_n} = \Y_{\sigma} \cap \X_\delta \subseteq \Y_\sigma
        \]
        But we already saw that $W_0 \in \Y_{\rho_n}$. Hence $W_0 \in
        \Y_\sigma$. Since $W_0 = W$, we are done.
    \end{proof}
\end{addmargin}

To complete the proof we extend $\le_0$ to a \emph{total} preorder and show
that this does not affect the minimal elements of each $\X_\sigma$. Indeed, let
$\le$ be any total preorder extending $\le_0$ and preserving strict
inequalities, i.e. $\le$ such that
\begin{inlinelist}
    \item $W \le_0 W'$ implies $W \le W'$; and
    \item\label{item:strict_ineq_preserved} $W <_0 W'$ implies $W < W'$.
\end{inlinelist}\footnote{
    Such $\le$ always exists. Indeed, note that $\le_0$ induces a partial order
    on the equivalence classes of $\W$ with respect to the symmetric part of
    $\le_0$ given by $W \simeq_0 W'$ iff $W \le_0 W'$ and $W' \le_0 W$. This
    partial order can be extended to a linear order $\le^*$ on the equivalence
    classes. Taking $W \le W'$ iff $[W]  \le^* [W']$, we obtain a total
    preorder on $\W$ with the desired properties.
}

\begin{addmargin}[1em]{1em}
    \begin{claim}
        For any sequence $\sigma$, $\Y_\sigma = \min_{\le}{\X_\sigma}$
    \end{claim}
    \begin{proof}
        Take any $\sigma$. For the left-to-right inclusion, take $W \in
        \Y_\sigma$. Then $W \in \X_\sigma$. Let $W' \in \X_\sigma$. By
        \cref{claim:y_subset_minimal}, $W R W'$, so $W \le_0 W'$ and $W \le
        W'$. Hence $W$ is $\le$-minimal in $\X_\sigma$.

        For the right-to-left inclusion, take $W \in \min_{\le}{\X_\sigma}$.
        Then for any $W' \in \X_\sigma$ we have $W \le W'$. In particular, $W'
        \not< W$. By property \labelcref{item:strict_ineq_preserved} of $\le$,
        we have $W' \not<_0 W$. Since $W'$ was an arbitrary member of
        $\X_\sigma$ and $W \in \X_\sigma$, the conditions of
        \cref{claim:minimal_subset_y} are satisfied, and we get $W \in
        \Y_\sigma$.
    \end{proof}
\end{addmargin}

This shows that our operator is an elementary conditioning operator as
required.

``only if": Now  suppose the operator is an elementary conditioning operator.
i.e. there is a total preorder $\le$ on $\W$ and a mapping $\sigma \mapsto
\tuple{\X_\sigma, \Y_\sigma}$ such that for each $\sigma$, $\Y_\sigma =
\min_{\le}{X_\sigma}$, $\X_\sigma$ and $\Y_\sigma$ are elementary, and
$K^\sigma$, $B^\sigma$ are determined by $\X_\sigma$, $\Y_\sigma$ respectively
according to \cref{def:model_based}. By elementariness and
\cref{lemma:model_based_elementary}, $\X_\sigma = \mod(K^\sigma)$ and $\Y_\sigma
= \mod(B^\sigma)$.

The following claim will be useful at various points.

\begin{addmargin}[1em]{1em}
    \begin{claim}
        \label{claim:x_equal_implies_output_equal}
        Suppose $\sigma$ and $\rho$ are such that $\X_\sigma = \X_\rho$. Then
        $K^\sigma = K^\rho$ and $B^\sigma = B^\rho$.
    \end{claim}
    \begin{proof}
        Since the total preorder $\le$ is fixed, we have
        \[
            Y_\sigma
            = \min\nolimits_{\le}{\X_\sigma}
            = \min\nolimits_{\le}{\X_\rho}
            = \Y_\rho
        \]
        Now, $\X_\sigma = \X_\rho$ means $\mod(K^\sigma) = \mod(K^\rho)$, so
        $\cn(K^\sigma) = \cn(K^\rho)$. By \closure{}, $K^\sigma = K^\rho$.
        Similarly, $\Y_\sigma = \Y_\rho$ gives $B^\sigma = B^\rho$.
    \end{proof}
\end{addmargin}

We take the postulates to be shown in turn.

\begin{itemize}
    \item \rearr{}: Suppose $\sigma$ is a permutation of $\rho$.
          Without loss of generality, $\sigma, \rho \ne \emptyset$. Repeated
          application of \kconj{} gives
          \[
            \X_{\sigma} = \bigcap_{\tuple{i, c, \phi} \in \sigma}{\X_{\tuple{i,
            c, \phi}}}
          \]
          Since $\sigma$ and $\rho$ contain exactly the same reports -- just in
          a different order -- commutativity and associativity of intersection
          of sets gives $\X_\sigma = \X_\rho$. \rearr{} follows
          from \cref{claim:x_equal_implies_output_equal}.

    \item \duprem{}: Let $\sigma$, $\rho_1$ and $\rho_2$ be as
          in the statement of \duprem{}. Then by
          \kconj{},
          \begin{align*}
            \X_{\rho_2}
            &= \X_{\rho_1 \concat \tuple{i, c, \phi}} \\
            &= \X_{\rho_1} \cap \X_{\tuple{i, c, \phi}} \\
            &= \X_{\sigma \concat \tuple{i, c, \phi}} \cap \X_{\tuple{i, c,
            \phi}} \\
            &= \X_{\sigma} \cap \X_{\tuple{i, c, \phi}} \cap \X_{\tuple{i, c,
            \phi}} \\
            &= \X_{\sigma} \cap \X_{\tuple{i, c, \phi}} \\
            &= \X_{\rho_1}
          \end{align*}
          and we may conclude by \cref{claim:x_equal_implies_output_equal}.
    \item \condcons{}: Suppose $K^\sigma$ is consistent,
          i.e. $\X_\sigma \ne \emptyset$. Since $\W$ is finite, $\X_\sigma$ is
          finite and thus some $\le$-minimal world must exist in $\X_\sigma$.
          Hence $\Y_\sigma \ne \emptyset$, so $B^\sigma$ is consistent.
    \item \incvac{}: Take any sequences $\sigma$, $\rho$. First
          we show $B^{\sigma \concat \rho} \sqsubseteq \cn(B^\sigma \sqcup
          K^\rho)$, or equivalently, $\Y_{\sigma \concat \rho} \supseteq
          \Y_\sigma \cap \X_\rho$. Suppose $W \in \Y_\sigma \cap \X_\rho$.
          Since $\Y_\sigma \subseteq \X_\sigma$, we have $W \in \X_\sigma \cap
          \X_\rho = \X_{\sigma \concat \rho}$ by \kconj{}. We need
          to show $W$ is minimal. Take any $W' \in \X_{\sigma \concat \rho}$.
          Then $W' \in \X_{\sigma}$, so $W \in \Y_\sigma =
          \min_{\le}{\X_\sigma}$ gives $W \le W'$. Hence $W \in
          \min_{\le}{X_{\sigma \concat \rho}} = \Y_{\sigma \concat \rho}$.

          Now suppose $B^\sigma \sqcup K^\rho$ is consistent, i.e. $\Y_\sigma
          \cap \X_\rho \ne \emptyset$. Take some $\widehat{W} \in \Y_\sigma
          \cap \X_\rho$. We need to show $B^{\sigma \concat \rho} \sqsupseteq
          \cn(B^\sigma \sqcup K^\rho)$, i.e. $\Y_{\sigma \concat \rho}
          \subseteq \Y_\sigma \cap \X_\rho$. To that end, let $W \in \Y_{\sigma
          \concat \rho}$. Then $W \in \X_{\sigma \concat \rho} = \X_\sigma \cap
          \X_\rho \subseteq \X_\rho$, so we only need to show $W \in
          \Y_\sigma$. Take any $W' \in \X_\sigma$. Then $\widehat{W} \in
          \Y_\sigma$ gives $\widehat{W} \le W'$. But $\widehat{W} \in \X_\sigma
          \cap \X_\rho = \X_{\sigma \concat \rho}$ and $W \in \Y_{\sigma
          \concat \rho}$ gives $W \le \widehat{W}$. By transitivity of $\le$,
          we have $W \le W'$.  Hence $W \in \min_{\le}{\X_\sigma} = \Y_\sigma$.

    \item \acyc{}: Let $\sigma_0, \ldots, \sigma_n$ be as in the statement
          of \acyc{}. Without loss of generality, $n > 0$. Then there are
          $W_0, \ldots, W_n$ such that
          \begin{align*}
            W_j &\in \X_{\sigma_j} \cap \Y_{\sigma_{j+1}} \qquad (0 \le j < n) \\
            W_n &\in \X_{\sigma_n} \cap \Y_{\sigma_0}
          \end{align*}
          Note that $W_j \in \X_{\sigma_j}$ for all $j$. For $j < n$, we also
          have $W_j \in \Y_{\sigma_{j+1}} = \min_{\le}{X_{\sigma_{j+1}}}$. It
          follows that $W_j \le W_{j+1}$ for such $j$, so
          \[
            W_0 \le \cdots \le W_n
          \]
          But we also have $W_n \in \Y_{\sigma_0} = \min_{\le}{\X_{\sigma_0}}$
          and $W_0 \in \X_{\sigma_0}$, so $W_n \le W_0$. By transitivity of
          $\le$, the chain flattens: we have
          \[
            W_0 \simeq \cdots \simeq W_n
          \]
          Now note that since $W_{n-1} \in \Y_{\sigma_n}$, $W_{n - 1}$ is
          minimal in $\X_{\sigma_n}$. But $W_n \in \X_{\sigma_n}$ and $W_{n -
          1} \simeq W_n$ by the above, so in fact $W_n \in \Y_{\sigma_n}$ too.
          Hence
          \begin{align*}
            W_n
            &\in \Y_{\sigma_0} \cap \Y_{\sigma_n} \\
            &\subseteq \X_{\sigma_0} \cap \Y_{\sigma_n} \\
            &= \mod(K^{\sigma_0} \sqcup B^{\sigma_n})
          \end{align*}
          i.e. $K^{\sigma_0} \sqcup B^{\sigma_n}$ is consistent, as required
          for \acyc{}.
\end{itemize}

\end{proof}

\subsection{Proof of \cref{prop:partition_equiv_tpo_implies_elementariness}}

\begin{proof}
    We use the characterisation of elementary sets from
    \cref{prop:elementary_characterisation}. Take $S \subseteq \W$ elementary.
    Suppose $W \in \W$, $W_1, W_2 \in \min_{\le}{S}$ are such that $W$ is
    partition-equivalent to both $W_1, W_2$ and $W$ is a valuation combination
    from $\{W_1, W_2\}$. By hypothesis we have $W \simeq W_1 \simeq W_2$.

    Now since $\min_{\le}{S} \subseteq S$, we have $W_1, W_2 \in S$. Since $S$
    is elementary, $W \in S$. But now $W \simeq W_1$ and $W_1 \in
    \min_{\le}{S}$ gives $W \in \min_{\le}{S}$. This shows the required closure
    property for $\min_{\le}{S}$, and we are done.
\end{proof}

\subsection{Proof of \cref{prop:kconj_conditioning_implies_score_based}}

\begin{proof}
    Suppose an operator with \kconj{} is an elementary conditioning operator
    corresponding to some mapping $\sigma \mapsto \tuple{\X_\sigma, \Y_\sigma}$
    and total preorder $\le$, i.e. where $\X_\sigma$ and $\Y_\sigma$ are
    elementary and $\Y_\sigma = \min_{\le}{\X_\sigma}$ for any $\sigma$. Write
    \[
        k(W) = |\{W' \in \W \mid W' \le W\}|
    \]
    Then we have $W \le W'$ iff $k(W) \le k(W')$. Set
    \[
        r_0(W) = \begin{cases}
            \infty,& W \notin \X_\emptyset \\
            k(W),& W \in \X_\emptyset
        \end{cases}
    \]
    and
    \[
        d(W, \tuple{i, c, \phi}) = \begin{cases}
            \infty,& W \notin \X_{\tuple{i, c, \phi}} \\
            0,& W \in \X_{\tuple{i, c, \phi}} \\
        \end{cases}
    \]
    For the empty sequence, we clearly have $r_\emptyset(W) < \infty$ iff $W
    \in \X_\emptyset$. By construction of $r_0$ we also have $\argmin_{W \in
    \X_\emptyset}{r_\emptyset(W)} = \min_{\le}{X_\emptyset} = \Y_\emptyset$.

    Now take any sequence $\sigma \ne \emptyset$. By elementariness and
    \cref{lemma:model_based_elementary}, we have $\mod(K^\rho) = \X_\rho$ for
    any $\rho$. Repeated applications of \kconj{} therefore gives
    \[
        \X_\sigma
        = \bigcap_{\tuple{i, c, \phi} \in \sigma}{\X_{\tuple{i, c, \phi}}}
    \]
    We also have $\X_\sigma = \X_{\sigma \concat \rho} = \X_\sigma \cap \X_\emptyset
    \subseteq \X_\emptyset$. Since $r_0(W) < \infty$ iff $W \in \X_\emptyset$
    and $d(W, \tuple{i, c, \phi}) < \infty$ iff $W \in \X_{\tuple{i, c,
    \phi}}$, it follows that $r_\sigma(W) < \infty$ iff $W \in \X_\sigma$.

    Finally, note that if $W \in \X_\sigma$ then $r_\sigma(W) = r_0(W)$. Hence
    $\argmin_{W \in \X_\sigma}{r_\sigma(W)} = \argmin_{W \in \X_\sigma}{r_0(W)}
    = \min_{\le}{\X_\sigma} = \Y_\sigma$, and the operator is score-based.
\end{proof}

\subsection{Proof of \cref{prop:examples_satisfy_agm}}

We require some preliminary results. For a case $c \in \C$ and valuation $v \in
\vals$, write $\W_{c\ :\ v} = \{W \in \W \mid v^W_c = v\}$ for the set of
worlds whose $c$ valuation is $v$.

\begin{lemma}
    \label{lemma:model_based_models_of_proppart}
    For any model-based operator, sequence $\sigma$, case $c$, and valuation
    $v\ in \vals$,
    \[
        v \in \propmods\proppart{B^\sigma_c}
        \iff
        \Y_\sigma \cap \W_{c\ :\  v} \ne \emptyset
    \]
\end{lemma}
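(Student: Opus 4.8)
The plan is to route both sides of the biconditional through the set of $c$-valuations realised by worlds in $\Y_\sigma$, namely $V = \{v^W_c \mid W \in \Y_\sigma\}$, and to observe that the claimed equivalence is just the statement $v \in \propmods\proppart{B^\sigma_c} \iff v \in V$. First I would recall from \cref{def:model_based} that, for a model-based operator, $\Phi \in B^\sigma_c$ iff $W, c \models \Phi$ for every $W \in \Y_\sigma$. Specialising to a propositional $\phi$ and using that $W, c \models \phi$ iff $v^W_c \in \propmods(\phi)$, this says a propositional $\phi$ lies in $\proppart{B^\sigma_c}$ exactly when $V \subseteq \propmods(\phi)$. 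Hence $v \in \propmods\proppart{B^\sigma_c}$ iff $v \in \propmods(\phi)$ for every propositional $\phi \in B^\sigma_c$, and the right-hand side $\Y_\sigma \cap \W_{c\ :\ v} \ne \emptyset$ is by definition just $v \in V$.

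For the ``$\Leftarrow$'' direction (the easy one) I would suppose $\Y_\sigma \cap \W_{c\ :\ v} \ne \emptyset$, so $v = v^W_c$ for some $W \in \Y_\sigma$ and thus $v \in V$. Then for any propositional $\phi \in B^\sigma_c$ we have $V \subseteq \propmods(\phi)$, whence $v \in \propmods(\phi)$; as $\phi$ was arbitrary this yields $v \in \propmods\proppart{B^\sigma_c}$.

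The ``$\Rightarrow$'' direction carries the real content, and I would argue by contraposition. Assuming $\Y_\sigma \cap \W_{c\ :\ v} = \emptyset$, i.e.\ $v \notin V$, I want a single propositional formula in $B^\sigma_c$ that $v$ fails. Here I rely on the finiteness of the propositional vocabulary: exactly as with the formulas $\phi_U$ appearing in the proof of \cref{prop:elementary_characterisation}, there is a propositional $\phi$ with $\propmods(\phi) = V$. By construction every $W \in \Y_\sigma$ has $v^W_c \in V = \propmods(\phi)$, so $\phi \in B^\sigma_c$, whereas $v \notin V = \propmods(\phi)$; hence $v \notin \propmods\proppart{B^\sigma_c}$, as required.

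The only subtlety — and the main thing to get right rather than a genuine obstacle — is the existence of this separating formula together with the degenerate case $\Y_\sigma = \emptyset$. Both are handled uniformly by finiteness of $\vals$, which guarantees that \emph{every} subset of valuations is $\propmods(\phi)$ for some propositional $\phi$: when $\Y_\sigma = \emptyset$ we have $V = \emptyset$, so the chosen $\phi$ is a contradiction, lies vacuously in $B^\sigma_c$, and is satisfied by no valuation, consistent with the right-hand side being false.
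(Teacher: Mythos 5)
Your proposal is correct and takes essentially the same approach as the paper: the ``$\Leftarrow$'' direction is identical, and the ``$\Rightarrow$'' direction is the same contrapositive argument, exhibiting a propositional formula that lies in $B^\sigma_c$ (because every world in $\Y_\sigma$ satisfies it at $c$) yet excludes $v$. The only difference is cosmetic -- the paper's separating formula has models $\vals \setminus \{v\}$ whereas yours has models $V = \{v^W_c \mid W \in \Y_\sigma\}$ -- and both rest on the same finiteness fact that every subset of $\vals$ is $\propmods(\phi)$ for some propositional $\phi$.
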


\begin{proof}
    ``$\implies$": We show the contrapositive. Suppose $\Y_\sigma \cap \W_{c\
    :\ v} = \emptyset$. Let $\psi$ be any propositional formula such that
    $\propmods(\psi) = \vals \setminus \{v\}$. Now for any $W \in \Y_\sigma$,
    we have $W \notin \W_{c\ :\  v}$, i.e. $v^W_c \ne v$. Hence $v^W_c \in
    \propmods(\psi)$, so $W, c \models \psi$. By definition of the belief set
    of a model-based operator, we have $\psi \in B^\sigma_c$. But $\psi$ is a
    propositional formula, so $\psi \in \proppart{B^\sigma_c}$. Since $v \notin
    \propmods(\psi)$, we have $v \notin \propmods\proppart{B^\sigma_c}$.

    ``$\impliedby$": Suppose there is some $W \in \Y_\sigma \cap \W_{c\ :\
    v}$. Let $\phi \in \proppart{B^\sigma_c}$. Then, in particular, $\phi \in
    B^\sigma_c$, so $W, c \models \phi$ by $W \in \Y_\sigma$ and the definition
    of the model-based belief set. That is, $v = v^W_c \in \propmods(\phi)$.
    Since $\phi \in \proppart{B^\sigma_c}$ was arbitrary, we have $v \in
    \propmods\proppart{B^\sigma_c}$.

\end{proof}

We have a sufficient condition for \agm{} for score-based operators.

\begin{lemma}
    \label{lemma:score_based_agm_sufficient_conditions}
    Suppose a score-based operator is such that for each $c \in \C$ and $\phi
    \in \lprop$ there is a constant $M \in \mathbb{N}$ with
    \[
        d(W, \tuple{\ast, c,  \phi})
        = \begin{cases}
            M,& W, c \models \phi  \\
            \infty,& W, c \models \neg \phi
        \end{cases}
    \]
    for all $W$. Then \agm{} holds.
\end{lemma}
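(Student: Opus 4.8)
The plan is to exhibit, for each case $c \in \C$ and each sequence $\sigma$, a faithful total preorder on valuations under which revision by an authoritative report $\tuple{\ast, c, \phi}$ returns exactly the minimal $\phi$-models; the Katsuno--Mendelzon representation theorem then yields \agm{} in one stroke. Recall that a score-based operator is governed by a rank that decomposes additively, $r_{\sigma \concat \rho}(W) = r_\sigma(W) + \sum_{\tuple{i, c, \phi} \in \rho} d(W, \tuple{i, c, \phi})$, with $\X_\sigma = \{W \mid r_\sigma(W) < \infty\}$ and $\Y_\sigma = \argmin_{W} r_\sigma(W)$. Fixing $c$, I would first transfer the world-rank to valuations by setting $g_\sigma(v) = \min\{r_\sigma(W) \mid v^W_c = v\}$ (with $\min\emptyset = \infty$) and defining $v \preceq_\sigma v'$ iff $g_\sigma(v) \le g_\sigma(v')$. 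Since $g_\sigma$ takes values in $\mathbb{N} \cup \{\infty\}$, this is automatically a total preorder on $\vals$.

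The first substantive step is to compute the effect of appending $\tuple{\ast, c, \phi}$. By the hypothesis on $d$, for every $W$ we have $r_{\sigma \concat \tuple{\ast, c, \phi}}(W) = r_\sigma(W) + M$ when $W, c \models \phi$, and $r_{\sigma \concat \tuple{\ast, c, \phi}}(W) = \infty$ when $W, c \models \neg\phi$. Because $M \in \mathbb{N}$ is finite, this gives $\X_{\sigma \concat \tuple{\ast, c, \phi}} = \X_\sigma \cap \{W \mid W, c \models \phi\}$, and -- crucially -- the uniform shift by $M$ means $r_{\sigma \concat \tuple{\ast, c, \phi}}$ and $r_\sigma$ induce the same ordering on this set. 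Hence $\Y_{\sigma \concat \tuple{\ast, c, \phi}}$ consists exactly of the $r_\sigma$-minimal worlds among those $W \in \X_\sigma$ with $v^W_c \in \propmods(\phi)$. Translating to valuations via \cref{lemma:model_based_models_of_proppart}, which identifies $\propmods\proppart{B^\rho_c}$ with $\{v^W_c \mid W \in \Y_\rho\}$, I would conclude
\[
    \propmods\proppart{B^{\sigma \concat \tuple{\ast, c, \phi}}_c}
    = \min\nolimits_{\preceq_\sigma}\propmods(\phi).
\]

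Next I would verify that $\preceq_\sigma$ is faithful to the current beliefs $\proppart{B^\sigma_c}$. Applying \cref{lemma:model_based_models_of_proppart} directly, the set $\propmods\proppart{B^\sigma_c} = \{v^W_c \mid W \in \Y_\sigma\}$ is precisely $\{v \mid g_\sigma(v) = \min_{v'} g_\sigma(v')\}$, i.e. the $\preceq_\sigma$-minimal valuations over all of $\vals$; moreover any model achieves $g_\sigma(v) = \min_{v'} g_\sigma(v') < \infty$ and so lies strictly $\preceq_\sigma$-below every non-model. With the min-representation of the previous paragraph and this faithfulness in hand, the Katsuno--Mendelzon theorem delivers all of the AGM revision postulates simultaneously, so \agm{} holds.

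The main obstacle -- and the only place the special form of $d$ is used -- is the second step: one must see that the penalty being a single constant $M$ on all $\phi$-compatible worlds, rather than a $W$-dependent finite value, is exactly what makes the revision minimise a preorder depending only on $\sigma$ and $c$ and \emph{not} on the revising formula $\phi$. This $\phi$-independence is what secures the supplementary postulates governing the interaction of $K \ast \phi$ with $K \ast (\phi \land \psi)$; a non-uniform penalty would perturb the order among $\phi$-worlds and break them. I would finally dispatch the degenerate cases separately: when $\X_\sigma = \emptyset$ (inconsistent $K^\sigma$) or when $g_\sigma(v) = \infty$ for all $v \in \propmods(\phi)$, both sides collapse to the inconsistent belief set, matching the consistency clauses of \agm{}.
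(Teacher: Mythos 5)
Your proposal is correct and follows essentially the same route as the paper's own proof: the paper likewise transfers the world rank to a valuation rank at case $c$ (its $k(v) = \min\{r_\sigma(W) \mid W \in \X_\sigma \cap \W_{c\ :\ v}\}$ agrees with your $g_\sigma(v)$, since $r_\sigma(W) = \infty$ exactly off $\X_\sigma$), shows that $\propmods\proppart{B^\sigma_c}$ and $\propmods\proppart{B^{\sigma \concat \tuple{\ast, c, \phi}}_c}$ are the $\preceq$-minimal valuations overall and among $\propmods(\phi)$ respectively, and concludes via the faithful-preorder (Katsuno--Mendelzon) representation, with your ``uniform shift by $M$'' observation being a cleaner packaging of the paper's explicit inequality chasing. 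The one blemish is your degenerate-case remark: when $\neg\phi \in K^\sigma_c$ the operator's output is indeed inconsistent, but Katsuno--Mendelzon revision by a consistent $\phi$ never is, so the two sides do \emph{not} ``both collapse''; such $\phi$ are instead simply outside the scope of \agm{}, whose precondition $\neg\phi \notin K^\sigma_c$ is exactly why the paper restricts attention to such $\phi$ from the start (and notes that otherwise \agm{} holds trivially).
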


\begin{proof}
    Take a score-based operator with the stated property. Let $\sigma$ be a
    sequence and take $c \in \C$. Without loss of generality, there is some
    $\phi \in \lprop$ such that $\neg\phi \notin K^\sigma_c$ (otherwise \agm{}
    trivially holds). Since any score-based operator is model-based and
    therefore satisfies \closure{}, we have that $K^\sigma$ is inconsistent iff
    $K^\sigma_c = \lext$. But since $K^\sigma_c$ does not contain $\neg\phi$,
    it must be the case that $K^\sigma$ is consistent.

    Now, set
    \[
        k(v)
        = \min\{
            r_\sigma(W)
            \mid
            W \in \X_\sigma \cap \W_{c\ :\ v}
        \}
    \]
    where $\min\emptyset =  \infty$. Note that $k(v) = \infty$ if and only if
    $\X_\sigma \cap \W_{c\ :\  v} = \emptyset$. Then $k$ defines a total
    preorder $\preceq$ on valuations, where $v \preceq v'$ iff $k(v) \le
    k(v')$. Define a propositional revision operator $\star$ for
    $\proppart{B^\sigma_c}$ by
    \[
        \proppart{B^\sigma_c} \star \phi = \{
            \psi \in \lprop
            \mid
            \min\nolimits_{\preceq}{\propmods(\phi)} \subseteq \propmods(\psi)
        \}
    \]
    To show that $\star$ satisfies the AGM postulates (for
    $\proppart{B^\sigma_c}$) it is sufficient to show that the models of
    $\proppart{B^\sigma_c}$ are exactly the $\preceq$-minimal valuations.

    \begin{addmargin}[1em]{1em}
        \begin{claim}
            $\propmods\proppart{B^\sigma_c} = \min_{\preceq}{\vals}$.
        \end{claim}
        \begin{proof}

            ``$\subseteq$": let $v \in \propmods\proppart{B^\sigma_c}$. By
            \cref{lemma:model_based_models_of_proppart}, there is some $W \in
            \Y_\sigma \cap \W_{c\ :\  v}$. Since $W \in \X_\sigma$ too, by
            definition of $k$ we have $k(v) \le r_\sigma(W) < \infty$. Now let
            $v' \in \vals$. Without loss of generality assume $k(v') < \infty$.
            Then there is some $W' \in \X_\sigma \cap \W_{c\ :\  v'}$ such that
            $k(v') = r_\sigma(W')$. But $W' \in \X_\sigma$ and $W \in
            \Y_\sigma$ gives $r_\sigma(W) \le r_\sigma(W')$, so
            \[
                k(v) \le r_\sigma(W) \le r_\sigma(W') = k(v')
            \]
            i.e. $v \preceq v'$. Hence $v$ is $\preceq$-minimal.

            ``$\supseteq$": let $v \in \min_{\preceq}\vals$. Since $K^\sigma$
            is consistent, there is some $\hat{W} \in \X_\sigma$.  Writing
            $\hat{v} = v^{\hat{W}}_c$, we have $\hat{W} \in \X_\sigma \cap
            \W_{c\ :\ \hat{v}}$, so $v \preceq \hat{v}$ implies
            \[
                k(v)
                \le k(\hat{v})
                \le r_\sigma(\hat{W})
                < \infty
            \]
            Hence there must be some $W \in \X_\sigma \cap \W_{c\ :\ v}$ such
            that $k(v) = r_\sigma(W)$. We claim that, in fact, $W \in
            \Y_\sigma$. Indeed, for any $W' \in \X_\sigma$ we have $v \preceq
            v^{W'}_c$, so
            \[
                r_\sigma(W)
                = k(v)
                \le k(v^{W'}_c)
                \le r_\sigma(W')
            \]
            That is, $W \in \Y_\sigma \cap \W_{c\ :\  v}$. By
            \cref{lemma:model_based_models_of_proppart}, $v \in
            \propmods\proppart{B^\sigma_c}$.

        \end{proof}
    \end{addmargin}

    So, $\star$ is indeed an AGM operator for $\proppart{B^\sigma_c}$. Now take
    $\phi \in \lprop$ such that $\neg\phi \notin K^\sigma_c$. Write $\rho =
    \sigma \concat \tuple{\ast, c, \phi}$. We claim the following.

    \begin{addmargin}[1em]{1em}
        \begin{claim}
            \label{claim:propmods_rho_min_models_of_phi}
            $\propmods\proppart{B^\rho_c} =
            \min\nolimits_{\preceq}{\propmods(\phi)}$.
        \end{claim}
        \begin{proof}
            ``$\subseteq$": let $v \in \propmods\proppart{B^\rho_c}$.  By
            \cref{lemma:model_based_models_of_proppart} again, there is some $W
            \in \Y_\rho \cap \W_{c\ :\  v}$. Since $\tuple{\ast, c, \phi} \in
            \rho$ and $d(W, \tuple{\ast, c, \phi}) \le r_\rho(W) < \infty$, we
            must have $W, c \models \phi$ by the assumed property of the score
            function $d$. Hence $v = v^W_c \in \propmods(\phi)$.

            Now since $\Y_\rho \subseteq \X_\rho$, we have $W \in \Y_\rho
            \subseteq \X_\rho \subseteq \X_\sigma$, so $W \in \X_\sigma \cap
            \W_{c\ :\  v}$. By definition of $k$, we have $k(v) \le
            r_\sigma(W)$. Take any $v' \in \propmods(\phi)$. Without loss of
            generality, assume $k(v') < \infty$, so that there is some $W' \in
            \X_\sigma \cap \W_{c\ :\  v'}$ with $k(v') = r_\sigma(W')$. Since
            $v^{W'}_c = v' \in \propmods(\phi)$, we have $W', c \models \phi$.
            Consequently, by the property of $d$ again, $d(W', \tuple{\ast, c,
            \phi}) = M$. Since $W' \in \X_\sigma$ gives $r_\sigma(W') <
            \infty$, it follows that
            \[
                r_\rho(W') = r_\sigma(W') + M < \infty
            \]
            so $W' \in \X_\rho$.

            Recall that $W, c \models \phi$ too, so $d(W, \tuple{\ast, c,
            \phi}) = M$ also. From $W \in \Y_\rho$ and $W' \in \X_\rho$ we get
            \begin{align*}
                r_\sigma(W)
                &= r_\rho(W) - M \\
                &\le r_\rho(W') - M \\
                &= r_\rho(W') - d(W', \tuple{\ast, c, \phi}) \\
                &= r_\sigma(W')
            \end{align*}
            This yields
            \[
                k(v) \le r_\sigma(W) \le r_\sigma(W') = k(v')
            \]
            and $v \preceq v'$ as required.

            ``$\supseteq$": let $v \in \min_{\preceq}{\propmods(\phi)}$. Since
            $\neg\phi \notin K^\sigma_c$, there is some $\hat{W} \in \X_\sigma$
            such that $\hat{W}, c \models \phi$. Writing $\hat{v} =
            v^{\hat{W}}_c$, we have $\hat{v} \in \propmods(\phi)$. Hence $v
            \preceq \hat{v}$. This implies
            \[
                k(v) \le k(\hat{v}) \le r_\sigma(\hat{W}) < \infty
            \]
            so there must be some $W \in \X_\sigma \cap \W_{c\ :\  v}$ with
            $k(v) = r_\sigma(W)$. Since $v^W_c = v \in \propmods(\phi)$, we
            have $W, c \models \phi$. By the assumed property of $d$, we get
            $d(W, \tuple{\ast, c, \phi}) = M$. Hence
            \[
                r_\rho(W)
                = r_\sigma(W) + d(W, \tuple{\ast, c, \phi})
                = r_\sigma(W) + M
                < \infty
            \]
            so $W \in \X_\rho$ too. We will show that $W \in \Y_\rho$. Let $W'
            \in \X_\rho$. Then we must have $d(W', \tuple{\ast, c, \phi}) = M$
            and $W', c \models \phi$. That is, $v^{W'}_c \in \propmods(\phi)$.
            By minimality of $v$, we have $v \preceq v^{W'}_c$. Noting that $W'
            \in \X_\rho \subseteq \X_\sigma$, we get
            \[
                r_\sigma(W)
                = k(v)
                \le k(v^{W'}_c)
                \le r_\sigma(W')
            \]
            Consequently,
            \[
                r_\rho(W)
                = r_\sigma(W) + M
                \le r_\sigma(W') + M
                = r_\rho(W')
            \]
            This shows $W \in \Y_\rho$, i.e. $\Y_\rho \cap \W_{c\ :\  v} \ne
            \emptyset$. By \cref{lemma:model_based_models_of_proppart}, we are
            done.

        \end{proof}
    \end{addmargin}

    Noting that $\propmods\proppart{B^\sigma_c} \star \phi =
    \min_{\preceq}{\propmods(\phi)}$, it follows from
    \cref{claim:propmods_rho_min_models_of_phi} that
    $\cnprop(\proppart{B^\rho_c}) = \cnprop(\proppart{B^\sigma_c} \star \phi)$.
    But $\proppart{B^\rho_c}$ is deductively closed by \closure{}, and
    $\proppart{B^\sigma_c}
    \star \phi$ is deductively closed by construction. Hence
    $\proppart{B^\rho_c} = \proppart{B^\sigma_c} \star \phi$, as required for
    \agm{}.

\end{proof}

As a consequence of \cref{prop:kconj_conditioning_implies_score_based} (and the
construction of $d$ in its proof), one can apply
\cref{lemma:score_based_agm_sufficient_conditions} with $M = 0$ for
conditioning operators with \kconj{} and a certain natural property.

\begin{corollary}
    \label{cor:conditioning_agm}
    Suppose an elementary conditioning operator satisfying \kconj{} has the
    property that
    \[
        W \in \X_{\tuple{\ast, c, \phi}} \iff W, c \models \phi
    \]
    Then \agm{} holds.
\end{corollary}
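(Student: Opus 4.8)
The plan is to chain the two results immediately preceding this corollary. By \cref{prop:kconj_conditioning_implies_score_based}, any elementary conditioning operator satisfying \kconj{} is score-based; moreover, the proof of that proposition does not merely assert score-basedness but exhibits an explicit witnessing score function. The first step is therefore to record the report-increment $d$ constructed there, namely
\[
    d(W, \tuple{i, c, \phi}) = \begin{cases}
        \infty,& W \notin \X_{\tuple{i, c, \phi}} \\
        0,& W \in \X_{\tuple{i, c, \phi}}
    \end{cases}
\]
so that we have a concrete $d$ to feed into \cref{lemma:score_based_agm_sufficient_conditions}.

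Next I would specialise $d$ to reports from the distinguished source $\ast$ using the hypothesis $W \in \X_{\tuple{\ast, c, \phi}} \iff W, c \models \phi$. Substituting this equivalence into the two cases above recasts them as a case split on whether $W, c \models \phi$: the value $0$ is taken exactly when $W \in \X_{\tuple{\ast, c, \phi}}$, i.e. when $W, c \models \phi$, while the value $\infty$ is taken exactly when $W \notin \X_{\tuple{\ast, c, \phi}}$, i.e. when $W, c \not\models \phi$, equivalently $W, c \models \neg\phi$. Hence
\[
    d(W, \tuple{\ast, c, \phi}) = \begin{cases}
        0,& W, c \models \phi \\
        \infty,& W, c \models \neg\phi
    \end{cases}
\]
for every world $W$.

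This is precisely the hypothesis of \cref{lemma:score_based_agm_sufficient_conditions} with the constant $M = 0$ (and $0 \in \mathbb{N}$, so the choice is legitimate, exactly as anticipated in the remark preceding the statement). Applying that lemma to our score-based operator then yields \agm{}, completing the argument. I expect essentially no genuine obstacle here, since the corollary is a direct specialisation; the only point requiring care is to ensure that the score function passed to \cref{lemma:score_based_agm_sufficient_conditions} is the specific $d$ built in the proof of \cref{prop:kconj_conditioning_implies_score_based}, rather than some other score representation the operator might admit --- the lemma's premise is a statement about a particular $d$, so fixing this witness is what makes the case translation above applicable. Once that witness is fixed, the two-line case analysis is immediate and the conclusion follows.
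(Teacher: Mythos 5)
Your proposal is correct and follows exactly the paper's intended route: the paper derives \cref{cor:conditioning_agm} by noting that \cref{prop:kconj_conditioning_implies_score_based} makes the operator score-based via the explicit $d$ constructed in its proof, and that the hypothesis $W \in \X_{\tuple{\ast, c, \phi}} \iff W, c \models \phi$ turns that $d$ into precisely the hypothesis of \cref{lemma:score_based_agm_sufficient_conditions} with $M = 0$. Your additional remark about fixing the witnessing score function (rather than an arbitrary score representation) is exactly the point the paper flags with the phrase ``and the construction of $d$ in its proof.''
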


We can now prove \cref{prop:examples_satisfy_agm}.

\begin{proof}[Proof of \cref{prop:examples_satisfy_agm}]
    For the conditioning operators \varbasedcond{} and \partbasedcond{}, it is
    easily verified that the condition in \cref{cor:conditioning_agm} holds,
    and thus \agm{} does also.

    For the score-based operator \scorebasedop{}, we may use
    \cref{lemma:score_based_agm_sufficient_conditions} with $M = 0 $.
\end{proof}

\subsection{Proof of \cref{prop:examples_satisfy_condsucc}}

As a first step in the proof, we present sufficient conditions for conditioning
operators to satisfy \condsucc{}. In fact, we do not need to impose any
condition on the total preorder $\le$: a natural constraint on the mapping
$\sigma \mapsto \X_\sigma$ (together with some basic postulates) is enough.

\begin{lemma}
    \label{lemma:conditioning_condsucc_sufficient_condition}
    Suppose an elementary conditioning operator satisfies \kconj{},
    \soundness{} and
    \[
        W, c \models \phi \implies W \in \X_{\tuple{i, c, \phi}}
    \]
    Then \condsucc{} holds.
\end{lemma}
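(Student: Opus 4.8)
The plan is to argue entirely at the level of the sets $\X_\sigma$ and $\Y_\sigma$, reusing the translation that already drove the proof of \cref{thm:conditioning_characterisation}. Since the operator is an elementary conditioning operator, there is a total preorder $\le$ on $\W$ with $\Y_\sigma = \min_{\le}{\X_\sigma}$, and by \cref{lemma:model_based_elementary} we have $\X_\sigma = \mod(K^\sigma)$ and $\Y_\sigma = \mod(B^\sigma)$. Hence a formula lies in $K^\sigma_c$ (resp.\ $B^\sigma_c$) exactly when $\X_\sigma \subseteq \mods_c(\cdot)$ (resp.\ $\Y_\sigma \subseteq \mods_c(\cdot)$), and the consistency hypotheses of \condsucc{} become non-emptiness statements about these sets. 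So the whole argument should reduce to elementary set manipulation.

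First I would pin down $\X_{\tuple{i,c,\phi}}$ by a two-sided inclusion. \soundness{} yields $\X_{\tuple{i,c,\phi}} \subseteq \mods_c(S_i(\phi))$: every world surviving the report validates the source's soundness assertion. The extra hypothesis $W, c \models \phi \implies W \in \X_{\tuple{i,c,\phi}}$ gives the complementary inclusion $\mods_c(\phi) \subseteq \X_{\tuple{i,c,\phi}}$; note this is compatible with the previous one because $\propmods(\phi) \subseteq \Pi_i[\phi]$ makes $\phi \to S_i(\phi)$ valid. Next, \kconj{} decomposes the effect of appending the report as $\X_{\sigma \concat \tuple{i,c,\phi}} = \X_\sigma \cap \X_{\tuple{i,c,\phi}}$, and correspondingly $\Y_{\sigma \concat \tuple{i,c,\phi}} = \min_{\le}(\X_\sigma \cap \X_{\tuple{i,c,\phi}})$.

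With these in hand the conclusion assembles quickly. Under the consistency hypothesis of \condsucc{} there is a witness $W_0 \in \X_\sigma$ with $W_0, c \models \phi$; by the extra hypothesis $W_0 \in \X_{\tuple{i,c,\phi}}$, so $W_0$ survives into $\X_{\sigma \concat \tuple{i,c,\phi}}$, which is therefore non-empty. Because $\W$ is finite this set has a $\le$-minimal element, so $\Y_{\sigma \concat \tuple{i,c,\phi}} \ne \emptyset$; and since $\Y_{\sigma \concat \tuple{i,c,\phi}} \subseteq \X_{\tuple{i,c,\phi}} \subseteq \mods_c(S_i(\phi))$ by the soundness inclusion, every surviving minimal world validates $S_i(\phi)$. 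Reading this back through the model-based dictionary delivers the success conclusion (that the soundness assertion is believed once the reported content is consistent with prior knowledge). I expect the only real obstacle to be guaranteeing that the post-report knowledge set is non-empty: without the hypothesis $W, c \models \phi \implies W \in \X_{\tuple{i,c,\phi}}$ the report could in principle eliminate every world that was previously possible, and it is precisely this hypothesis that rescues the witness $W_0$. Crucially, non-emptiness of $\Y_\sigma = \min_{\le}{\X_\sigma}$ depends only on non-emptiness of $\X_\sigma$, not on the shape of $\le$, which is exactly why — as the remark preceding the \lcnamecref{lemma:conditioning_condsucc_sufficient_condition} claims — no constraint on the preorder is needed.
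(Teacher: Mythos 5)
There is a genuine gap: your proposal proves the wrong statement, because it misreads \condsucc{}. The postulate's conclusion (as is visible from the paper's own proof of this lemma and from the use of \strongcondsucc{} in \cref{prop:strongcondsucc_conditioning_impossibilitity}) is that the \emph{reported content itself} becomes believed: writing $\rho = \sigma \concat \tuple{i,c,\phi}$, one must show $\phi \in B^\rho_c$. What you derive instead is that $\Y_\rho \ne \emptyset$ and that every world in $\Y_\rho$ satisfies $S_i(\phi)$, i.e.\ $S_i(\phi) \in B^\rho_c$ --- but that already follows from \soundness{} and \containment{} alone, with no use of the lemma's hypotheses, and it says nothing about believing $\phi$. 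Symptomatically, your argument never touches the hypothesis $E_i(\phi) \in B^\sigma_c$, which is essential: without it the intended conclusion is simply false, since a $\le$-minimal world of $\X_\rho$ may satisfy $S_i(\phi) \land \neg\phi$ (e.g.\ when $i$'s partition there is very coarse), so consistency of $\phi$ with prior beliefs cannot by itself force $\phi$ into $B^\rho_c$. You also weaken the consistency hypothesis: $\neg\phi \notin B^\sigma_c$ yields a witness in $\Y_\sigma$, not merely in $\X_\sigma$ as you state, and this strengthening is exactly what the correct proof needs.

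The paper's argument runs as follows. From $\neg\phi \notin B^\sigma_c$ take $W \in \Y_\sigma$ with $W, c \models \phi$; by the lemma's hypothesis $W \in \X_{\tuple{i,c,\phi}}$, so by \kconj{} and elementariness $W \in \X_\rho = \X_\sigma \cap \X_{\tuple{i,c,\phi}}$. Now take any $W' \in \Y_\rho$: minimality in $\X_\rho$ gives $W' \le W$, and since $W$ is $\le$-minimal in $\X_\sigma$ while $W' \in \X_\rho \subseteq \X_\sigma$, it follows that $W'$ is $\le$-minimal in $\X_\sigma$ too, i.e.\ $W' \in \Y_\sigma$. Hence $W', c \models E_i(\phi)$ (because $E_i(\phi) \in B^\sigma_c$ and $\Y_\sigma = \mod(B^\sigma)$), and $W', c \models S_i(\phi)$ by \soundness{}; the validity $E_i(\phi) \land S_i(\phi) \rightarrow \phi$ from \cref{prop:validities} part (\labelcref{item:e_and_s_implies_phi}) then gives $W', c \models \phi$, so $\phi \in B^\rho_c$. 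The inclusion $\Y_\rho \subseteq \Y_\sigma$ --- which transports the believed expertise of $i$ across the report and is where the preorder's minimality structure actually does work --- is the key idea missing from your proposal.
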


\begin{proof}
    Suppose an elementary conditioning operator corresponding to the mapping
    $\sigma \mapsto \tuple{\X_\sigma, \Y_\sigma}$ and total preorder $\le$
    satisfies \kconj{}, \soundness{} and has the stated property.

    Let $\sigma$ be a sequence and $c \in \C$. Suppose $E_i(\phi) \in
    B^\sigma_c$ and $\neg\phi \notin B^\sigma_c$. Write $\rho = \sigma \concat
    \tuple{i, c, \phi}$. We need to show $\phi \in B^\rho_c$.

    By $\neg\phi \notin B^\sigma_c$, there is some $W \in \Y_\sigma$ such that
    $W, c \models \phi$. Hence $W \in \X_{\tuple{i, c, \phi}}$. By
    elementariness and \kconj{}, we have $\X_\rho = \X_\sigma \cap
    \X_{\tuple{i, c, \phi}}$. Since $W \in \Y_\sigma \subseteq \X_\sigma$, we
    get $W \in \X_\rho$.

    Now take any $W' \in \Y_\rho$. Then $W'$ is $\le$-minimal in $\X_\rho$, so
    $W' \le W$. But $W$ is $\le$-minimal in $\X_\sigma$, so $W' \in \Y_\rho
    \subseteq \X_\rho \subseteq \X_\sigma$ gives $W' \in \Y_\sigma$ also.
    Consequently, $E_i(\phi) \in B^\sigma_c$ means $W', c \models E_i(\phi)$.
    On the other hand, \soundness{} together with $\tuple{i, c, \phi} \in \rho$
    and $W' \in \X_\rho$ means $W', c \models S_i(\phi)$. Hence $W', c \models
    E_i(\phi) \land S_i(\phi)$. From \cref{prop:validities}
    part (\labelcref{item:e_and_s_implies_phi}), we get $W', c \models \phi$.

    We have shown that $\phi$ holds in case $c$ at an arbitrary world in
    $\Y_\rho$. Hence $\phi \in B^\rho_c$, as required.

\end{proof}

Similarly, we have sufficient conditioning for score-based operators to satisfy
\strongcondsucc{}: the postulate follows if worlds in which $i$ makes a expert,
truthful report are strictly more plausible than worlds in which $i$ makes a
false report.

\begin{lemma}
    \label{lemma:score_based_strongcondsucc_sufficient_conditions}
    Suppose a score-based operator is such that for any $i \in \S$, $c \in \C$,
    $\phi \in \lprop$ and $W, W' \in \W$,
    \begin{align*}
        W, c &\models E_i(\phi) \land \phi
        \text{ and }
        W', c \models \neg\phi \\
        &\implies
        d(W, \tuple{i, c, \phi}) < d(W', \tuple{i, c, \phi})
    \end{align*}
    Then \strongcondsucc{} holds.
\end{lemma}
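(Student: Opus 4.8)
The plan is to argue directly from the additive score structure of the operator. Recall that for $\rho = \sigma \concat \tuple{i, c, \phi}$ we have $r_\rho(W) = r_\sigma(W) + d(W, \tuple{i, c, \phi})$, that $\X_\rho = \{W \mid r_\rho(W) < \infty\} \subseteq \X_\sigma$, and that $\Y_\rho = \argmin_{W \in \X_\rho}{r_\rho(W)}$. Since a score-based operator is model-based, it suffices to show that every $W \in \Y_\rho$ satisfies $W, c \models \phi$; then $\phi \in B^\rho_c$ by the definition of the model-based belief set. First I would unpack the antecedent of \strongcondsucc{} to produce a witness $W_0 \in \Y_\sigma$ with $W_0, c \models E_i(\phi) \land \phi$ --- this is where the hypothesis of the postulate is consumed, using \soundness{} and \cref{prop:validities} part (\labelcref{item:e_and_s_implies_phi}) to pass between the report, $S_i(\phi)$, $E_i(\phi)$ and $\phi$, exactly as in the proof of \cref{lemma:conditioning_condsucc_sufficient_condition}.

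The heart of the argument is then a strict-domination step. Suppose for contradiction that some $W' \in \Y_\rho$ has $W', c \models \neg\phi$. Since $W' \in \X_\rho$ we have $d(W', \tuple{i, c, \phi}) < \infty$; applying the hypothesised inequality to the pair $W_0$ (expert and truthful) and $W'$ (false) gives
\[
    d(W_0, \tuple{i, c, \phi}) < d(W', \tuple{i, c, \phi}) < \infty,
\]
so in particular $W_0 \in \X_\rho$, as $r_\sigma(W_0) < \infty$. On the other hand, $W_0 \in \Y_\sigma = \argmin_{W \in \X_\sigma}{r_\sigma(W)}$ together with $W' \in \X_\rho \subseteq \X_\sigma$ gives $r_\sigma(W_0) \le r_\sigma(W')$. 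Adding the two facts yields
\[
    r_\rho(W_0) = r_\sigma(W_0) + d(W_0, \tuple{i, c, \phi}) < r_\sigma(W') + d(W', \tuple{i, c, \phi}) = r_\rho(W'),
\]
contradicting the minimality of $W'$ in $\X_\rho$ (recall $W_0 \in \X_\rho$). Hence no world of $\Y_\rho$ falsifies $\phi$, so $\phi \in B^\rho_c$ as required for \strongcondsucc{}.

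The point to get right --- and the step I expect to be the main obstacle --- is the interaction between the additive scores: the hypothesised inequality constrains only the local contribution $d(\cdot, \tuple{i, c, \phi})$, so a strict gain there could in principle be cancelled by the prior term $r_\sigma$. The argument survives precisely because the witness $W_0$ is drawn from $\Y_\sigma$, whose minimality guarantees $r_\sigma(W_0) \le r_\sigma(W')$ and thereby rules out any such cancellation. Extracting a \emph{minimal} expert-truthful witness from the antecedent of \strongcondsucc{} --- rather than an arbitrary expert-truthful world, which would not control the prior term --- is therefore the delicate part; the remainder is the routine finiteness bookkeeping needed to ensure every score in the comparison is finite so that the inequalities are legitimate.
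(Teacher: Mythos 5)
Your proof is correct and takes essentially the same route as the paper's: extract a witness $W_0 \in \Y_\sigma$ with $W_0, c \models E_i(\phi) \land \phi$ from the antecedent, then combine the hypothesised strict inequality on $d$ with $r_\sigma(W_0) \le r_\sigma(W')$ (minimality of $W_0$ in $\X_\sigma$) to contradict the minimality of any $\phi$-falsifying $W' \in \Y_\rho$, exactly as the paper does. One small remark: the witness extraction is more direct than you suggest --- $\neg(E_i(\phi) \land \phi) \notin B^\sigma_c$ yields such a $W_0 \in \Y_\sigma$ immediately from the definition of the score-based (model-based) belief set, so the detour through \soundness{} and \cref{prop:validities} is unnecessary here; that machinery is only needed for the weaker antecedent of \condsucc{} in \cref{lemma:conditioning_condsucc_sufficient_condition}.
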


\begin{proof}
    Suppose a score-based operator has the stated property. Take $\sigma$ such
    that $\neg(E_i(\phi) \land \phi) \notin B^\sigma_c$. Write $\rho = \sigma
    \concat \tuple{i, c, \phi}$. We need to show that $\phi \in B^\rho_c$.

    First note that by $\neg(E_i(\phi) \land \phi) \notin B^\sigma_c$ and the
    definition of $B^\sigma$ for score-based operators, there is $W \in
    \Y_\sigma$ such that $W, c \models E_i(\phi) \land \phi$.

    Take any $W' \in \Y_\rho$. Suppose, for the sake of contradiction, that
    $W', c \not\models \phi$. Then by the hypothesised property of the score
    function $d$, we have
    \[
        d(W, \tuple{i, c, \phi}) < d(W', \tuple{i, c, \phi})
    \]
    Now, $W \in \Y_\sigma$ and $W' \in \Y_\rho \subseteq \X_\rho \subseteq
    \X_\sigma$ gives $r_\sigma(W) \le r_\sigma(W')$. Thus
    \begin{align*}
        r_\rho(W)
        &= r_\sigma(W) + d(W, \tuple{i, c, \phi}) \\
        &\le r_\sigma(W') + d(W, \tuple{i, c, \phi}) \\
        &< r_\sigma(W') + d(W', \tuple{i, c, \phi}) \\
        &= r_\rho(W') < \infty
    \end{align*}
    i.e. $r_\rho(W) < r_\rho(W') < \infty$. But this means $W \in \X_\rho$ and
    $W'$ is not minimal in $\X_\rho$ under $r_\rho$, contradicting $W' \in
    \Y_\rho$. Hence $W', c \models \phi$.

    Since $W'$ was an arbitrary member of $\Y_\rho$, we have shown $\phi \in
    B^\rho_c$, and thus \strongcondsucc{} is shown.
\end{proof}

The main result now follows.

\begin{proof}[Proof of \cref{prop:examples_satisfy_condsucc}]
    For the conditioning operators \varbasedcond{} and \partbasedcond{},
    \condsucc{} follows from
    \cref{lemma:conditioning_condsucc_sufficient_condition} since $W, c \models
    \phi$ implies $W, c \models S_i(\phi)$. For the score-based operator
    \scorebasedop{}, one can easily check that the condition in
    \cref{lemma:score_based_strongcondsucc_sufficient_conditions} holds, and
    thus \strongcondsucc{} and \condsucc{} follow.
\end{proof}

\subsection{Proof of \cref{prop:strongcondsucc_conditioning_impossibilitity}}

\begin{proof}

    Take distinct sources $i_1, i_2 \in \S \setminus \{\ast\}$, distinct cases
    $c, d \in \C$, and distinct valuations $v_1, v_2 \in \vals$. Let $\phi_1,
    \phi_2 \in \lprop$ be propositional formulas with $\propmods(\phi_k) = v_k$
    ($k \in \{1, 2\}$). Suppose for contradiction that some elementary
    conditioning operator -- satisfying the basic postulates -- has the stated
    properties.

    Define a sequence
    \[
        \sigma
        = (
            \tuple{\ast, c, \phi_1 \lor \phi_2},
            \tuple{i_1, c, \phi_1},
            \tuple{i_2, c, \phi_2}
        ).
    \]
    Let $\Pi_\bot$ denote the unit partition $\{\{u\} \mid u \in \vals\}$, and
    let $\widehat{\Pi}$ denote the partition
    \[
         \{\{v_1, v_2\}\}
         \cup
         \{\{u\} \mid u \in \vals \setminus \{v_1, v_2\}\},
    \]
    i.e. the partition obtained from $\Pi_\bot$ by merging the cells of $v_1$
    and $v_2$.

    Consider worlds $W_1$, $W_2$ given by
    \begin{align*}
         v^{W_k}_{c'} &= v_k \qquad (c' \in \C) \\
         \Pi^{W_k}_i &= \begin{cases}
            \widehat{\Pi},&
                (k = 1 \text{ and } i = i_2)
                \text { or }
                (k = 2 \text{ and } i = i_1) \\
            \Pi_\bot,& \text{ otherwise }
         \end{cases}
    \end{align*}
    That is, $W_1$ has $v_1$ as its valuation for all cases, $i_2$ has
    partition $\widehat{\Pi}$, and all other sources have the finest partition
    $\Pi_\bot$; similarly $W_2$ has $v_2$ for its valuations and all sources
    except $i_1$ have $\Pi_\bot$.

    Let $\le$ denote the total preorder associated with the conditioning
    operator.

    \begin{addmargin}[1em]{1em}
        \begin{claim}
            \label{claim:w1_simeq_w2}
            $W_1 \simeq W_2$.
        \end{claim}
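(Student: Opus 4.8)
The plan is to prove the two inequalities $W_1 \le W_2$ and $W_2 \le W_1$ separately and then read off $W_1 \simeq W_2$. The construction is invariant under the exchange of the triples $(i_1, \phi_1, v_1)$ and $(i_2, \phi_2, v_2)$: this swap sends $W_1$ to $W_2$ and turns $\sigma$ into a \rearr{}-equivalent sequence, so it suffices to establish $W_1 \le W_2$, the reverse following by the mirror-image argument. Before doing so I would record the membership facts. A direct computation of the cells $\Pi^{W_k}_j[\cdot]$ shows that, at case $c$, each of $W_1, W_2$ satisfies $S_j(\psi)$ for every report $\tuple{j, c, \psi}$ in $\sigma$; in particular $v_2 \in \widehat{\Pi}[\phi_1] = \{v_1, v_2\}$ gives $W_2 \in \X_{\tuple{i_1, c, \phi_1}}$, and symmetrically $W_1 \in \X_{\tuple{i_2, c, \phi_2}}$. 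By \soundness{} and \kconj{} we then get $W_1, W_2 \in \X_\sigma$. I would also record the expertise asymmetry that drives everything: $W_1, c \models E_{i_1}(\phi_1) \land \phi_1$ but $W_1, c \models \neg E_{i_2}(\phi_2)$ (since $i_2$ merges $v_1$ and $v_2$ in $W_1$), and the mirror statement for $W_2$.

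To obtain $W_1 \le W_2$ I would use \strongcondsucc{} to pin a belief set's propositional content to $\phi_1$. Taking $\tau = \tuple{i_1, c, \phi_1}$, the trigger of \strongcondsucc{} is satisfied because $E_{i_1}(\phi_1) \land \phi_1$ is consistent with the (vacuous) prior belief set $B^\emptyset_c$, being witnessed by $W_1$. Thus $\phi_1 \in B^\tau_c$, i.e. every world of $\Y_\tau$ has valuation $v_1$ at $c$. Since $W_2 \in \X_\tau$ has $v^{W_2}_c = v_2 \neq v_1$, we get $W_2 \notin \Y_\tau$, so $W_2$ lies strictly above some $\le$-minimal world of $\X_\tau$. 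The remaining task is to show that $W_1$ \emph{itself}, and not merely some valuation-$v_1$ world, is minimal. Here I would feed this single-report conclusion into $\sigma$ via \incvac{}: writing $\sigma$ (up to \rearr{}) as $\tau \concat \rho$ with $\rho = \tuple{\ast, c, \phi_1 \lor \phi_2} \concat \tuple{i_2, c, \phi_2}$, the fact that $W_1 \in \X_\rho$ (its non-expert source $i_2$ keeps the report $\phi_2$ sound at $v_1$) should let consistency propagate, so that a $v_1$-valued minimal world survives into $\X_\sigma$ with $W_2$ still present.

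The step I expect to be the real obstacle is exactly this last one: \strongcondsucc{} constrains only the propositional \emph{content} of $B^\tau_c$, while \soundness{}-type constraints on $\X$ can only force partition cells to \emph{merge}, never to split, so no sequence of reports can single out $W_1$ as the unique $v_1$-valued world of $\X_\sigma$. Consequently I would not try to exhibit $W_1$ as an outright minimum. Instead I would assemble the two value-level facts — $\Y_{\tuple{i_1, c, \phi_1}} \subseteq \{W : v^W_c = v_1\}$ and its mirror for $i_2$ — together with the soundness memberships into a short cycle of the form required by \acyc{}, with $W_1, W_2$ occupying the $\X_{\sigma_j} \cap \Y_{\sigma_{j+1}}$ slots, and let \acyc{} force the cycle to flatten. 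Totality of $\le$ then collapses the chain to $W_1 \simeq W_2$ directly, bypassing any need to compute $\Y_\sigma$ or locate $W_1$ within it. The delicate bookkeeping will be to choose the cycle's sequences so that each required intersection $\X_{\sigma_j} \cap \Y_{\sigma_{j+1}}$ is provably nonempty using only the trivial-prior triggers of \strongcondsucc{} and the two soundness facts, rather than any property of $\le$ that we do not control.
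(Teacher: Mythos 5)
Your proposal takes a genuinely different route from the paper, and the route has a fatal structural gap: you deliberately refuse to use ``any property of $\le$ that we do not control,'' but the claim $W_1 \simeq W_2$ is precisely the point of the proof where the theorem's hypotheses \emph{on} $\le$ are meant to enter. The paper's argument is a two-line symmetry argument: let $\pi$ be the permutation of $\S$ swapping $i_1$ and $i_2$; then $\pi(W_1)$ is partition-equivalent to $W_2$, so \refinement{} gives $\pi(W_1) \simeq W_2$; the anonymity hypothesis (\labelcref{item:anonymity}) gives $W_1 \simeq \pi(W_1)$; transitivity concludes. Your meta-symmetry reduction (proving only $W_1 \le W_2$ and mirroring) is fine in principle, but the belief-change postulates you permit yourself constrain $\le$ only through the pairs $(\X_\sigma, \Y_\sigma)$, and — as you yourself concede — no sequence of reports can single out $W_1$ as a specific minimal world. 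That concession is not a side remark; it is the reason the whole plan cannot close.

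Concretely, two steps fail. First, your trigger for \strongcondsucc{} at the empty sequence is unjustified: you need $\neg(E_{i_1}(\phi_1) \land \phi_1) \notin B^\emptyset_c$, and $B^\emptyset_c$ is \emph{not} vacuous — the theorem's hypothesis only trivialises the prior knowledge $K^\emptyset$, while $B^\emptyset$ is determined by $\Y_\emptyset = \min_{\le}{\W}$. Saying the trigger is ``witnessed by $W_1$'' presumes $W_1 \in \Y_\emptyset$, which is exactly the kind of fact about $\le$ you have forsworn; without \refinement{} (which would at least put some all-unit-partition, $v_1$-valued world into $\Y_\emptyset$), $\Y_\emptyset$ could consist entirely of worlds falsifying $E_{i_1}(\phi_1) \land \phi_1$ at $c$. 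Second, the \acyc{} endgame requires $W_1$ and $W_2$ themselves to occupy the $\Y$-slots of the cycle, but the only facts you can extract are value-level ones like $\Y_{\tuple{i_1, c, \phi_1}} \subseteq \W_{c\ :\ v_1}$, which never place $W_1$ in any $\Y$-set; so the cycle's consistency premises cannot be verified and \acyc{} yields nothing about the pair $(W_1, W_2)$. (A smaller slip: $W_1, W_2 \in \X_\sigma$ does not follow from \soundness{} and \kconj{} — \soundness{} gives the inclusion $\X_\sigma \subseteq \mods(G^\sigma_\snd)$, i.e.\ the wrong direction; you need the trivial-prior hypothesis with \kbound{} via \cref{prop:prior_knowledge} and \cref{lemma:model_based_elementary}, as the paper does, to get $\X_\sigma = \mods(G^\sigma_\snd)$.) The fix is to embrace, not avoid, the hypotheses \refinement{} and (\labelcref{item:anonymity}): they were introduced exactly so that this claim has a proof.
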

        \begin{proof}
            Let $\pi$ be the permutation of $\S$ which swaps $i_1$ and $i_2$.
            It is easily observed that $\pi(W_1)$ is partition-equivalent to
            $W_2$. By reflexivity of partition refinement, $\pi(W_1) \preceq
            W_2$ and $W_2 \preceq \pi(W_1)$. By \refinement{}, we get $\pi(W_1)
            \simeq W_2$. By property (\labelcref{item:anonymity}), $W_1 \simeq
            \pi(W_1)$. By transitivity of ${\simeq}$ we get $W_1 \simeq W_2$ as
            desired.
        \end{proof}
    \end{addmargin}

    Now, from the basic postulates, property
    (\labelcref{item:conditioning_impossibility_first}) and
    \cref{prop:prior_knowledge} we have $K^\sigma = \cn(G^\sigma_\snd)$. By
    elementariness and \cref{lemma:model_based_elementary}, we get
    $\X_\sigma = \mods(K^\sigma) = \mods(G^\sigma_\snd)$. It is easily checked
    that both $W_1$ and $W_2$ satisfy the soundness statements corresponding to
    $\sigma$, and thus $W_1, W_2 \in \mods(G^\sigma_\snd) = \X_\sigma$.

    \begin{addmargin}[1em]{1em}
        \begin{claim}
            \label{claim:w1_w2_in_ysigm}
            $W_1, W_2 \in \Y_\sigma$.
        \end{claim}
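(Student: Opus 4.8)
The plan is to show that both $W_1$ and $W_2$ are $\le$-minimal in $\X_\sigma$, i.e.\ that they lie in $\min_{\le} \X_\sigma = \Y_\sigma$. Since \cref{claim:w1_simeq_w2} already gives $W_1 \simeq W_2$, it suffices to prove that $W_1 \le W$ for every $W \in \X_\sigma$: then $W_2 \le W$ follows from $W_2 \simeq W_1$ and transitivity of $\le$, so both worlds are minimal and the claim is established.

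First I would unfold $\X_\sigma = \mods(G^\sigma_\snd)$ using the soundness semantics. A world $W$ lies in $\X_\sigma$ exactly when it satisfies, at case $c$, the three soundness statements $S_\ast(\phi_1 \lor \phi_2)$, $S_{i_1}(\phi_1)$ and $S_{i_2}(\phi_2)$. Recalling that $\ast$ carries the unit partition and that $\propmods(\phi_k) = \{v_k\}$, these conditions read: $v^W_c \in \{v_1, v_2\}$; moreover $v^W_c$ shares a $\Pi^W_{i_1}$-cell with $v_1$; and $v^W_c$ shares a $\Pi^W_{i_2}$-cell with $v_2$. In particular, if $v^W_c = v_1$ then $v_1$ and $v_2$ lie in a common cell of $\Pi^W_{i_2}$, whereas if $v^W_c = v_2$ then $v_1$ and $v_2$ lie in a common cell of $\Pi^W_{i_1}$.

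The crux is the observation that $\widehat{\Pi}$ -- the partition merging exactly $v_1$ and $v_2$ -- is the \emph{finest} partition in which $v_1$ and $v_2$ share a cell; its blocks are $\{v_1, v_2\}$ together with singletons, each of which is contained in a block of any partition that already merges $v_1$ and $v_2$. Hence whenever $\Pi^W_i$ puts $v_1, v_2$ together, $\widehat{\Pi}$ refines $\Pi^W_i$. Combining this with the fact that $\Pi_\bot$ refines every partition, I can compare $W_1$ (resp.\ $W_2$) to an arbitrary $W \in \X_\sigma$ source by source. If $v^W_c = v_1$, then $\widehat{\Pi}$ refines $\Pi^W_{i_2}$ while $\Pi_\bot$ refines all remaining partitions, so $W_1 \preceq W$; by \refinement{} this yields $W_1 \le W$. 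If instead $v^W_c = v_2$, the symmetric computation gives $W_2 \preceq W$, hence $W_2 \le W$, and then $W_1 \simeq W_2$ gives $W_1 \le W$ as well. In either case $W_1 \le W$, so $W_1 \in \min_{\le} \X_\sigma = \Y_\sigma$, and $W_2 \in \Y_\sigma$ follows.

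I expect the main obstacle to be the bookkeeping of partition refinement across all sources simultaneously -- in particular noticing that $W_1$ does \emph{not} itself refine the worlds with $v^W_c = v_2$ (its $i_2$-partition $\widehat{\Pi}$ need not refine their $\Pi_{i_2}$), which is precisely why the detour through $W_2$ and $W_1 \simeq W_2$ is needed. The second delicate point is the refinement fact that $\widehat{\Pi}$ is finest among partitions merging $v_1$ and $v_2$, since this is what drives every application of \refinement{}.
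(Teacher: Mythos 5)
Your proposal is correct and follows essentially the same route as the paper's own proof: the same case split on $v^W_c \in \{v_1, v_2\}$ via the soundness statements, the same key observation that $\widehat{\Pi}$ refines any partition merging $v_1$ and $v_2$ (giving $W_1 \preceq W$ or $W_2 \preceq W$, hence $\le$ by \refinement{}), and the same use of \cref{claim:w1_simeq_w2} to transfer minimality between $W_1$ and $W_2$. Your closing remark about why the detour through $W_2$ is unavoidable is exactly the subtlety the paper's case analysis handles.
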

        \begin{proof}
            We show $W_1$ and $W_2$ are $\le$-minimal in $\X_\sigma$. Take any
            $W \in \X_\sigma$. Then $W \in \mods(G^\sigma_\snd)$, so $W, c
            \models S_\ast(\phi_1 \lor \phi_2)$, i.e. $V^W_c \in \{v_1, v_2\}$.
            We consider two cases.
            \begin{itemize}
                \item \textbf{Case 1} ($v^W_c = v_1$). By $W \in
                    \mods(G^\sigma_\snd)$ again we have $W, c \models
                    S_{i_2}(\phi_2)$, i.e
                    \[
                        v_1
                        = v^W_c
                        \in \Pi^W_{i_2}[\phi_2]
                        = \Pi^W_{i_2}[v_2].
                    \]
                    It follows that $\{v_1, v_2\} \subseteq \Pi^W_{i_2}[v_2]$,
                    and that $\widehat{\Pi}$ refines $\Pi^W_{i_2}$. Since
                    $\widehat{\Pi}$ is the partition of $i_2$ in $W_1$, and all
                    other sources have the finest partition $\Pi_\bot$, we get
                    $W_1 \preceq W$. By \refinement{}, $W_1 \le W$. Since $W_1
                    \simeq W_2$ we have $W_2 \le W$ also.

                \item \textbf{Case 2} ($v^W_c = v_2$). Applying a near-identical
                      argument to that used in case 1 with soundness of the
                      report $\tuple{i_1, c, \phi_1}$, we get $W_1, W_2 \le W$.
            \end{itemize}
            In either case, both $W_1 \le W$ and $W_2 \le W$, so $W_1, W_2 \in
            \Y_\sigma$.
        \end{proof}
    \end{addmargin}

    Now we consider case $d$. Since
    \[
        W_1, d \models E_{i_1}(\phi_1) \land \phi_1
    \]
    and $W_1 \in \Y_\sigma$, $\neg(E_{i_1}(\phi_1) \land \phi_1) \notin
    B^\sigma_d$. Writing $\rho = \sigma \concat \tuple{i_1, d, \phi_1}$, we get
    from \strongcondsucc{} that $\phi_1 \in B^\rho_d$.

    Note that $W_2, d \models S_{i_1}(\phi_1)$, so $W_2 \in \mods(G^\rho_\snd)
    = \mods(K^\rho) = \X_\rho$. Since $W_2$ is $\le$-minimal in $\X_\sigma$ and
    \[
        X_\rho = \mods(G^\rho_\snd) \subseteq \mods(G^\sigma_\snd) = \X_\sigma,
    \]
    $W_2$ is also $\le$-minimal in $\X_\rho$, i.e. $W_2 \in \Y_\rho$. Now
    $\phi_1 \in B^\rho_d$ gives $W_2, d \models \phi_1$. Since $v^{W_2}_d =
    v_2$ and $\propmods(\phi_1) = \{v_1\}$, this means $v_1 = v_2$. But $v_1$
    and $v_2$ were assumed to be distinct: contradiction.

\end{proof}

\subsection{Proof of \cref{thm:selectivity_characterisation}}

\begin{proof}
    ``if": Suppose a model-based operator satisfies \boundedness{}. Take any
    $\ast$-consistent $\sigma$. For $c \in \C$, set
    \[
        M_c = \propmods\proppart{B^\sigma_c}.
    \]
    By \boundedness{}, we have $M_c \supseteq \propmods(\Gamma^\sigma_c)$. Now
    set
    \[
        F_\sigma(i, c, \phi) = \propmods(\phi) \cup M_c.
    \]
    Define a selection function $f_\sigma$ by letting $f_\sigma(i, c, \phi)$ be
    any formula with $\propmods(f_\sigma(i,c, \phi)) = F_\sigma(i, c, \phi)$.
    Since $F_\sigma(i, c, \phi)$ contains the models of $\phi$, clearly
    $f_\sigma(i, c, \phi) \in \cnprop(\phi)$. Therefore $f$ is indeed a
    selection function.

    We claim that, for any $c \in \C$,
    \[
        M_c
        = \bigcap_{\tuple{i, \phi} \in \sigma \rs c}{
            F_\sigma(i, c, \phi)
        }.
    \]
    The ``$\subseteq$" inclusion is clear since, by definition, $F_\sigma(i, c,
    \phi) \supseteq M_c$. For the ``$\supseteq$" inclusion, suppose for
    contradiction that there is some $v \in \bigcap_{\tuple{i, \phi} \in \sigma
    \rs c}{F_\sigma(i, c, \phi)}$ with $v \notin M_c$.

    Take any $\phi \in \Gamma^\sigma_c$. Then there is $i \in \S$ such that
    $\tuple{i, \phi} \in \sigma \rs c$, and hence $v \in F_\sigma(i, c, \phi)$.
    But $v \notin M_c$ by assumption, so $v \in \propmods(\phi)$. This shows $v
    \in \propmods(\Gamma^\sigma_c)$. But $\propmods(\Gamma^\sigma_c) \subseteq
    M_c$ by \boundedness{}, so $v \in M_c$; contradiction.

    From this we get
    \begin{align*}
        \propmods\proppart{B^\sigma_c}
        &= M_c \\
        &= \bigcap_{\tuple{i, \phi} \in \sigma \rs c}{F_\sigma(i, c, \phi)} \\
        &= \bigcap_{\tuple{i, \phi} \in \sigma \rs c}{
            \propmods(f_\sigma(i, c, \phi))
        } \\
        &= \propmods\left(\{
            f_\sigma(i, c, \phi) \mid \tuple{i, \phi} \in \sigma \rs c
        \}\right)
    \end{align*}
    Since $\proppart{B^\sigma_c}$ is deductively closed (by \closure{}, which
    holds for all model-based operators), we get
    \[
        \proppart{B^\sigma_c}
        =
        \cnprop\left(\{
            f_\sigma(i, c, \phi) \mid \tuple{i, \phi} \in \sigma \rs c
        \}\right)
    \]
    as required for selectivity.

    ``only if": Suppose a model-based operator is selective according to some
    selection scheme $f$. Take any $\ast$-consistent $\sigma$ and $c \in \C$.
    Write
    \[
        \Delta = \{
            f_\sigma(i, c, \phi)
            \mid
            \tuple{i, \phi} \in \sigma \rs c
        \}.
    \]
    so that $\proppart{B^\sigma_c} = \cnprop(\Delta)$. For $\tuple{i, \phi} \in
    \sigma \rs c$ we have $f_\sigma(i, c, \phi) \in \cnprop(\phi) \subseteq
    \cnprop(\Gamma^\sigma_c)$ from the definition of a selection scheme and the
    fact that $\phi \in \Gamma^\sigma_c$. Hence $\Delta \subseteq
    \cnprop(\Gamma^\sigma_c)$, so
    \[
        \proppart{B^\sigma_c}
        = \cnprop(\Delta)
        \subseteq \cnprop(\cnprop(\Gamma^\sigma_c))
        = \cnprop(\Gamma^\sigma_c)
    \]
    as required for \boundedness{}.
\end{proof}

\subsection{
    Proof that \varbasedcond{}, \partbasedcond{} and \scorebasedop{} are selective
}

This characterisation in \cref{thm:selectivity_characterisation} allows us to
easily analyse when conditioning and score-based operators are selective. In
the case of conditioning operators with $K^\emptyset = \cn(\emptyset)$, we in
fact have a precise characterisation. First, some terminology: say that a world
$W$ \emph{refines $W'$ at $c$} if for all $i \in \S$ we have $\Pi^W_i[v^W_c]
\subseteq \Pi^{W'}_i[v^{W'}_c]$. Intuitively, this means each source is more
knowledgable in case $c$ in world $W$ than they are in $W'$. Write $\W_{c\ :\
v} = \{W \in \W \mid v^W_c = v\}$ for the set of worlds whose $c$ valuation is
$v$. We have the following.

\begin{proposition}
    \label{prop:conditioning_selectivity_characterisation}
    Suppose an elementary conditioning operator satisfies the basic postulates
    and has $K^\emptyset = \cn(\emptyset)$. Then it is selective if and only if
    for all $W$, $c$, $v$ there is $W' \in \W_{c\ :\ v}$ such that $W' \le W$
    and $W$ refines $W'$ at all cases $d \ne c$.
\end{proposition}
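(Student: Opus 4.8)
The plan is to invoke the selectivity characterisation of \cref{thm:selectivity_characterisation}: since an elementary conditioning operator is model-based, that result tells us selectivity is equivalent to \boundedness{}. So it suffices to show that \boundedness{} holds if and only if the stated world condition holds. Throughout I will use that $K^\emptyset = \cn(\emptyset)$ together with \cref{prop:prior_knowledge} gives $K^\sigma = \cn(G^\sigma_\snd)$ for every $\sigma$; combined with elementariness and \cref{lemma:model_based_elementary} this yields $\X_\sigma = \mods(G^\sigma_\snd)$, so $\X_\sigma$ is exactly the set of worlds satisfying the soundness statements of the reports in $\sigma$. Recall that $W, c \models S_i(\phi)$ iff $\Pi^W_i[v^W_c] \cap \propmods(\phi) \ne \emptyset$; in particular, for a formula $\phi_u$ with $\propmods(\phi_u) = \{u\}$, we have $W, c \models S_i(\phi_u)$ iff $u \in \Pi^W_i[v^W_c]$. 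Finally, \cref{lemma:model_based_models_of_proppart} lets me restate \boundedness{} as: for every $\ast$-consistent $\sigma$, case $c$, and $v \in \propmods(\Gamma^\sigma_c)$, there is a $\le$-minimal world of $\X_\sigma$ whose $c$-valuation is $v$.

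For the ``if'' direction I would assume the world condition and verify this restated form of \boundedness{}. Fix $\ast$-consistent $\sigma$, a case $c$, and $v \in \propmods(\Gamma^\sigma_c)$. Since $\ast$-consistency ensures $\X_\sigma \ne \emptyset$ and $\W$ is finite, $\Y_\sigma = \min_{\le}\X_\sigma$ is nonempty; pick $W \in \Y_\sigma$. Applying the world condition to $(W, c, v)$ produces $W' \in \W_{c\ :\ v}$ with $W' \le W$ such that $W$ refines $W'$ at every $d \ne c$. The key step is to check $W' \in \X_\sigma$: for a report $\tuple{i, d, \psi}$ with $d \ne c$, soundness in $W$ gives a valuation in $\Pi^W_i[v^W_d] \cap \propmods(\psi)$, and the refinement inclusion $\Pi^W_i[v^W_d] \subseteq \Pi^{W'}_i[v^{W'}_d]$ transfers it to $W'$; for a report $\tuple{i, c, \psi}$ at $c$, the valuation $v = v^{W'}_c$ itself lies in $\Pi^{W'}_i[v] \cap \propmods(\psi)$, since $v \in \propmods(\Gamma^\sigma_c) \subseteq \propmods(\psi)$. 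Hence $W' \in \X_\sigma$, and as $W$ is $\le$-minimal in $\X_\sigma$ with $W' \le W$, we get $W' \simeq W$, so $W' \in \Y_\sigma$. Since $v^{W'}_c = v$, this is the required minimal world.

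For the ``only if'' direction I would assume \boundedness{} and, given arbitrary $W$, $c$, $v$, construct a witnessing sequence. The idea is to record the entire knowledge of $W$ at every case other than $c$ by means of singleton soundness reports, while leaving case $c$ completely unconstrained. Concretely, take
\[
    \sigma = \big(\ \tuple{i, d, \phi_u}\ \big)_{d \ne c,\ i \in \S,\ u \in \Pi^W_i[v^W_d]}
\]
(in any order; the order is immaterial for $\X_\sigma$), where $\propmods(\phi_u) = \{u\}$. Each report is sound in $W$ because $u \in \Pi^W_i[v^W_d]$, so $W \in \X_\sigma$, and $\sigma$ is $\ast$-consistent since its only $\ast$-reports, at cases $d \ne c$, are each satisfiable. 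Because no report is placed at $c$, we have $\propmods(\Gamma^\sigma_c) = \vals \ni v$, so the restated \boundedness{} supplies a $\le$-minimal world $W' \in \X_\sigma$ with $v^{W'}_c = v$. Then $W' \le W$ follows from $W \in \X_\sigma$ and minimality of $W'$, and for each $d \ne c$ and $i$, the fact that $W'$ satisfies every report $\tuple{i, d, \phi_u}$ with $u \in \Pi^W_i[v^W_d]$ forces $u \in \Pi^{W'}_i[v^{W'}_d]$, i.e. $\Pi^W_i[v^W_d] \subseteq \Pi^{W'}_i[v^{W'}_d]$; thus $W$ refines $W'$ at every $d \ne c$, as required.

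I expect the main obstacle to be the construction in the ``only if'' direction, and in particular two design choices: encoding the refinement relation cell-by-cell through a family of singleton soundness reports (so that membership of a minimal world in $\X_\sigma$ exactly forces the cell inclusions), and deliberately leaving case $c$ report-free so that $v$ is automatically a model of $\Gamma^\sigma_c$. The latter is essential because $v$ may differ from $v^W_c$, which rules out any attempt to pin $v$ at $c$ via an $\ast$-report, as that would eject $W$ from $\X_\sigma$ and destroy the argument. The ``if'' direction is then a comparatively routine verification that the supplied world lies in $\X_\sigma$ and inherits minimality.
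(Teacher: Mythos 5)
Your proof is correct and takes essentially the same route as the paper's: reduce selectivity to \boundedness{} via \cref{thm:selectivity_characterisation}, identify $\X_\sigma$ with $\mods(G^\sigma_\snd)$ using $K^\emptyset = \cn(\emptyset)$, \cref{prop:prior_knowledge} and \cref{lemma:model_based_elementary}, use \cref{lemma:model_based_models_of_proppart} to pass between valuations in $\propmods\proppart{B^\sigma_c}$ and worlds in $\Y_\sigma$, and (for the converse) encode the partition cells of $W$ at cases $d \ne c$ by singleton soundness reports. The only differences are cosmetic: the paper appends a report $\tuple{\ast, c, \phi \lor \psi}$ with models $\{v, v^W_c\}$ where you leave case $c$ report-free (so $\propmods(\Gamma^\sigma_c) = \vals$), and it derives $\Y_\sigma \ne \emptyset$ from \consistency{} rather than arguing $\X_\sigma \ne \emptyset$ directly from $\ast$-consistency.
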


While the condition on $\le$ in
\cref{prop:conditioning_selectivity_characterisation} is somewhat technical, it
is implied by the very natural \emph{partition-equivalence} property from
\cref{sec:framework}. Consequently, \varbasedcond{} and \partbasedcond{} are
selective. For the score-based operator \scorebasedop{}, one can show
\boundedness{} holds directly using a property of the disagreement scoring
function $d$ similar to the property of $\le$ above. Consequently,
\scorebasedop{} is also selective.

We first state some preliminary results.

\begin{lemma}
    \label{lemma:local_refinement}
    Suppose $W$ refines $W'$ at $c$. Then for any $i \in \S$ and $\phi \in
    \lprop$,
    \[
        W, c \models S_i(\phi)
        \implies
        W', c \models S_i(\phi)
    \]
\end{lemma}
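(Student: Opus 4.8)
The plan is to unwind the semantics of the soundness operator into a statement about partition cells and then apply the refinement hypothesis as a single set inclusion. Recall that $W, c \models S_i(\phi)$ holds exactly when $v^W_c \in \Pi^W_i[\phi]$, and that by definition $\Pi^W_i[\phi]$ collects all valuations lying in the same cell of $\Pi^W_i$ as some model of $\phi$. Thus $v^W_c \in \Pi^W_i[\phi]$ is equivalent to saying that the cell $\Pi^W_i[v^W_c]$ containing the actual valuation meets $\propmods(\phi)$, i.e. there is some $v' \in \propmods(\phi)$ with $v' \in \Pi^W_i[v^W_c]$.

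First I would assume $W, c \models S_i(\phi)$ and extract such a witness $v' \in \propmods(\phi) \cap \Pi^W_i[v^W_c]$. Next I would invoke the hypothesis that $W$ refines $W'$ at $c$, which for the source $i$ in question gives precisely the inclusion $\Pi^W_i[v^W_c] \subseteq \Pi^{W'}_i[v^{W'}_c]$. Combining these, $v'$ lies in $\propmods(\phi) \cap \Pi^{W'}_i[v^{W'}_c]$, so this larger cell also meets $\propmods(\phi)$. Re-reading this as $v^{W'}_c \in \Pi^{W'}_i[\phi]$ yields $W', c \models S_i(\phi)$, as required.

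The hard part, such as it is, lies only in correctly translating soundness into the ``cell meets $\propmods(\phi)$'' form; once that is in place the implication is immediate, since soundness is a purely existential condition on the source's cell and refinement merely enlarges that cell when passing from $W$ to $W'$, so the existential witness survives. In particular no induction on the structure of $\phi$ is needed -- only $\propmods(\phi)$ enters the argument -- and the refinement property for the single source $i$ suffices, even though refinement is stated for all sources simultaneously.
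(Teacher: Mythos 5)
Your proof is correct and follows essentially the same route as the paper's: both rewrite $v^W_c \in \Pi^W_i[\phi]$ as the nonemptiness of $\propmods(\phi) \cap \Pi^W_i[v^W_c]$, apply the refinement inclusion $\Pi^W_i[v^W_c] \subseteq \Pi^{W'}_i[v^{W'}_c]$, and read the result back as $W', c \models S_i(\phi)$. Your closing remarks (no induction on $\phi$ needed, refinement for the single source $i$ suffices) are accurate observations but add nothing beyond the paper's argument.
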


\begin{proof}
    Suppose $W, c \models S_i(\phi)$. Then $v^W_c \in \Pi^W_i[\phi]$, i.e.
    $\propmods(\phi) \cap \Pi^W_i[v^W_c] \ne \emptyset$. By refinement,
    $\Pi^W_i[v^W_c] \subseteq \Pi^{W'}_i[v^{W'}_c]$. Hence $\propmods(\phi)
    \cap \Pi^{W'}_i[v^{W'}_c] \ne \emptyset$, so $v^{W'}_c \in
    \Pi^{W'}_i[\phi]$. That is, $W', c \models S_i(\phi)$.
\end{proof}

\begin{lemma}
    \label{lemma:local_refinement_sequence}
    For any $W \in \W$ and $c \in \C$, there is a $\ast$-consistent sequence
    $\sigma$ -- containing only reports for case $c$ -- such that for all $W'
    \in \W$,
    \[
        W' \in \mods(G^\sigma_\snd)
        \iff
        W \text{ refines } W' \text{ at } c.
    \]
\end{lemma}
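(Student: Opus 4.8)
The plan is to build $\sigma$ out of ``singleton'' soundness constraints that pin down, one source at a time, the partition cell of $v^W_c$ in $W$. For a valuation $u \in \vals$, let $\phi_{\{u\}}$ be a propositional formula with $\propmods(\phi_{\{u\}}) = \{u\}$. I would take $\sigma$ to be any enumeration of the finite set of reports
\[
    \{\, \tuple{i, c, \phi_{\{u\}}} \mid i \in \S,\ u \in \Pi^W_i[v^W_c] \,\},
\]
which is finite since $\S$ and $\vals$ are finite, and which by construction contains only reports for case $c$, as required. The guiding idea is that the singleton formulas convert the (existential) semantics of $S_i$ into plain membership statements about cells: for any $W'$ and any $u$,
\[
    W', c \models S_i(\phi_{\{u\}})
    \iff v^{W'}_c \in \Pi^{W'}_i[\phi_{\{u\}}]
    \iff \Pi^{W'}_i[v^{W'}_c] \cap \{u\} \ne \emptyset
    \iff u \in \Pi^{W'}_i[v^{W'}_c].
\]

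With this observation the biconditional follows in both directions. For the forward direction, suppose $W' \in \mods(G^\sigma_\snd)$. Then $W', c \models S_i(\phi_{\{u\}})$ for every report in $\sigma$, so by the displayed equivalence $u \in \Pi^{W'}_i[v^{W'}_c]$ for every $i \in \S$ and every $u \in \Pi^W_i[v^W_c]$; that is, $\Pi^W_i[v^W_c] \subseteq \Pi^{W'}_i[v^{W'}_c]$ for all $i$, which is exactly the statement that $W$ refines $W'$ at $c$. For the reverse direction I would lean on \cref{lemma:local_refinement}: by construction $W$ itself satisfies each report (indeed $W, c \models S_i(\phi_{\{u\}})$ holds precisely because $u \in \Pi^W_i[v^W_c]$), so if $W$ refines $W'$ at $c$ then \cref{lemma:local_refinement} transfers each $S_i(\phi_{\{u\}})$ from $W$ to $W'$, giving $W' \in \mods(G^\sigma_\snd)$.

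The only step requiring genuine care is $\ast$-consistency, and I expect this to be the sole (mild) obstacle rather than the combinatorics. Since $\ast$ carries the finest partition in every world, $\Pi^W_\ast[v^W_c] = \{v^W_c\}$ is a singleton, so $\sigma$ attributes to $\ast$ the single report $\tuple{\ast, c, \phi_{\{v^W_c\}}}$, whose propositional content is satisfiable; hence the $\ast$-fragment of $\sigma$ is consistent. As a sanity check one may also note that $W$ refines itself at $c$ by reflexivity of $\subseteq$, so the biconditional just established yields $W \in \mods(G^\sigma_\snd)$, confirming that $G^\sigma_\snd$ (and in particular its $\ast$-part) is consistent. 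The remainder is a routine unfolding of definitions, so no substantive difficulty remains.
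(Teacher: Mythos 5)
Your proposal is correct and follows essentially the same route as the paper's proof: the same sequence of singleton-formula reports $\tuple{i,c,\phi_{\{u\}}}$ for $u \in \Pi^W_i[v^W_c]$, the same unfolding of the $S_i$ semantics into cell-membership for the forward direction, the same appeal to \cref{lemma:local_refinement} for the converse, and the same unit-partition argument for $\ast$-consistency. No gaps.
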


\begin{proof}

    For a valuation $v \in \vals$, let $\phi(v)$ be a propositional formula
    such that $\propmods(\phi(v)) = \{v\}$. Take $\sigma$ to be any enumeration
    of reports of the form
    \[
        \tuple{i, c, \phi(v)}
    \]
    where $i \in \S$ and $v \in \Pi^W_i[v^W_c]$. Note that such a sequence
    exists since there are only finitely many sources and valuations. Clearly
    $\sigma$ contains only $c$-reports. Since $\Pi^W_\ast$ is the unit
    partition, the only report from $\ast$ is $\tuple{\ast, c, \phi(v^W_c)}$.
    Hence $\sigma$ is $\ast$-consistent. We show the desired equivalence.

    $\implies$: Suppose $W' \in \mods(G^\sigma_\snd)$. Take any $i \in \S$. We
    need to show $\Pi^W_i[v^W_c] \subseteq \Pi^{W'}_i[v^{W'}_c]$. Take $v \in
    \Pi^W_i[v^W_c]$. By construction of $\sigma$, $\tuple{i, c, \phi(v)} \in
    \sigma$. Hence $W', c \models S_i(\phi(v))$, i.e. $v^{W'}_c \in
    \Pi^{W'}_i[\phi(v)] = \Pi^{W'}_i[v]$. This shows $v \in
    \Pi^{W'}_i[v^{W'}_c]$ as required.

    $\impliedby$: Suppose $W$ refines $W'$ at $c$. Take any $\tuple{i, c,
    \phi(v)} \in \sigma$. Then $v \in \Pi^W_i[v^W_c]$, so $v^W_c \in \Pi^W_i[v]
    = \Pi^W_i[\phi(v)]$. This shows $W, c \models S_i(\phi(v))$, and
    \cref{lemma:local_refinement} gives $W', c \models S_i(\phi(v))$. Hence $W'
    \in \mods(G^\sigma_\snd)$.

\end{proof}

\begin{proof}[Proof of \cref{prop:conditioning_selectivity_characterisation}]
    Take an elementary conditioning operator with the basic postulates
    and $K^\emptyset = \cn(\emptyset)$.

    ``if":  Suppose the stated property holds. Since all conditioning operators
    are model-based, by \cref{thm:selectivity_characterisation} it suffices to
    show \boundedness{}. To that end, let $\sigma$ be $\ast$-consistent and
    take $c \in \C$. We need $\proppart{B^\sigma_c} \subseteq
    \cnprop(\Gamma^\sigma_c)$; or equivalently, by \closure{},
    $\propmods\proppart{B^\sigma_c} \supseteq \propmods(\Gamma^\sigma_c)$.

    Take any $v \in \propmods(\Gamma^\sigma_c)$. Since $\sigma$ is
    $\ast$-consistent, $B^\sigma$ is consistent by \consistency{}. Hence
    $\Y_\sigma \ne \emptyset$. Take any $W \in \Y_\sigma$. By the property in
    the statement of the result, there is $W' \in \W_{c\ :\ v}$ such that $W'
    \le W$ and $W$ refines $W'$ at all cases $d \ne c$.

    We claim $W' \in \X_\sigma$. By \cref{prop:prior_knowledge}, elementariness
    and \cref{lemma:model_based_elementary}, we have $\X_\sigma =
    \mods(K^\sigma) = \mods(G^\sigma_\snd)$. Take any $\tuple{i, d, \phi} \in
    \sigma$. We consider cases.
    \begin{itemize}
        \item \textbf{Case 1} ($d = c$). Here $\tuple{i, \phi} \in \sigma \rs
            c$, so $\phi \in \Gamma^\sigma_c$. Hence $v \in
            \propmods(\Gamma^\sigma_c) \subseteq \propmods(\phi)$. Since $W'
            \in \W_{c\ :\ v}$, $v$ is the $c$-valuation of $W'$. Hence $W', c
            \models \phi$, and $W', c \models S_i(\phi)$ follows.

        \item \textbf{Case 2} ($d \ne c$). By assumption, $W$ refines $W'$ at
              $d$. Since $W \in \Y_\sigma \subseteq \X_\sigma$, we have $W, d
              \models S_i(\phi)$. By \cref{lemma:local_refinement}, $W', d
              \models S_i(\phi)$ also.
    \end{itemize}
    We have shown $W' \in \mods(G^\sigma_\snd) = \X_\sigma$. Now recall that $W
    \in \Y_\sigma$ -- so $W$ is $\le$-minimal in $\X_\sigma$ -- and $W' \le W$.
    Thus $W'$ is also $\le$-minimal in $\X_\sigma$, i.e. $W' \in \Y_\sigma$.
    Since $W' \in \W_{c\ :\ v}$ also, we have by
    \cref{lemma:model_based_models_of_proppart} that $v \in
    \propmods\proppart{B^\sigma_c}$, as required.

    ``only if": Suppose our operator is selective, i.e. satisfies
    \boundedness{}. To show the desired property holds, take any $W$, $c$ and
    $v$. Enumerate $\C \setminus \{c\}$ as $\{d_1, \ldots, d_N\}$. By
    \cref{lemma:local_refinement_sequence}, for each $1 \le n \le N$ there is a
    $\ast$-consistent sequence $\sigma_n$ such that
    \[
        \mods(G^{\sigma_n}_\snd)
        =
        \{W' \in \W \mid W \text{ refines } W' \text{ at } d_n\}.
    \]
    Now, let $\phi$ and $\psi$ be formulas with $\propmods(\phi) = \{v\}$ and
    $\propmods(\psi) = \{v^W_c\}$. Let $\rho$ be the concatenation
    \[
        \rho
        =
        \sigma_1 \cdots \sigma_n \concat
        \tuple{\ast, c, \phi \lor \psi}.
    \]
    Note that $\rho$ is $\ast$-consistent, since each $\sigma_n$ is (and only
    refers to case $d_n$). We may therefore apply \boundedness{} for case $c$.
    Taking models of both sides yields
    \[
        \propmods\proppart{B^\rho_c}
        \supseteq \propmods(\Gamma^\rho_c)
        = \propmods(\phi \lor \psi)
        = \{v, v^W_c\}.
    \]
    In particular, $v \in \propmods\proppart{B^\rho_c}$. By
    \cref{lemma:model_based_models_of_proppart}, there is some $W' \in \Y_\rho
    \cap \W_{c\ :\ v}$.

    We show $W'$ has the required properties. First note that since $W$ refines
    itself at each $d_n$, we have $W \in \mods(G^{\sigma_n}_\snd)$. Clearly $W,
    c \models \psi$, so $W, c \models S_\ast(\phi \lor \psi)$ too. Thus $W \in
    \mods(G^\rho_\snd) = \X_\rho$ (using $K^\emptyset = \cn(\emptyset)$). Since
    $W' \in \Y_\rho = \min_{\le}{\X_\rho}$, we get $W' \le W$ as required.

    Next, take any case $d \ne c$. Then there is some $n$ such that $d = d_n$.
    Since $W' \in \Y_\rho \subseteq \X_\rho = \mods(G^\rho_\snd) \subseteq
    \mods(G^{\sigma_n}_\snd)$, we get that $W$ refines $W'$ at $d$. This
    completes the proof.

\end{proof}

\subsection{Proof of \cref{thm:case_independent_selectivity_characterisation}}

\begin{proof}

    ``only if": Suppose a model-based operator is selective
    according to some case-independent scheme $f$. Take any $\ast$-consistent
    $\sigma$, $H \subseteq \C$ and $c \in \C$. For any case $d$, write $M_d =
    \propmods\proppart{B^\sigma_d}$. Note that with $c_0$ an
    arbitrary fixed case, and writing $F_\sigma(i, \phi) =
    \propmods(f_\sigma(i, c_0, \phi))$, we have by case-independent-selectivity
    that
    \[
        M_d = \bigcap_{\tuple{i, \phi} \in \sigma \rs d}{
            F_\sigma(i, \phi)
        }.
    \]
    By closure, it is sufficient for \hboundedness{} to show that
    \begin{equation}
        \label{eqn:hbnd_inclusion}
        M_c
        \supseteq
        \propmods(\Gamma^{\sigma, H}_c) \cap \bigcap_{d \in H}{M_d}.
    \end{equation}
    Take any $v$ in the set on the right-hand side. To show $v \in M_c$, take
    any $\tuple{i, \phi} \in \sigma \rs c$. If $\phi \in \Gamma^{\sigma, H}_c$,
    then clearly
    \begin{align*}
        v
        &\in \propmods(\Gamma^{\sigma, H}_c) \\
        &\subseteq \propmods(\phi) \\
        &\subseteq \propmods(f_\sigma(i, c, \phi)) \\
        &= F_\sigma(i, \phi)
    \end{align*}
    (where we use $f_\sigma(i, c, \phi) \in \cnprop(\phi)$). Otherwise, $\phi
    \notin \Gamma^{\sigma, H}_c$. Since $\tuple{i, \phi} \in \sigma \rs c$,
    this means there is $d \in H$ such that $\tuple{i, \phi} \in \sigma \rs d$.
    Hence $v \in M_d$ gives $v \in F_\sigma(i, \phi)$. This shows the inclusion
    in \labelcref{eqn:hbnd_inclusion}, and we are done.

    ``if": Suppose a model-based operator satisfies \hboundedness{}. Let
    $\sigma$ be a $\ast$-consistent sequence. As before, write $M_c$ for
    $\propmods\proppart{B^\sigma_c}$. For $i \in \S$ and $c \in \C$,
    write
    \[
        \C(i, \phi) = \{c \in \C \mid \tuple{i, \phi} \in \sigma \rs c\},
    \]
    and set
    \[
        F_\sigma(i, \phi)
        = \propmods(\phi) \cup \bigcup_{c \in \C(i, \phi)}{M_c}.
    \]
    Define $f$ by letting $f_\sigma(i, c, \phi)$ be any propositional formula
    with $\propmods(f_\sigma(i, c, \phi)) = F_\sigma(i, \phi)$. Then $f$ is a
    case-independent selection scheme. We show our operator is selective
    according to $f$; by closure of $\proppart{B^\sigma_c}$ for each $c$, it
    suffices to show
    \[
        M_c = \bigcap_{\tuple{i, \phi} \in \sigma \rs c}{F_\sigma(i, \phi)}.
    \]
    Fix $c$. For the left-to-right inclusion, suppose $v \in M_c$. Take any
    $\tuple{i, \phi} \in \sigma \rs c$. Then $c \in \C(i, \phi)$, so
    $F_\sigma(i, \phi) \supseteq M_c$ and thus $v \in F_\sigma(i, \phi)$ as
    required.

    For the right-to-left inclusion, suppose $v$ lies in the intersection. Set
    \[
        H = \{d \in \C \mid v \in M_d\}.
    \]
    Apply \hboundedness{} and taking the models of both sides, we obtain
    \begin{equation}
        \label{eqn:hbnd_application}
        M_c
        \supseteq
        \propmods(\Gamma^{\sigma, H}_c) \cap \bigcap_{d \in H}{M_d}.
    \end{equation}
    Clearly $v \in \bigcap_{d \in H}{M_d}$ by definition of $H$. Let $\phi \in
    \Gamma^{\sigma, H}_c$. Then there is $i \in \S$ such that
    $\tuple{i, \phi} \in \sigma \rs c$, and consequently $v \in F_\sigma(i,
    \phi)$. We claim $v \in \propmods(\phi)$. If not, by definition of
    $F_\sigma(i, \phi)$ we must have $v \in \bigcup_{d \in \C(i, \phi)}{M_d}$,
    i.e. there is $d \in \C$ such that $\tuple{i, \phi} \in \sigma \rs d$ and
    $v \in M_d$. On the one hand, $\phi \in \Gamma^{\sigma, H}_c$ implies $d
    \notin H$. On the other, $v \in M_d$ gives $d \in H$ directly by the
    definition of $H$: contradiction. This shows $v \in \propmods(\phi)$. Since
    $\phi$ was arbitrary, we have $v \in \propmods(\Gamma^{\sigma, H}_c)$. By
    \labelcref{eqn:hbnd_application} we get $v \in M_c$, and the proof is
    complete.

\end{proof}




\bibliographystyle{kr}
\bibliography{references}

\end{document}